\newtheorem{theorem}{Theorem}[section]
\newtheorem{corollary}{Corollary}[theorem]
\newtheorem{lemma}[theorem]{Lemma}
\newtheorem{proposition}[theorem]{Proposition}
\newtheorem{remark}{Remark}[section]
\def\E{\mathbb{E}}
\def\P{\mathbb{P}}
\def\R{\mathbb{R}}
\newcommand{\mb}[1]{{\mathbf{{#1}}}}
\icmltitlerunning{Estimating the Number and Effect Sizes of Non-null Hypotheses}
\begin{document}

\twocolumn[
\icmltitle{Estimating the Number and Effect Sizes of Non-null Hypotheses}



\icmlsetsymbol{equal}{*}

\begin{icmlauthorlist}
\icmlauthor{Jennifer Brennan}{uw}
\icmlauthor{Ramya Korlakai Vinayak}{uw}
\icmlauthor{Kevin Jamieson}{uw}
\end{icmlauthorlist}

\icmlaffiliation{uw}{Paul G. Allen School of Computer Science and Engineering, University of Washington, Seattle, WA}

\icmlcorrespondingauthor{Jennifer Brennan}{jrb@cs.washington.edu}

\icmlkeywords{multiple hypothesis testing}

\vskip 0.3in
]



\printAffiliationsAndNotice{}  

\begin{abstract}
We study the problem of estimating the distribution of effect sizes (the mean of the test statistic under the alternate hypothesis) in a multiple testing setting. Knowing this distribution allows us to calculate the power (type II error) of any experimental design. We show that it is possible to estimate this distribution using an inexpensive pilot experiment, which takes significantly fewer samples than would be required by an experiment that identified the discoveries. Our estimator can be used to guarantee the number of discoveries that will be made using a given experimental design in a future experiment. We prove that this simple and computationally efficient estimator enjoys a number of favorable theoretical properties, and demonstrate its effectiveness on data from a gene knockout experiment on influenza inhibition in \textit{Drosophila}.
\end{abstract}
\section{Introduction}\label{sec:intro}
Designing scientific experiments is something of a chicken and egg problem. 
In order to design an experiment with a specified power (type II error), we need to know the effect size (the mean of the test statistic under the alternate hypothesis). 
The effect size determines the required accuracy of each measurement, which increases with the number of \textit{experimental replicates} (samples).
Unfortunately, this effect size is typically unknown, and estimating the effect size for a single hypothesis test is as sample intensive as performing the original experiment. 
In the case of single hypothesis testing, this presents a fundamental barrier to efficient experimental design. 
By contrast, in the setting of multiple hypothesis testing, we show that it is possible to estimate the distribution of effect sizes present in the data using an inexpensive pilot experiment, which takes significantly fewer samples than would be required for the full experiment.

\begin{figure*}
    \centering 
    \includegraphics[trim={0.5cm 5cm 2.2cm 2cm},clip,width=\textwidth]{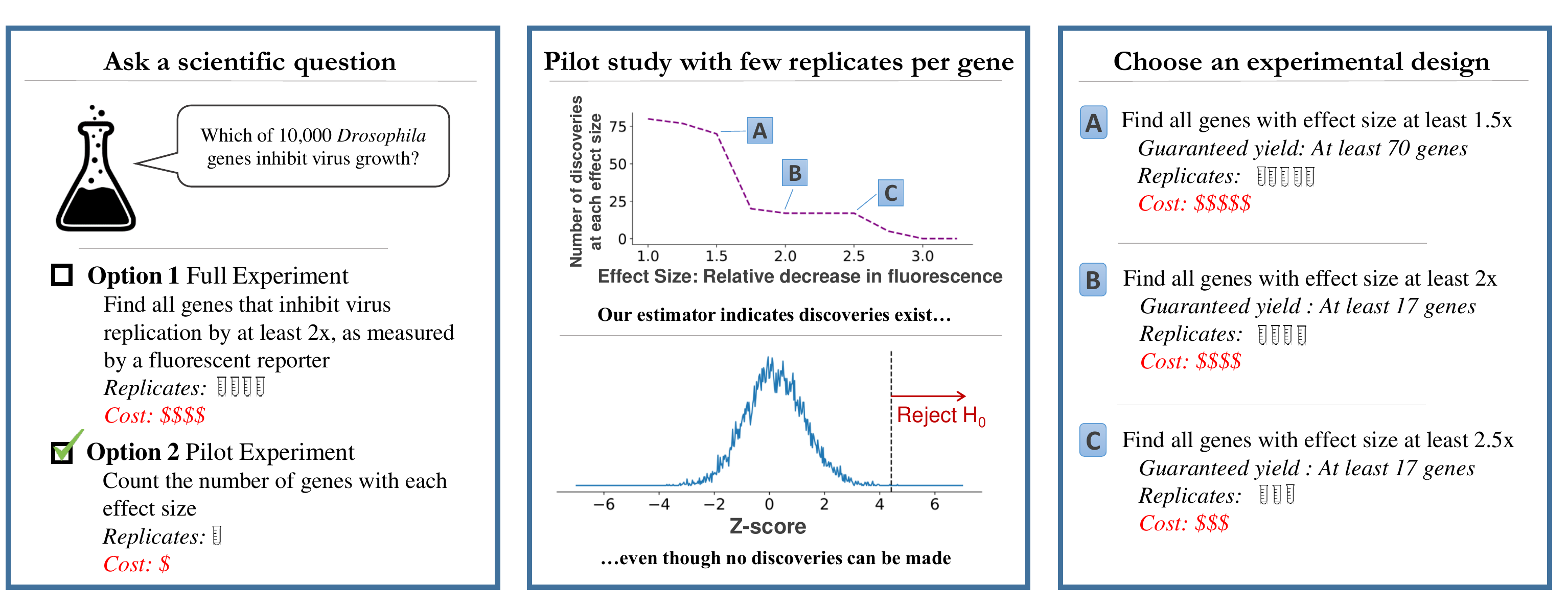}
    \caption{{
    When applied to the results of a pilot experiment, our estimator can estimate the cost and number of discoveries guaranteed by different experimental designs. In this example, the original experiment design (Option 1) is expensive, with no guarantee on the number of discoveries that will be made. Our method suggests two alternatives to the original experimental design (B); the same guarantee on discoveries could be made at lower cost (C), or additional discoveries could be made at higher cost (A).}
    }
    \label{fig:vizAbstract}
\end{figure*}
For example, suppose a scientist would like to test 10,000 genes using an experimental measurement that is distributed $\mathcal{N}(\mu_i, \tfrac{1}{t})$ when the effect size is $\mu_i$ and $t$ replicates are performed. Without knowledge of the likely effect sizes, it is unclear how to choose an experimental design. An experiment with too many replicates per hypothesis is wasteful; one with too few will lack the statistical power to identify alternate hypotheses. 
In this paper, we seek to facilitate experimental design in the multiple testing setting by answering the question \emph{``How many hypotheses have an effect size of at least $\gamma$?''} using significantly fewer samples than would be needed to identify all discoveries with that effect size. 
These estimates suggest a trade-off between the cost of an experiment (as measured by the number of experimental replicates required to achieve a certain power) and the number and effect sizes of the discoveries that will be made. 
Figure \ref{fig:vizAbstract} illustrates the application of our estimator to an inexpensive pilot study, allowing a scientist to evaluate possible experimental designs.
The application to experimental design motivates an important property of our estimator: it must produce a \textit{conservative} estimate of the number of hypotheses above a given effect size. If the scientist designs a costly experiment based on the results of this estimator, it is important to ensure that this experiment will generate at a minimum the estimated number of discoveries.

As a baseline, one approach to this estimation problem is to use a \textit{plug-in estimator}, which estimates the entire distribution of effect sizes and then ``plugs in'' this estimate as if it were the true distribution. The plug-in estimator could start with the maximum likelihood estimate (MLE) of the distribution of effect sizes given the observed test statistics. The estimate for the fraction of hypotheses above some effect size $\gamma$ would simply be the fraction of this distribution that exceeded $\gamma$. 
Unfortunately, such a plug-in estimator based on the MLE may vastly overestimate this fraction, as two distributions can have similar likelihoods but very different amounts of mass above some threshold.

In this work, we design an estimator for the fraction of hypotheses with effect sizes above a given threshold, for all thresholds simultaneously. Our estimator operates in the spirit of the Kolmogorov-Smirnov test, first creating an $\ell_\infty$ ball around the empirical CDF to define plausible distributions, and then finding the element of the ball with the smallest amount of probability mass above $\gamma$. 
With high probability, this amount of mass does not exceed the true fraction of hypotheses with mean at least $\gamma$.
We prove that this simple and computationally efficient estimator enjoys a number of favorable theoretical properties, including finite-sample upper and lower bounds on the value of the estimate. 

\subsection{Problem Statement}
Let $\nu_*$ be a distribution on $\mathbb{R}$,
and for $i=1,2,\dots n$ let 
\begin{align*}
    \mu_i \sim \nu_*
\end{align*}
be an unobserved latent variable drawn iid from $\nu_*$. For each $\mu_i$ drawn from $\nu_*$, we observe the test statistic
\begin{align*}
    X_i \sim f_{\mu_i},
\end{align*}
where $f_\mu$ is a known distribution parameterized by the effect size $\mu$. 
For example, suppose the test statistics were Z-scores, which are distributed according to the standard normal distribution under the null hypothesis and shifted by the effect size under the alternate. Then, $f_\mu = \mathcal{N}(\mu, 1)$.
While our estimator is well defined for any parametric $f$ (e.g., any single-parameter exponential family), we focus on Gaussian test statistics for exposition. In the setting of Figure \ref{fig:vizAbstract}, $\nu_*$ represents the distribution of effect sizes and $X_i$ are the observations.

Our goal is to estimate the probability that the effect size of an observation is greater than $\gamma$,
\begin{align}
    \zeta_{\nu_*}(\gamma) := \mathbb{P}_{\nu_*}(\mu > \gamma), \label{eqn:dfnZeta}
\end{align}
simultaneously for all $\gamma\in\mathbb{R}$. 

The problem of counting the non-null hypotheses is most interesting when $\gamma$ is small. For example, consider the case when the test statistics $X_i$ are Z-scores. 
Under the hypothesis that all effect sizes are zero, the expected maximum Z-score is $\mathbb{E}[\max_i X_i] \approx \sqrt{\log n}$. Therefore, if we want to avoid any false discoveries, we cannot reject any hypotheses with test statistic less than $\Theta(\sqrt{\log n})$. If the effect sizes are at least this large, then we will be able to identify the alternate hypotheses through a standard Bonferroni correction \cite{dunn1961multiple}. In this regime, counting is no more difficult than identification. However, if the effect sizes are much smaller than this threshold (say, if all $\mu_i \ll 1$), identification could be impossible. Our estimator, by contrast, detects the existence of discoveries even in this low signal-to-noise regime.
\subsection{Contributions}
Our contributions are as follows:
\begin{itemize}[leftmargin=*, topsep=-2pt, itemsep=0pt]
\item Given a parameterization $f_\mu$, we propose an estimator that provides a conservative estimate of the fraction of effect sizes above a given threshold, simultaneously for all thresholds (Section \ref{sec:estimator}).
\item We provide finite-sample bounds on the error of our estimator (Theorem \ref{lem:est-finiteSample}).
\item In the low signal-to-noise regime and the setting of Gaussian mixtures, we compare our estimator's sample complexity to a known lower bound for hypothesis testing (detecting the presence of the alternate hypothesis), and we give a novel lower bound for the sample complexity of estimation (estimating the fraction of means from the alternate hypothesis). We show that our method matches finite-sample rates for these problems, even though it is designed for more general distributions than the ones in these lower bounds (Section \ref{sec:estimator}).

\item We describe how to use this estimator to design pilot studies for scientific experimentation (Section \ref{sec:preScreen}).
When testing $n$ hypotheses in the low signal-to-noise regime,
our technique detects treatments with positive effect sizes using a factor of $n$ fewer replicates than it would take to identify them. Additionally, the results of the pilot experiment can be used to upper bound the cost of identifying the discoveries at each effect size.
\end{itemize}

\subsection{Related Work}\label{sec:relatedworks}
The problem of estimating the number of null hypotheses has been studied extensively in the statistics literature. Our goal in this work is to provide a conservative estimate of the number of hypotheses with effect size above some threshold (Eqn \eqref{eqn:dfnZeta}). There are several lines of work related to this goal.

\textbf{Simple Null Hypotheses}
A different but related problem is to estimate the number of non-null hypotheses, regardless of their effect sizes, i.e., $\P_{\nu_*}(\mu \neq 0)$. In this setting - also known as the simple null hypothesis - it is possible to compute $p$-values that are uniformly distributed under the null.
For example, when observations are drawn $X_i \sim \mathcal{N}(\mu_i, 1)$, the $p$-value is $p_i=1-\Phi(X_i)$, where $\Phi$ is the standard normal CDF.

The graphical estimator of Schweder \& Spj{\o}tvoll \yrcite{schweder1982plots} was the first technique to estimate the number of nulls, using the principle that $p$-values are distributed uniformly under the null hypothesis and skewed toward zero under the alternate. Their technique estimates the density of the $p$-value distribution at 1. 
This same idea was improved in the context of estimating the number of nulls for adaptive control of the false discovery rate (FDR) \cite{benjamini2000adaptive, storey2002direct}.
These later works provide finite-sample guarantees on overestimating the number of nulls in order to make non-asymptotic guarantees on FDR control. However, none of these results provide lower bounds on the estimated number of non-nulls. Motivated by adaptive FDR control, techniques for counting the number of non-null hypotheses have been extended to incorporate prior knowledge about the dependence structure of the hypotheses or the likelihood that each test will result in a discovery. See Li \& Barber \yrcite{li2019multiple} for a review of this area.

Bounds on the False Discovery Proportion - the high-probability analogue of FDR - can also be employed to report a guarantee on the number of significant effects. The simultaneous FDP estimator of Katsevich and Ramdas \yrcite{katsevich2018towards} provides such bounds simultaneously for all sets in a path. A guarantee on the FDP of a set corresponds to a lower bound on the number of discoveries; maximizing over the guarantees provided by each set in the path gives an improved lower bound. With an assumption on the form of the test statistic under the alternate hypothesis, this algorithm can be modified to bound the number of discoveries above an arbitrary threshold. We compare to this baseline method in our experimental results.

Another technique for the simple null setting, again motivated by the uniform distribution of $p$-values under the null, is to test the extent to which the distribution of $p$-values deviates from the uniform distribution. 
Several estimators have taken this approach \cite{genovese2004stochastic, meinshausen2006estimating, patra2016estimation, jin2008proportion}. Most similar to our work are the techniques that build one-sided confidence intervals around the empirical CDF of $p$-values \cite{genovese2004stochastic, meinshausen2006estimating}, which provide finite-sample error bounds and a conservative estimator. Finally, there are estimators specific to the Gaussian setting, which estimate the zero-mean component in a mixture of Gaussians \cite{cai2007estimation, carpentier2019adaptive}.

Extensions to one-sided null hypotheses ($H_0:~\mu \leq 0$) further assume that $p$-values are subuniformly distributed when $\mu < 0$ \cite{meinshausen2005lower, li2019multiple} or assume a gap between $0$ and the smallest alternate effect size \cite{lee2019uncertainty}.
These works estimate the quantity $\P_{\nu_*}(\mu > 0).$
This problem is a special case of ours, because subuniformity holds only for the threshold of $\gamma=0$. 

\textbf{Composite Null Hypotheses}
We seek to estimate the number of hypotheses with an effect size above some threshold.  Here, $p$-values are neither subuniform nor necessarily well defined, so much of the previous work is not applicable.
The Fourier transform technique \cite{jin2008proportion} can be extended to address composite null hypotheses \cite{chen2019a}. However, this extension only provides asymptotic results, which are insufficient since we seek a conservative estimator.

\textbf{Adapting the Generalized Likelihood Ratio Test} Jiang \& Zhang \yrcite{glrt} develop asymptotic power statements for the generalized likelihood ratio test for Gaussian observations. We discuss in Section \ref{sec:future} how this work could be used to create an estimator for our problem, and highlight the limitations that make this approach impractical.

\textbf{Plug-in Estimation} As discussed in Section \ref{sec:intro}, another approach to this problem is plug-in estimation, where an estimate $\widetilde{\nu}$ of the distribution $\nu_*$ is used to form an estimator $\widehat{\zeta}_n(\gamma) = \P_{\widetilde{\nu}}(\mu > \gamma)$. When $f_\mu$ is Gaussian, the task is to learn a mixture of Gaussians. In this setting, much effort has been devoted to recovering the mixture parameters \cite{pearson1894contributions, belkin2010polynomial, kalai2010efficiently, hardt2015tight} or learning a mixture that is close to the original distribution in some metric, such as total variation (TV) distance \cite{moitra2010settling, daskalakis2014faster}. Outside of the Gaussian setting, recent works have provided guarantees for learning mixtures of binomial distributions in terms of the Wasserstein-1 distance \cite{tian2017learning, vinayak2019maximum}. These types of theoretical guarantees do not lend themselves easily to guarantees on our problem, since two distributions can be close in TV or Wasserstein distance but have very different amounts of mass above some threshold $\gamma$.

\textbf{Empirical Bayes Methods} Our estimator takes advantage of multiple hypothesis testing by using the empirical distribution of the $X_i$ to learn something about the latent distribution $\nu_*$. The same idea can be seen in empirical Bayes methods, where the empirical distribution of $X_i$ is used as the prior over $X$. Several papers have taken an empirical Bayes approach to multiple testing, but none address our exact problem. Efron \yrcite{efron2007size} uses an empirical Bayes method to estimate the distribution of $X$ under the alternate hypothesis, which is distinct from our goal of estimating $\nu_*$ (note we cannot simply deconvolve Efron's estimate to get $\nu_*$, as it is not guaranteed to have any parametric form).
Stephens \yrcite{stephens2017false} uses empirical Bayes methods and a strong unimodality assumption on $\nu_*$ to produce estimates and confidence intervals for each $\mu_i$. While these confidence intervals could theoretically be used to estimate \eqref{eqn:dfnZeta}, the fact that Stephens' method produces a confidence interval for individual $\mu_i$ suggests that they will be too loose to compete with our method.
Indeed, we see this looseness in the experimental results, where our estimator outperforms Stephens' in our regime of interest.
Furthermore, this method only works for Gaussian and t-distributed observations.

\section{Estimating Effect Sizes}\label{sec:estimator}
Recall our goal, to estimate $\zeta_{\nu_*}(\gamma)$ from Eqn \eqref{eqn:dfnZeta}.
Let  $\widehat{F}_n(t) = \frac{1}{n} \sum_{i=1}^n \mb{1}\{ X_i \leq t \}$ be the empirical CDF of the test statistics $X_i$ and
\begin{align*}
    F_\nu(t) &= \P_{\mu \sim \nu,~X\sim f_\mu}( X \leq t)
\end{align*}
be the true CDF of test statistics under latent distribution $\nu$. For any $\gamma \in \R$, our estimator is given by
\begin{align}
    \widehat{\zeta}_n(\gamma) &= \min_{\nu : ||\widehat{F}_n - F_\nu ||_\infty \leq \tau_{\alpha, n}} \int_\gamma^\infty \nu(x)dx \label{eqn:estimator}
\end{align}
where the estimator is conservative with probability at least $1-\alpha$, and
\begin{align*}
    \tau_{\alpha, n} = \sqrt{\frac{\log(2/\alpha)}{2n}}.
\end{align*}
The intuition for this estimator is as follows.
To conservatively estimate the amount of mass $\zeta$ above threshold $\gamma$, we look for the distribution with the smallest amount of mass above $\gamma$ that could have plausibly generated the observations $X_i$. Our measurement of plausibility is based on high probability bounds on the deviation between the empirical CDF and its expectation. If $\widehat{F}_n$ was in fact drawn from $F_\nu$, then with high probability the $\ell_\infty$ distance between $\widehat{F}_n$ and $F_\nu$ will not exceed $\tau_{\alpha, n}$. By restricting our search space to the $\ell_\infty$ ball around $\widehat{F}_n$ (seen in the constrained optimization from Eqn \eqref{eqn:estimator}), we do not overestimate the true amount of mass above $\gamma$, with high probability.
Moreover, using different values of $\gamma$ traces a curve for $\zeta_{\nu_*}(\gamma)$ (see the middle panel of Figure \ref{fig:vizAbstract}). We note that this estimator can be implemented as an efficient convex program. We simply discretize $x$ over some range, and the estimator becomes a convex program in the vector $\mathbf{x}$. It can then be solved using off-the-shelf software (see Appendix \ref{app:implementation} for details). 

\subsection{Main Results}
Our estimator underestimates the true mass $\zeta_{\nu_*}(\gamma)$ for all $\gamma$ simultaneously with high probability. 
Furthermore, we provide a finite sample bound on how much we underestimate $\zeta_{\nu_*}(\gamma)$ at every $\gamma$. 
\begin{theorem}\label{lem:est-finiteSample}
For $i=1,\dots,n$, let $\mu_i\sim\nu_*$ and $X_i \sim f_{\mu_i}$ where each draw is iid. Let our simultaneous estimator be given by (\ref{eqn:estimator}). Then, the probability of overestimating the fraction of hypotheses with effect size above any threshold $\gamma$ is bounded by $\alpha$:
\begin{align*}
    \P\left(\exists \gamma : \widehat{\zeta}_n(\gamma) > \zeta_{\nu_*}(\gamma)\right) \leq \alpha.
\end{align*}

Furthermore, with probability at least $1-\delta$, for all $\gamma\in\R$ and $\varepsilon\in(0, \zeta_{\nu_*}(\gamma)]$ we have $\zeta_{\nu_*}(\gamma) - \hat{\zeta}_n(\gamma) \leq \varepsilon$
whenever
\begin{align}
    n \geq \frac{\log\left(\frac{4}{\alpha\delta}\right)}{\left(\min_{\nu : \P_\nu((\gamma, \infty)) \leq \zeta_{\nu_*}(\gamma) - \varepsilon}||F_\nu - F_{\nu_*}||_\infty\right)^2}.\label{eqn:generalSampleComplexity}
\end{align} 
\end{theorem}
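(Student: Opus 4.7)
My overall plan is to derive both halves of Theorem~\ref{lem:est-finiteSample} from the Dvoretzky--Kiefer--Wolfowitz (DKW) inequality
$$\P(\|\widehat{F}_n - F_{\nu_*}\|_\infty \geq t) \leq 2 e^{-2nt^2},$$
applied twice at different radii. For the conservatism claim, I would set $E = \{\|\widehat{F}_n - F_{\nu_*}\|_\infty \leq \tau_{\alpha,n}\}$, which by the definition of $\tau_{\alpha,n}$ has probability at least $1-\alpha$ under DKW. On $E$, the true distribution $\nu_*$ is feasible in the optimization (\ref{eqn:estimator}), so its objective value $\int_\gamma^\infty \nu_*(x)\,dx = \zeta_{\nu_*}(\gamma)$ upper bounds the minimum $\widehat{\zeta}_n(\gamma)$. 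Since the feasibility ball around $\widehat{F}_n$ does not depend on $\gamma$, this inequality holds simultaneously over all $\gamma\in\R$, which gives the first claim.

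For the lower-tail bound, I would fix $\gamma$ and $\varepsilon \in (0,\zeta_{\nu_*}(\gamma)]$ and abbreviate
$$d^\star = \min_{\nu \,:\, \P_\nu((\gamma,\infty)) \leq \zeta_{\nu_*}(\gamma)-\varepsilon}\|F_\nu - F_{\nu_*}\|_\infty,$$
which is the quantity in the denominator of (\ref{eqn:generalSampleComplexity}). On the ``bad'' event $\{\widehat{\zeta}_n(\gamma) < \zeta_{\nu_*}(\gamma)-\varepsilon\}$, there exists (passing to an infimizing sequence if the minimum in (\ref{eqn:estimator}) is not attained) a distribution $\nu_{\mathrm{bad}}$ that is feasible for (\ref{eqn:estimator}) and satisfies $\P_{\nu_{\mathrm{bad}}}((\gamma,\infty)) \leq \zeta_{\nu_*}(\gamma)-\varepsilon$. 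By definition of $d^\star$ we get $\|F_{\nu_{\mathrm{bad}}} - F_{\nu_*}\|_\infty \geq d^\star$, while feasibility gives $\|\widehat{F}_n - F_{\nu_{\mathrm{bad}}}\|_\infty \leq \tau_{\alpha,n}$, so the triangle inequality forces $\|\widehat{F}_n - F_{\nu_*}\|_\infty \geq d^\star - \tau_{\alpha,n}$.

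A second DKW application now bounds the probability of this deviation by $2e^{-2n(d^\star - \tau_{\alpha,n})^2}$, which is at most $\delta$ whenever $d^\star \geq \tau_{\alpha,n} + \sqrt{\log(2/\delta)/(2n)}$. The clean form (\ref{eqn:generalSampleComplexity}) then follows from the elementary inequality $\sqrt{a}+\sqrt{b}\leq\sqrt{2(a+b)}$, which collapses $\sqrt{\log(2/\alpha)/(2n)} + \sqrt{\log(2/\delta)/(2n)}$ into $\sqrt{\log(4/(\alpha\delta))/n}$; hence $(d^\star)^2 \geq \log(4/(\alpha\delta))/n$ suffices. Because the DKW event I invoke here is again a single uniform-over-$\gamma$ bound on $\|\widehat{F}_n - F_{\nu_*}\|_\infty$, the $1-\delta$ guarantee is automatically simultaneous over all $\gamma$ (with $\varepsilon$ a deterministic parameter) satisfying the sample-complexity condition.

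I expect the only real obstacle to be the algebraic bookkeeping of merging the two DKW radii into the single $\log(4/(\alpha\delta))$ factor: a naive combination would produce a looser constant, which is what forces the $\sqrt{a}+\sqrt{b}$ step. A secondary, more technical point is the possible non-attainment of the minimum in (\ref{eqn:estimator}); I would sidestep this by working with an infimizing sequence and noting that the \emph{strict} inequality $\widehat{\zeta}_n(\gamma) < \zeta_{\nu_*}(\gamma)-\varepsilon$ leaves room for some tail of the sequence to lie in the constraint set defining $d^\star$.
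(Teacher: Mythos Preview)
Your proposal is correct and follows essentially the same route as the paper: two applications of DKW (one at radius $\tau_{\alpha,n}$ for conservatism, one at radius $\tau_{\delta,n}$ for power), combined via the triangle inequality and the same $\sqrt{a}+\sqrt{b}\le\sqrt{2(a+b)}$ step to merge the radii into $\sqrt{\log(4/(\alpha\delta))/n}$. The only cosmetic difference is that the paper rewrites $\widehat{\zeta}_n(\gamma)$ as a $\max$ over $\zeta$ and argues forward on the good event, whereas you extract a witness $\nu_{\mathrm{bad}}$ on the bad event; these are logically equivalent, and your added care about non-attainment of the minimum is a point the paper simply elides.
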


\begin{remark}[Pointwise consistency] Our estimator is pointwise consistent. For any threshold $\gamma$ and any $\varepsilon > 0$, there is some $n$ large enough that the error in our estimate satisfies $\zeta_{\nu_*}(\gamma) - \hat{\zeta}(\gamma) < \varepsilon$. This follows from the fact that, for any $\varepsilon > 0$, the denominator of \ref{eqn:generalSampleComplexity} is strictly positive.
\end{remark}

Our estimator is guaranteed not to overestimate $\zeta_{\nu_*}(\gamma)$, which
is critical in the use of pilot studies to guide experimental design. The key quantity in this sample complexity result is the minimum $\ell_\infty$ distance between the true CDF $F_{\nu_*}$ and the set of CDFs corresponding to mixing distributions with less than $\zeta$ mass above $\gamma$. We call this set of mixing distributions $S$,
\begin{align}
S(\zeta, \gamma) :=\{ \nu : \P_\nu((\gamma, \infty)) \leq \zeta\}.\label{eqn:defineS} 
\end{align}
Specifically, consider $S(\zeta_{\nu_*}(\gamma) - \varepsilon, \gamma)$, which appears in Eqn \eqref{eqn:generalSampleComplexity}. If $\varepsilon = 0$, then we have $\nu_* \in S(\zeta_{\nu_*}, \gamma)$, so the minimum $\ell_\infty$ distance to $F_{\nu_*}$, $\min_{\nu \in S(\zeta_{\nu_*}(\gamma), \gamma)}||F_\nu - F_{\nu_*}||_\infty$, would be zero, implying that no finite sample can guarantee $\varepsilon = 0$. This reflects the fact that $\widehat{\zeta}_n(\gamma)$ is an underestimate at every $\gamma$; therefore, in order for $\varepsilon$ to be zero, we must have estimated $\zeta_{\nu_*}(\gamma)$ exactly. As $\varepsilon$ increases, $S(\zeta_{\nu_*}(\gamma) - \varepsilon, \gamma)$ shrinks, and the distance to $F_{\nu_*}$ increases, decreasing the required number of samples $n$. 

To interpret the sample complexity in Theorem \ref{lem:est-finiteSample}, we consider a simple model where test statistics are drawn from a mixture of two Gaussians. In this setting, which we denote $X_i\sim P(\zeta_*, \gamma_*)$, we have
\begin{equation}\label{eq:canonicalTwoSpikes}
\begin{split}
    \mu_i &\sim (1-\zeta_*)\delta_0 + \zeta_*\delta_{\gamma_*}\\
    X_i &\sim \mathcal{N}(\mu_i, \sigma^2),
\end{split}
\end{equation}
where $\delta_x$ is the Dirac delta function at $x$. There are two natural questions we might ask: How many samples a re necessary to determine the existence of the mixture component at $\gamma_*>0$, and how many samples are required to estimate the weight of this component? We call these the \textit{testing} and \textit{estimation} problems respectively. In the following sections, we address our algorithm's sample complexity for these problems, and compare to lower bounds. For ease of exposition, let $\alpha=\delta$, although the results hold for the more general case.

\subsection{Global Null Testing}\label{sec:globalNull}
In the global null testing problem, we observe $X_i$ according to \eqref{eq:canonicalTwoSpikes}, and we want to determine whether $\zeta_* > 0$ (i.e., testing $H_0:~P_{\nu_*}(\mu>0)=0$ vs $H_1:~P_{\nu_*}(\mu > 0) > 0$). 
Our test declares $H_1$ if $\widehat{\zeta}_n(0)>0$, and $H_0$ if $\widehat{\zeta}_n(0) = 0$.
Clearly this test erroneously declares $H_1$ with probability at most $\delta$ (it has type I error at most $\delta$), since $\widehat{\zeta}_n(0)\leq \zeta_*$ with probability at least $1-\delta$ 
(recall that we set $\alpha=\delta$ in Theorem \ref{lem:est-finiteSample}). 
The next corollary bounds the sample complexity that guarantees a probability of detection of at least $1-\delta$ (i.e., that bounds the type II error by $\delta$).
\begin{corollary}\label{cor:est-finiteSampleGauss}
Let $\{X_i\}_{i=1}^n$ be drawn according to \eqref{eq:canonicalTwoSpikes}. Consider the simultaneous estimator $\widehat{\zeta}_n$ defined by (\ref{eqn:estimator}). 
Then, with probability at least $1 - \delta$, we have $\widehat{\zeta}_n(0) \leq \zeta_*$ and $\widehat{\zeta}_n(0) > 0$ whenever
\begin{align*}
    n \geq \frac{2\log\left(\frac{2}{\delta}\right)}{\zeta_*^2\left( \Phi_\sigma\left(\frac{1}{2}\gamma_*\right) - \Phi_\sigma\left(-\frac{1}{2}\gamma_*\right) \right)^2},
\end{align*}
where $\Phi_\sigma$ is the CDF of the distribution $\mathcal{N}(0, \sigma^2)$. Furthermore, if $\gamma_* < \sigma$, then the above can be simplified to
\begin{align*}
    n \geq \frac{16\sigma^2\log\left(\frac{2}{\delta}\right)}{\zeta_*^2\gamma_*^2}.
\end{align*}
\end{corollary}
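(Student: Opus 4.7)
The plan is to invoke Theorem \ref{lem:est-finiteSample} at $\gamma = 0$ with $\alpha = \delta$. Conservativeness of $\widehat\zeta_n$ immediately delivers $\widehat\zeta_n(0) \le \zeta_{\nu_*}(0) = \zeta_*$, so the real task is to force $\widehat\zeta_n(0)$ strictly above $0$. By \eqref{eqn:generalSampleComplexity}, this reduces to lower bounding the quantity $d := \min_{\nu \in S(0,0)} \|F_\nu - F_{\nu_*}\|_\infty$, where $S(0,0)$ is the set of mixing distributions supported on $(-\infty, 0]$.

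\textbf{CDF gap at the midpoint.} I would lower bound $d$ by choosing the single test point $t = \gamma_*/2$. For any $\mu \le 0$, $\gamma_*/2 - \mu \ge \gamma_*/2$, so $\Phi_\sigma(\gamma_*/2 - \mu) \ge \Phi_\sigma(\gamma_*/2)$; integrating against any $\nu \in S(0,0)$ yields $F_\nu(\gamma_*/2) \ge \Phi_\sigma(\gamma_*/2)$. On the other hand, under the mixture model \eqref{eq:canonicalTwoSpikes} one computes directly $F_{\nu_*}(\gamma_*/2) = (1-\zeta_*)\Phi_\sigma(\gamma_*/2) + \zeta_*\Phi_\sigma(-\gamma_*/2)$, whence
\[
\|F_\nu - F_{\nu_*}\|_\infty \ge \zeta_*\bigl[\Phi_\sigma(\gamma_*/2) - \Phi_\sigma(-\gamma_*/2)\bigr].
\]
Substituting this lower bound on $d$ into \eqref{eqn:generalSampleComplexity} and using $\log(4/(\alpha\delta)) = 2\log(2/\delta)$ at $\alpha = \delta$ yields the first displayed bound.

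\textbf{Simplified bound when $\gamma_* < \sigma$.} To get the second bound, I would lower bound the $\mathcal{N}(0,\sigma^2)$ density on $[-\gamma_*/2, \gamma_*/2] \subset [-\sigma/2, \sigma/2]$ by $(\sigma\sqrt{2\pi})^{-1}\,e^{-1/8}$ and integrate, giving $\Phi_\sigma(\gamma_*/2) - \Phi_\sigma(-\gamma_*/2) \ge \gamma_*/(\sigma\sqrt{2\pi}\,e^{1/8})$. Squaring and plugging into the first bound produces the coefficient $4\pi e^{1/4} \approx 16.1$, matching (modulo a small rounding) the quoted $16\sigma^2$.

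\textbf{Main obstacle.} The delicate point is that Theorem \ref{lem:est-finiteSample} requires $\varepsilon \in (0, \zeta_{\nu_*}(\gamma)]$, and plugging $\varepsilon = \zeta_*$ literally yields only $\widehat\zeta_n(0) \ge 0$, not strict positivity. I would resolve this by applying the theorem at $\varepsilon = \zeta_* - \eta$ for arbitrary $\eta > 0$ and re-running the CDF-gap argument on the slightly enlarged set $S(\eta, 0)$: splitting $F_\nu(\gamma_*/2) = \int_{\mu \le 0} + \int_{\mu > 0}$, the first piece is at least $(1 - \eta)\Phi_\sigma(\gamma_*/2)$ and the second at least $0$, so the gap weakens by only $\eta\Phi_\sigma(\gamma_*/2)$, which vanishes as $\eta \to 0$. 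For any $n$ strictly exceeding the corollary's threshold, picking $\eta > 0$ small enough satisfies the hypothesis of Theorem \ref{lem:est-finiteSample} and delivers $\widehat\zeta_n(0) \ge \eta > 0$ with probability at least $1 - \delta$.
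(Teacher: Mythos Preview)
Your approach is essentially the paper's: invoke Theorem~\ref{lem:est-finiteSample} with $\alpha=\delta$ and $\varepsilon=\zeta_*$, and lower-bound $\min_{\nu\in S(0,0)}\|F_\nu-F_{\nu_*}\|_\infty$ by evaluating at the midpoint $t=\gamma_*/2$ (this is exactly Lemma~\ref{lem:sampleComplexityDetection}). The only cosmetic difference is in the simplification for $\gamma_*<\sigma$: the paper uses a second-order Taylor bound on the Gaussian density to get $\Phi_\sigma(\gamma_*/2)-\Phi_\sigma(-\gamma_*/2)\ge \tfrac{23}{24}\,\gamma_*/(\sigma\sqrt{2\pi})$, whereas you use the cruder constant lower bound $\phi_\sigma\ge e^{-1/8}/(\sigma\sqrt{2\pi})$; both land at the stated constant $16$ after rounding. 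Your ``main obstacle'' paragraph is in fact more careful than the paper's one-line proof, which simply plugs in $\varepsilon=\zeta_*$ without addressing the strict-versus-nonstrict issue---your $\eta\to 0$ argument is a clean way to patch this.
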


\textit{Proof Sketch.}
To obtain this sample complexity result, we must lower bound the distance term in the denominator of Eqn \eqref{eqn:generalSampleComplexity}. Recalling our definition of $S$ in Eqn \eqref{eqn:defineS}, we lower bound the associated minimax quantity by its value at a specific point, $t=\tfrac{1}{2}\gamma_*$,
\begin{align*}
    \min_{\nu\in S(0,0)}||F_\nu - F_{\nu_*}||_\infty
    &= \min_{\nu\in S(0,0)} \sup_{t\in\R}|F_\nu(t) - F_{\nu_*}(t)|\\
    &\geq \min_{\nu\in S(0,0)} F_\nu(\tfrac{1}{2}\gamma_*) - F_{\nu_*}(\tfrac{1}{2}\gamma_*)
\end{align*}
The constraint $\nu\in S(0,0)$ allows us to lower bound the first quantity by $\Phi(\tfrac{1}{2}\gamma_*)$, and we compute the second quantity exactly, giving the first conclusion of the corollary. The second conclusion follows from a quadratic approximation to the normal density. \qed

We compare Corollary \ref{cor:est-finiteSampleGauss} to the finite-sample lower bound arising from the ``most biased coin problem'' \cite{chandrasekaran2014finding, MostBiasedCoin}. In this problem, the algorithm draws $N$ observations $X_i$ as per \eqref{eq:canonicalTwoSpikes}, where $N$ is potentially a random variable, according to either $H_0: X_i\sim \mathcal{N}(0,\sigma^2)$ or $H_1: X_i\sim P(\zeta_*, \gamma_*)$.
When $\gamma_*$ and $\zeta_*$ are known and $\gamma_* \leq \sigma$, Theorem 2 of Jamieson et al. \yrcite{MostBiasedCoin} states that any (potentially randomized) procedure that decides between these hypotheses with probability of error at most $\delta$ requires at least
\begin{align*}
    \E[N] \geq \max\left\{ \frac{1-\delta}{\zeta_*}, ~ \frac{\sigma^2\log(1/\delta)}{2\zeta_*^2\gamma_*^2} \right\}
\end{align*}
samples. To facilitate comparison with the sample complexity of our estimator, we show in Lemma \ref{lem:MBC-algebra} that the small-$\gamma_*$ sample complexity from Corollary \ref{cor:est-finiteSampleGauss} matches the stated lower bound up to constants both when $\delta$ is fixed and as $\delta\to 0$. 

\begin{figure*}
    \centering 
    \includegraphics[trim={0 7.5cm 0.8cm 3.4cm}, clip,width=0.9\textwidth]{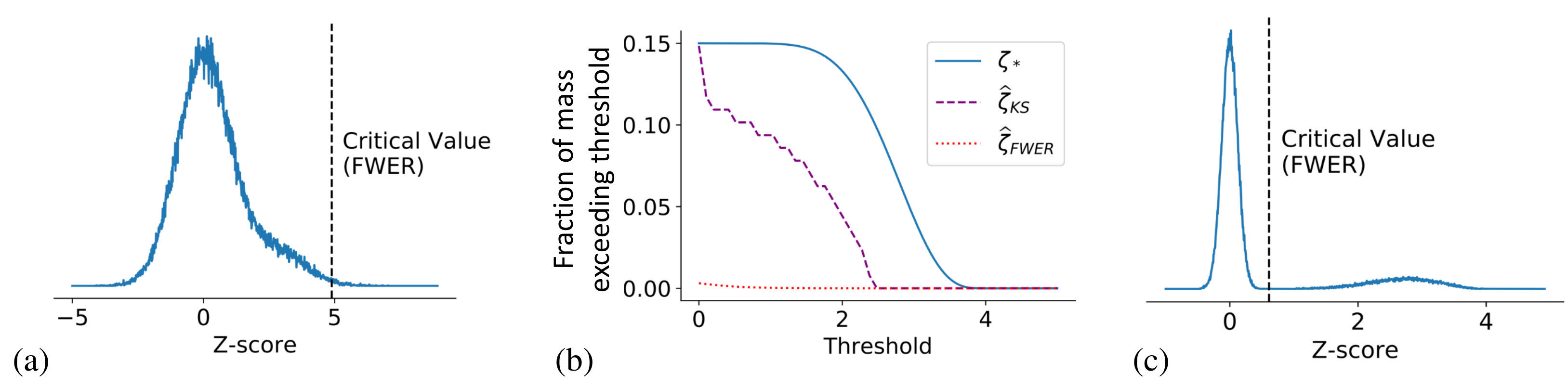}
    \caption{
    Our estimator applied to pilot experiments. (a) After observing $X_i\sim\mathcal{N}(\mu_i, 1)$ for $i=1,\ldots,n$ with $n=10^4$, only $0.3$\% of null hypotheses are rejected via a Bonferroni corrected test (indicated by the FWER critical value). However, the Z-scores appear skewed positive, suggesting additional discoveries exist. (b) Our estimator $\widehat{\zeta}_{KS}$ indicates that there are many discoveries to be made; for example, at least $9$\% of treatments have effect size at least $1$, and at least $4\%$ have effect size at least $2$. Note that our estimator also counts more discoveries at each threshold than are identified by Bonferroni correction ($\widehat{\zeta}_{FWER}$), without exceeding the true value $\zeta_*$. (c) The experimenter designs an experiment to identify the effects greater than $2$, and allocates $\gamma^{-2}\log(n)\log(1/\widehat{\zeta}(\gamma))=8$ replicates per hypothesis. Now, $14$\% of the null hypotheses can be rejected.}
    \label{fig:prescreen_new}
\end{figure*}
\subsection{The Estimation Problem}
In the estimation problem, we observe $X_i$ according to \eqref{eq:canonicalTwoSpikes}, and we estimate $\zeta_*$ using our estimator $\widehat{\zeta}(0)$. Since $\widehat{\zeta}(0) \leq \zeta_*$ with high probability, it remains to understand the magnitude of this underapproximation $-$ the dependence of $\varepsilon$ from Theorem \ref{lem:est-finiteSample} on the number of samples $n$. The following corollary describes the number of samples needed to guarantee an error bound $\varepsilon \leq \tfrac{1}{2}\zeta_*$ with high probability.
\begin{corollary}\label{cor:estimation-finiteSampleGauss}
    Let $\{X_i\}_{i=1}^n$ be drawn according to \eqref{eq:canonicalTwoSpikes}.
    Let $\zeta_* > 0$ and $\gamma_*\in (0, \sigma]$. Then, with probability at least $1-\delta$, our estimate $\widehat{\zeta}_n$ from \eqref{eqn:estimator} satisfies $\widehat{\zeta}_n(0) \in (\tfrac{1}{2}\zeta_*, \zeta_*]$ as long as
    \begin{align*}
        n \gtrsim \frac{ \sigma^4 \log\left( \tfrac{2}{\delta}\right)}{\zeta_*^2\gamma_*^4}.
    \end{align*}
\end{corollary}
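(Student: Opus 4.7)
The plan is to apply Theorem \ref{lem:est-finiteSample} with $\gamma = 0$, $\varepsilon = \zeta_*/2$, and $\alpha = \delta$ (the convention of Corollary \ref{cor:est-finiteSampleGauss}). Since $\zeta_{\nu_*}(0) = \zeta_*$, the theorem's conclusion $\zeta_* - \widehat{\zeta}_n(0) \leq \zeta_*/2$, combined with its conservativeness guarantee $\widehat{\zeta}_n(0) \leq \zeta_*$, delivers the advertised $\widehat{\zeta}_n(0) \in (\zeta_*/2, \zeta_*]$. Since $\log(4/(\alpha\delta)) = 2\log(2/\delta)$, the stated sample complexity reduces to proving
\begin{align*}
    \min_{\nu \,:\, \P_\nu((0, \infty)) \leq \zeta_*/2}\|F_\nu - F_{\nu_*}\|_\infty \;\gtrsim\; \frac{\zeta_*\gamma_*^2}{\sigma^2}.
\end{align*}

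Unlike in Corollary \ref{cor:est-finiteSampleGauss}, the constraint set $S(\zeta_*/2, 0)$ now admits distributions placing up to $\zeta_*/2$ mass on $(0, \infty)$, which is enough to match $\nu_*$'s first moment; indeed, the candidate $(1-\zeta_*/2)\delta_0 + (\zeta_*/2)\delta_{2\gamma_*}$ does so and yields $\|F_\nu - F_{\nu_*}\|_\infty \asymp \zeta_*\gamma_*^2/\sigma^2$ via an exact second difference of $\Phi_\sigma$. Accordingly, a single-point evaluation in the style of Corollary \ref{cor:est-finiteSampleGauss} cannot attain the target rate, and the argument must extract the gap at second order. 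I would therefore evaluate the signed second difference at thresholds $t-\gamma_*, t, t+\gamma_*$,
\begin{align*}
    F_\nu(t-\gamma_*) + F_\nu(t+\gamma_*) - 2F_\nu(t) = \int \bigl[\Phi_\sigma(t-\gamma_*-\mu) + \Phi_\sigma(t+\gamma_*-\mu) - 2\Phi_\sigma(t-\mu)\bigr]\, d\nu(\mu),
\end{align*}
and similarly under $\nu_*$; by the triangle inequality their difference has magnitude at most $4\|F_\nu - F_{\nu_*}\|_\infty$, so a lower bound on the second difference propagates to the sup norm.

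The main obstacle is bounding this second-difference integral away from zero uniformly over $\nu \in S(\zeta_*/2, 0)$. My plan is a moment-comparison step: decompose $\nu = (1-p)\rho_- + p\rho_+$ with $\rho_-$ supported on $(-\infty, 0]$, $\rho_+$ on $(0, \infty)$, and $p \leq \zeta_*/2$. If $\nu$ matches $\nu_*$'s first moment, then since $(1-p)E_{\rho_-}[\mu] \leq 0$ we must have $pE_{\rho_+}[\mu] \geq \zeta_*\gamma_*$, and Cauchy--Schwarz yields $pE_{\rho_+}[\mu^2] \geq (\zeta_*\gamma_*)^2/p \geq 2\zeta_*\gamma_*^2$, forcing $M_2(\nu) - M_2(\nu_*) \geq \zeta_*\gamma_*^2$. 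A Taylor expansion of the integrand in $\gamma_*$ shows the second difference equals $\gamma_*^2\int\phi_\sigma'(t-\mu)\,d(\nu-\nu_*)(\mu) + O(\gamma_*^4/\sigma^4)$; choosing $t$ so that $\phi_\sigma'(t-\mu)$ pairs with this excess second moment (e.g.\ $t$ near $\pm\sigma$ where $|\phi_\sigma'|$ is maximal) extracts the desired $\Omega(\zeta_*\gamma_*^2/\sigma^2)$ bound, with $\gamma_* \leq \sigma$ ensuring the Taylor remainder is dominated by the leading term. The complementary case where $\nu$'s first moment differs from $\zeta_*\gamma_*$ is handled by a single-point evaluation at $t = \gamma_*/2$ as in Corollary \ref{cor:est-finiteSampleGauss}, which yields a first-order gap that dominates $\zeta_*\gamma_*^2/\sigma^2$ under $\gamma_* \leq \sigma$. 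Plugging the resulting lower bound into \eqref{eqn:generalSampleComplexity} delivers $n \gtrsim \sigma^4\log(2/\delta)/(\zeta_*^2\gamma_*^4)$.
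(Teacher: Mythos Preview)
Your reduction to Theorem~\ref{lem:est-finiteSample} is correct, and you correctly identify the candidate minimizer $\nu_{OPT}=(1-\tfrac{1}{2}\zeta_*)\delta_0+\tfrac{1}{2}\zeta_*\delta_{2\gamma_*}$. The gap is in the second-difference step: the functional you propose does not certify the rate $\zeta_*\gamma_*^2/\sigma^2$.

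Concretely, test your functional on $\nu=\nu_{OPT}$. Writing $g(t):=F_{\nu_{OPT}}(t)-F_{\nu_*}(t)=\tfrac{\zeta_*}{2}\bigl[\Phi_\sigma(t)-2\Phi_\sigma(t-\gamma_*)+\Phi_\sigma(t-2\gamma_*)\bigr]\approx\tfrac{\zeta_*}{2}\gamma_*^2\phi_\sigma'(t-\gamma_*)$, your second difference is
\[
g(t-\gamma_*)+g(t+\gamma_*)-2g(t)\;\approx\;\gamma_*^2\,g''(t)\;\approx\;\tfrac{\zeta_*}{2}\gamma_*^4\,\phi_\sigma'''(t-\gamma_*)\;=\;\Theta\!\left(\tfrac{\zeta_*\gamma_*^4}{\sigma^4}\right),
\]
which is a factor $\gamma_*^2/\sigma^2$ short of what you claim. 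The underlying issue is that $\int\phi_\sigma'(t-\mu)\,d(\nu-\nu_*)(\mu)$ does \emph{not} pair with the second-moment excess in the way you suggest: when the first moments match, a Taylor expansion of $\phi_\sigma'(t-\mu)$ in $\mu$ shows this integral is $\tfrac{1}{2}\phi_\sigma'''(t)\bigl(M_2(\nu)-M_2(\nu_*)\bigr)+\cdots=\Theta(\zeta_*\gamma_*^2/\sigma^4)$, not $\Theta(\zeta_*/\sigma^2)$. Multiplying by your prefactor $\gamma_*^2$ gives $\zeta_*\gamma_*^4/\sigma^4$, and the resulting sample complexity is $\sigma^8/(\zeta_*^2\gamma_*^8)$ rather than $\sigma^4/(\zeta_*^2\gamma_*^4)$.

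The paper avoids this by \emph{not} seeking a functional that works uniformly over $S(\tfrac{1}{2}\zeta_*,0)$. Instead it proves directly, via the KKT conditions for the convex program $\min_{\nu\in S}\|F_\nu-F_{\nu_*}\|_\infty$, that $\nu_{OPT}$ is the exact minimizer (Lemma~\ref{lem:optimalNuForm}). Once that is known, a single-point evaluation of $F_{\nu_{OPT}}-F_{\nu_*}$ at $t=\tfrac{3}{2}\gamma_*-\sigma$ (a Taylor approximation to the true argmax $t_+$ identified in Lemma~\ref{lem:tPlusAndMinus}) already gives $\gtrsim\zeta_*\gamma_*^2/\sigma^2$. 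So your instinct that ``single-point evaluation cannot attain the target rate'' is only half right: it cannot do so \emph{uniformly} over the constraint set, but it can once the minimizer is pinned down, and the duality argument is precisely the missing ingredient.
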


\textit{Proof Sketch.} 
Again, we obtain this result by lower bounding the denominator of Eqn \eqref{eqn:generalSampleComplexity}. We note that this quantity is the optimal value of a convex optimization problem, and show that the associated optimal point is
\begin{align*}
    \nu_{OPT} &= (1-\tfrac{1}{2}\zeta_*)\delta_0 + \tfrac{1}{2}\zeta_*\delta_{2\gamma_*}.
\end{align*}
Finally, we apply several Taylor series approximations to bound the optimal value as a polynomial in $\zeta_*$ and $\gamma_*$.
\qed

We present a novel lower bound for the estimation problem which matches our result up to constants.
\begin{lemma}\label{lem:finiteSampleEstBound}
Consider data $\{X_i\}_{i=1}^n$ generated under the model $X_i\sim P(\zeta, \gamma)$, parameterized by $\zeta\in (0, \tfrac{1}{2})$ and $\gamma\in (0,\sigma)$ according to our canonical two-spike model \eqref{eq:canonicalTwoSpikes}. Fix a parameterization $(\zeta_*, \gamma_*)$. For any $\varepsilon \in (0, \tfrac{2}{3}\zeta_*)$, define the set $A_\varepsilon$ of nearby parameterizations as 
\begin{align*}
    A_\varepsilon &= \left\{ (\zeta, \gamma) ~:~ |\zeta_* - \zeta| \leq 4\varepsilon,~ \tfrac{1}{3}\gamma_* \leq \gamma \leq 3\gamma_* \right\}.
\end{align*}
Suppose $\widehat{\zeta}_n(X)$ is an estimator of $\zeta$ satisfying $\P( | \widehat{\zeta}_n(X) - \zeta | \geq \varepsilon ) < \frac{1}{4}$
for any $(\zeta, \gamma)\in A_\varepsilon$. Then the estimator requires at least
$
    n \gtrsim \frac{\sigma^4}{\varepsilon^2\gamma_*^4}
$
samples on the instance $(\zeta_*,\gamma_*)$.
\end{lemma}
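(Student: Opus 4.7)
The plan is to apply Le Cam's two-point method. I construct two parameter settings $\theta_1 = (\zeta_1,\gamma_1)$ and $\theta_2=(\zeta_2,\gamma_2)$, both inside $A_\varepsilon$, with $|\zeta_1-\zeta_2| \geq 2\varepsilon$ but chi-squared divergence $\chi^2(P_{\theta_1}\|P_{\theta_2}) \lesssim \varepsilon^2\gamma_*^4/\sigma^4$ between the induced single-sample distributions. Combining the standard Le Cam inequality $\max_{i\in\{1,2\}} \P_{\theta_i}(|\widehat{\zeta}_n-\zeta_i|\geq \varepsilon) \geq \tfrac{1}{2}(1-\mathrm{TV}(P_{\theta_1}^n,P_{\theta_2}^n))$ with $\mathrm{TV}(P^n,Q^n)\leq \sqrt{n\,\chi^2(P\|Q)/2}$, the hypothesis that the estimator errs with probability less than $1/4$ at each parameter forces $n\cdot\chi^2(P_{\theta_1}\|P_{\theta_2}) \gtrsim 1$, i.e.\ $n\gtrsim \sigma^4/(\varepsilon^2\gamma_*^4)$.

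The crucial device, which upgrades the lower bound from a naive $\gamma_*^{-2}$ rate to the claimed $\gamma_*^{-4}$ rate, is first-moment matching. I take $(\zeta_1,\gamma_1)=(\zeta_*,\gamma_*)$ and $(\zeta_2,\gamma_2)=(\zeta_*/c,\,c\gamma_*)$, so that $\zeta_1\gamma_1 = \zeta_2\gamma_2$. The constant $c$ is chosen so that $|\zeta_*/c-\zeta_*|=2\varepsilon$, namely $c = 1/(1\pm 2\varepsilon/\zeta_*)$, with the sign picked to keep $\zeta_2$ a valid probability; the hypothesis $\varepsilon<\tfrac{2}{3}\zeta_*$ ensures such a $c$ exists in $[\tfrac13,3]$, so both $(\zeta_j,\gamma_j)$ lie in $A_\varepsilon$. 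Let $\phi$ denote the $\mathcal{N}(0,\sigma^2)$ density and $H_k$ the probabilists' Hermite polynomials. The generating-function identity $\phi(x-\gamma)/\phi(x) = \sum_{k\geq 0}(\gamma/\sigma)^k H_k(x/\sigma)/k!$ gives
\[
\frac{p_1(x)-p_2(x)}{\phi(x)} \;=\; \sum_{k\geq 1}\frac{\zeta_1\gamma_1^k-\zeta_2\gamma_2^k}{k!\,\sigma^k}\,H_k(x/\sigma),
\]
and moment matching annihilates the $k=1$ term. Hermite orthogonality under $\phi$ then yields
\[
\int \frac{(p_1-p_2)^2}{\phi}\,dx \;=\; \sum_{k\geq 2}\frac{(\zeta_1\gamma_1^k-\zeta_2\gamma_2^k)^2}{k!\,\sigma^{2k}}.
\]

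Substituting the parametrization gives $\zeta_1\gamma_1^k - \zeta_2\gamma_2^k = \zeta_*\gamma_*^k(1-c^{k-1})$, with $|1-c^{k-1}| \lesssim k\,|1-c| = O(k\varepsilon/\zeta_*)$ for $c\in[\tfrac13,3]$. Thus the $k=2$ term contributes $\Theta(\varepsilon^2\gamma_*^4/\sigma^4)$; higher-order terms carry extra factors $(\gamma_*/\sigma)^{2(k-2)}\leq 1$ (using $\gamma_*<\sigma$) together with $1/k!$, so the full sum is at most a constant multiple of the leading term. Since $p_2\geq (1-\zeta_2)\phi\gtrsim \phi$ (as $\zeta_2<1/2$ stays bounded away from $1$), we obtain $\chi^2(P_{\theta_1}\|P_{\theta_2}) \leq \tfrac{1}{1-\zeta_2}\int (p_1-p_2)^2/\phi\,dx = O(\varepsilon^2\gamma_*^4/\sigma^4)$, completing the argument.

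The main obstacle is the casework required to pick $c$ so that both alternatives simultaneously sit in $A_\varepsilon$ with valid probability weights across the entire range $\varepsilon\in(0, \tfrac{2}{3}\zeta_*)$; one may need to perturb toward smaller $\gamma$ (taking $c<1$) for some values of $\varepsilon/\zeta_*$ and toward larger $\gamma$ (taking $c>1$) for others. Bounding the Hermite tail uniformly as $c$ approaches the boundary of $[\tfrac13, 3]$ is slightly subtle but routine. Everything else is a mechanical computation, driven by the key observation that matching first moments is exactly what produces the extra factor of $(\gamma_*/\sigma)^2$ in the chi-squared divergence, making the lower bound match the upper bound of Corollary \ref{cor:estimation-finiteSampleGauss} up to constants.
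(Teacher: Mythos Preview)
Your proposal is correct and follows the same overall strategy as the paper: a Le Cam two-point argument with two parameterizations chosen to match first moments ($\zeta_1\gamma_1=\zeta_2\gamma_2$), combined with a $\chi^2$-divergence bound obtained by lower-bounding the mixture density by $(1-\zeta)\phi$. The paper's parametrization $P_1=P(\zeta_0-\varepsilon,\gamma_0\zeta_0/(\zeta_0-\varepsilon))$ is exactly your $(\zeta_*/c,c\gamma_*)$ in different notation.

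Where you differ is in the computation of $\int(p_1-p_2)^2/\phi$. The paper expands $\phi(t-\gamma)/\phi(t)=e^{t\gamma-\gamma^2/2}$ directly, computes the resulting Gaussian moments via the moment generating function, and then applies a second-order Taylor expansion of the exponentials with explicit remainder control to isolate the $(\zeta\gamma^2-\zeta_*\gamma_*^2)^2$ term. Your Hermite-polynomial route exploits orthogonality to diagonalize the same quadratic form, which makes the cancellation from moment matching transparent (the $k=1$ coefficient vanishes identically) and reduces the tail control to a single convergent series. Your approach is cleaner and generalizes more readily to higher-order moment matching; the paper's approach avoids invoking Hermite machinery but requires more bookkeeping. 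One small point: your bound $|1-c^{k-1}|\lesssim k|1-c|$ hides an exponential factor $\max(1,c)^{k-2}$ when $c$ is near $3$, but the $1/k!$ in the denominator still absorbs it, so the conclusion is unaffected.
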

Instantiating Lemma \ref{lem:finiteSampleEstBound} with $\varepsilon = \tfrac{1}{2}\zeta_*$, we see that the sample complexity in Corollary \ref{cor:estimation-finiteSampleGauss} matches the lower bound.

Finally, we remark that estimating the fraction of observations with mean $\gamma_*$ to constant multiplicative error requires a factor of $\frac{\sigma^2}{\gamma_*^2}$ more samples than testing whether any observations have mean $\gamma_*$.

\subsection{Proof of Theorem~\ref{lem:est-finiteSample}}
We begin with the first part of the theorem, which bounds the probability of overestimating $\zeta_*$.
Let $A$ be the event that $\widehat{F}_n$ stays within its Dvoretzky-Kiefer-Wolfowitz (DKW) confidence interval, i.e.,
\begin{align*}
    A &:= \big\{ ||F_{\nu_*} - \widehat{F}_n ||_\infty \leq \tau_{\alpha, n} \big\}.
\end{align*}
By the DKW inequality \cite{massart1990tight}, we have $P(A^c) \leq \alpha$.
If we assume that event $A$ holds, then
\begin{align}
    \widehat{\zeta}_n(\gamma) &=  \max\big\{ \zeta \geq 0 : \min_{\nu \in S(\zeta, \gamma)} ||F_\nu - \widehat{F}_n ||_\infty > \tau_{\alpha, n}  \big\} \nonumber\\
    &\overset{(a)}{\leq} \min \big\{ \zeta \geq 0 : \min_{\nu \in S(\zeta, \gamma)} ||F_\nu - \widehat{F}_n ||_\infty \leq \tau_{\alpha, n}  \big\}\nonumber\\
    &\overset{(b)}{\leq} \zeta^*(\gamma) \nonumber 
\end{align}
where (a) holds because
$S(\zeta,\gamma)\subseteq S(\zeta', \gamma)$ for all $\zeta \leq \zeta'$, and (b) 
is true because, by event A, $\zeta_*(\gamma)$ is a member of the set. We note that on event $A$, this argument holds for all $\gamma$. We conclude that, with probability at least $1-\alpha$, we have $\widehat{\zeta}_n(\gamma) \leq \zeta^*(\gamma)$ for all $\gamma$.

To prove the power statement, let $B$ be as follows
\begin{align*}
    B &:= \big\{ ||F_{\nu_*} - \widehat{F}_n ||_\infty \leq \tau_{\delta, n} \big\}.
\end{align*}
Suppose that $B$ holds. Then,
\begin{align*}
    &\widehat{\zeta}_n(\gamma)
    = \max\big\{ \zeta\geq 0 : \min_{\nu\in S(\zeta, \gamma)}||F_\nu - \widehat{F}_n||_\infty \geq \tau_{\alpha, n} \big\}\\
    &\overset{(a)}{\geq} \max \big\{ \zeta\geq 0 : \min_{\nu\in S(\zeta, \gamma)}||F_\nu - F_{\nu_*}||_\infty  \\
    &\qquad \qquad \qquad\qquad\qquad\quad - ||\widehat{F}_n - F_{\nu_*}||_\infty \geq \tau_{\alpha,n}\big\}\\
    &\overset{(b)}{\geq} \max\big\{ \zeta\geq 0 : \min_{\nu\in S(\zeta, \gamma)}||F_\nu - F_{\nu_*}||_\infty \geq \tau_{\alpha, n} + \tau_{\delta, n} \big\}\\
    &\geq \max \big\{  \zeta\geq 0 : \min_{\nu\in S(\zeta, \gamma)}||F_\nu - F_{\nu_*}||_\infty \geq \sqrt{\tfrac{\log(4/\alpha\delta)}{n}} \big\},
\end{align*}
where (a) is the triangle inequality and (b) applies event $B$. We can rewrite the assumption on $n$ in the theorem statement as
\begin{align*}
    \min_{\nu\in S(\zeta_*-\varepsilon, \gamma)}||F_\nu - F_{\nu_*}||_\infty \geq \sqrt{\tfrac{\log(4/\alpha\delta)}{n}}
\end{align*}
which gives us $\widehat{\zeta}_n(\gamma) \geq \zeta_* - \varepsilon$, 
simultaneously for all $\gamma$, whenever $B$ holds. Since $B$ holds with probability at least $1-\delta$, this completes the proof. \qed

\section{Applications to Pilot Experiments}\label{sec:preScreen}
In this section, we consider the task of analyzing a pilot experiment to count the number of discoveries when the effect sizes are small, say $\mu < 1$ for the case of Gaussian $\mathcal{N}(\mu, 1)$ observations. 
Figure \ref{fig:prescreen_new} illustrates this process. First, the scientist allocates a small number of replicates to a large number of hypotheses in order to obtain many noisy estimates of effect sizes (Fig. \ref{fig:prescreen_new}(a)). The scientist then uses our estimator to obtain a guarantee on the number of discoveries to be made at each effect size (Fig. \ref{fig:prescreen_new}(b)). Finally, the scientist calculates the cost of identifying the discoveries that have been detected using a choice of fixed and sequential experimental designs. When the full experiment is run, it results in at least as many discoveries as our estimator has guaranteed (Fig. \ref{fig:prescreen_new}(c)).

The following proposition describes our estimator's performance on pilot data in the low signal-to-noise regime. In particular, if the pilot study design allocates its replicates equally across all $n$ hypotheses, our estimator detects the alternate hypotheses using a factor of $n$ fewer total replicates than it would take to identify these discoveries.

\begin{proposition}\label{prop:prescreen}
Consider a pilot experiment for $n$ hypotheses, where an initial budget of $B=mt$ will be used to uniformly allocate $t$ replicates to each of $m\leq n$ randomly chosen hypotheses. 
Suppose the true distribution of effect sizes is $\nu_* = (1-\zeta_*)\delta_0 + \zeta_*\delta_{\gamma_*}$ and $f_\mu = \mathcal{N}(\mu, \tfrac{1}{t})$, as when computing Z-scores from $t$ replicates. 
Then,
    \begin{align*}
        \P(\widehat{\zeta}_n(0) > 0) \geq 1-\delta
    \end{align*}
    i.e., we detect the presence of positive effects with high probability, as long as
    \begin{align*}
        \gamma_* \geq 4\sqrt{\frac{\log(\frac{2}{\delta})}{\zeta_*^2 B}}
    \qquad \text{and} \qquad
        m \geq \frac{4\log(\frac{2}{\delta})}{\zeta_*^2}.
    \end{align*}
\end{proposition}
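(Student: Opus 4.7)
The plan is to reduce this proposition to a direct application of Corollary~\ref{cor:est-finiteSampleGauss}. The pilot experiment produces $m$ iid observations $X_i \sim \mathcal{N}(\mu_i, 1/t)$ with $\mu_i \sim \nu_*$, so the corollary applies verbatim with sample size $m$ and noise standard deviation $\sigma = 1/\sqrt{t}$. It guarantees $\widehat{\zeta}_m(0) > 0$ with probability at least $1-\delta$ as soon as
\begin{align*}
    m \;\geq\; \frac{2\log(2/\delta)}{\zeta_*^2 D^2}, \qquad D := \Phi_{\sigma}(\gamma_*/2) - \Phi_{\sigma}(-\gamma_*/2) = 2\Phi(\gamma_*\sqrt{t}/2) - 1.
\end{align*}
My task therefore reduces to showing that the two hypotheses of the proposition together imply this single inequality.

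I would split on whether $\gamma_*$ is below or above $\sigma = 1/\sqrt{t}$. In the low-per-replicate-SNR regime $\gamma_* \leq \sigma$, the simplified form of Corollary~\ref{cor:est-finiteSampleGauss} requires $m \geq 16\sigma^2\log(2/\delta)/(\zeta_*^2 \gamma_*^2) = 16\log(2/\delta)/(t\zeta_*^2 \gamma_*^2)$. Multiplying through by $t$ and using $B = mt$ rewrites this as $\gamma_*^2 B \zeta_*^2 \geq 16\log(2/\delta)$, which is exactly the first hypothesis solved for $\gamma_*$. In the complementary regime $\gamma_* > \sigma$, $D$ is bounded below by an absolute constant (for instance $2\Phi(1/2) - 1$) by monotonicity of $\Phi$, so the corollary's requirement collapses to $m \geq C\log(2/\delta)/\zeta_*^2$ for an absolute constant $C$, which is supplied by the second hypothesis.

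The main obstacle is bookkeeping the constants so that the concrete thresholds $4\sqrt{\log(2/\delta)/(\zeta_*^2 B)}$ and $4\log(2/\delta)/\zeta_*^2$ are jointly sufficient across the full range of $\gamma_*\sqrt{t}$. A clean way to organize the argument is to record the single unified lower bound
\begin{align*}
    D^2 \;\geq\; \min\!\bigl(c_1\, \gamma_*^2 t,\; c_2\bigr)
\end{align*}
for absolute constants $c_1, c_2$, derived from $\Phi(y) - \Phi(-y) \geq 2y\phi(y)$ on $[0,1/2]$ and from monotonicity on $[1/2, \infty)$. Substituting this lower bound into the corollary's sample-complexity requirement turns it into a maximum of two inequalities, one absorbed by the first hypothesis (after using $B = mt$) and the other by the second; the residual work is a direct numerical check that the stated constant $4$ is large enough in both places.
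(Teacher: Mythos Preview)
Your proposal is correct and follows essentially the same route as the paper's proof: apply Corollary~\ref{cor:est-finiteSampleGauss} with $m$ samples and $\sigma=1/\sqrt{t}$, split on whether $\gamma_*<\sigma$ or $\gamma_*\geq\sigma$, and show that each case is covered by one of the two stated hypotheses (the first via the simplified $16\sigma^2$ bound rewritten using $B=mt$, the second via a constant lower bound on $D$). The paper uses the numerical bound $D\geq 3/10$ in the second case where you use $2\Phi(1/2)-1$, but these are the same idea and your constant is in fact slightly sharper.
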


\begin{remark}\label{remark:nTimesBetter}
Consider the setting where the budget is constrained, say $B \lesssim n$, and let $\zeta_*$ be constant (so that the proportion of discoveries does not vanish with $n$). Proposition \ref{prop:prescreen} suggests maximizing $m$: either taking $m=n$ if $B \geq n$ or taking $t=1$ if $B < n$. With this budget allocation, our estimator detects the existence of alternate effects with just $B\approx\log(\tfrac{1}{\delta})\gamma_*^{-2}$ total replicates. Note that distinguishing observations from $\mathcal{N}(0, 1)$ and $\mathcal{N}(\gamma_*, 1)$ with probability $1-\delta$ requires $\log(\tfrac{1}{\delta})\gamma_*^{-2}$ samples, so identifying all of the discoveries requires at least $n\log(\tfrac{1}{\delta})\gamma_*^{-2}$ replicates total. We conclude that, in this instance, any identification procedure requires $n$ times more total replicates than our estimator requires for detection.
\end{remark}

Our estimator can also be used to choose between a fixed experimental design (in which each hypothesis is tested with a fixed number of replicates) and a sequential design (in which the next replicate is allocated after observing all previously drawn $X_i$). A sequential design, as in Jamieson \& Jain \yrcite{BanditFDR}, can be more difficult to implement, but could result in significant savings if the 
alternate effect sizes span a large range.
By providing information about the variety of effect sizes, our estimator quantifies the advantage of using a sequential experimental design.

\begin{figure}
    \centering
    \includegraphics[trim={0cm 0 0 0.5cm},clip, width=0.9\columnwidth]{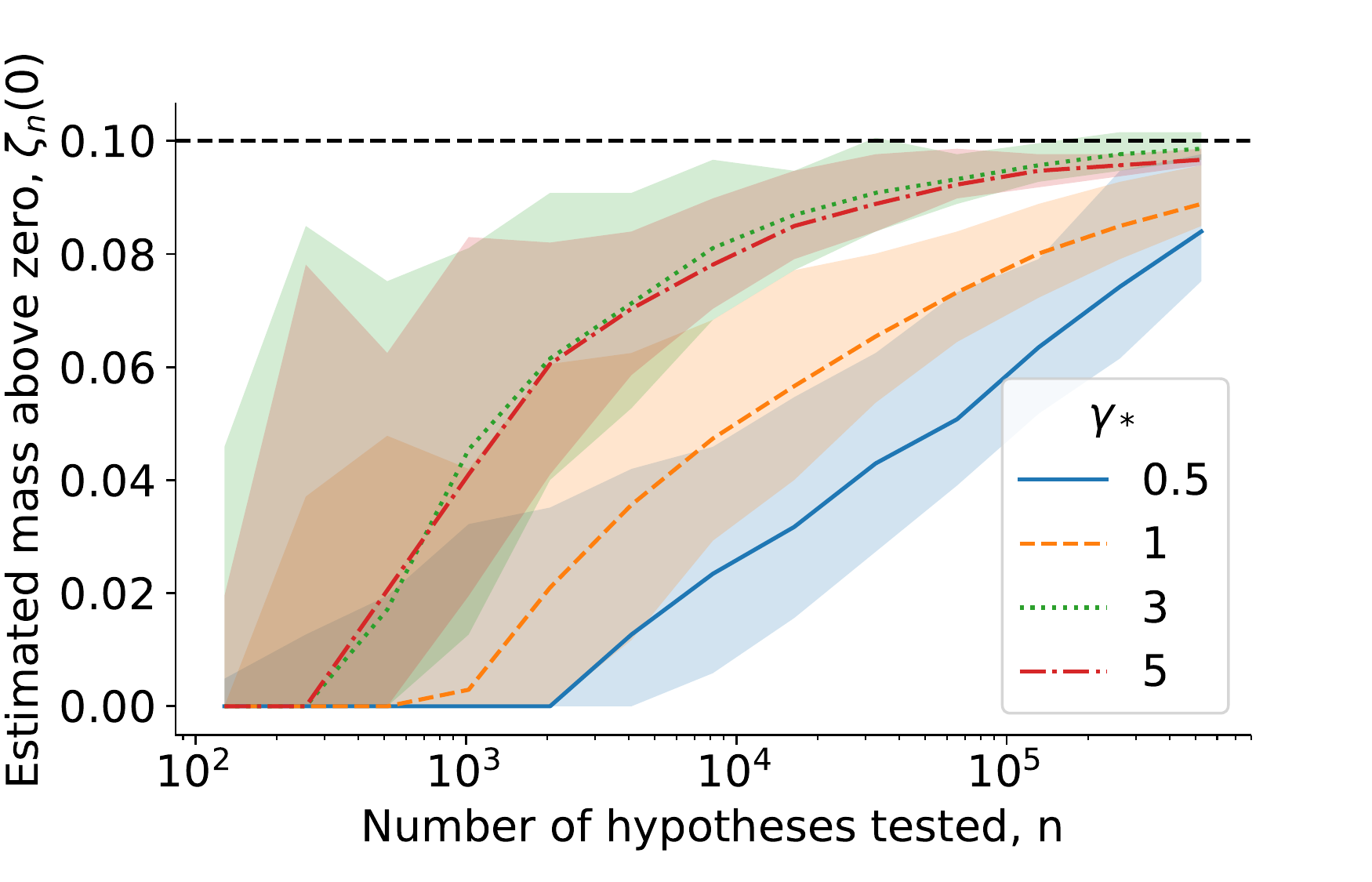}
    \caption{
    {
    \ifx\figCapSmall\undef
    \else
    \small 
    \fi
    Median and 90\% bootstrapped confidence intervals for $\widehat{\zeta}_n(0)$, where $\nu_*=(1-\zeta_*)\delta_0 + \zeta_*\delta_{\gamma_*}$, for various $\gamma_*$. As $n$ increases, for a constant $\zeta_*=0.1$, our estimator $\widehat{\zeta}_n(0)$ converges to $\zeta_*$ without overestimating. As expected, the estimates are lower (have more error) when the alternate effect size $\gamma_*$ is small.}}
    \label{fig:zetaHatConsistency}
\end{figure}
\begin{figure}
    \centering 
    \includegraphics[trim={4cm 13.5cm 0.1cm 15cm},clip,width=\columnwidth]{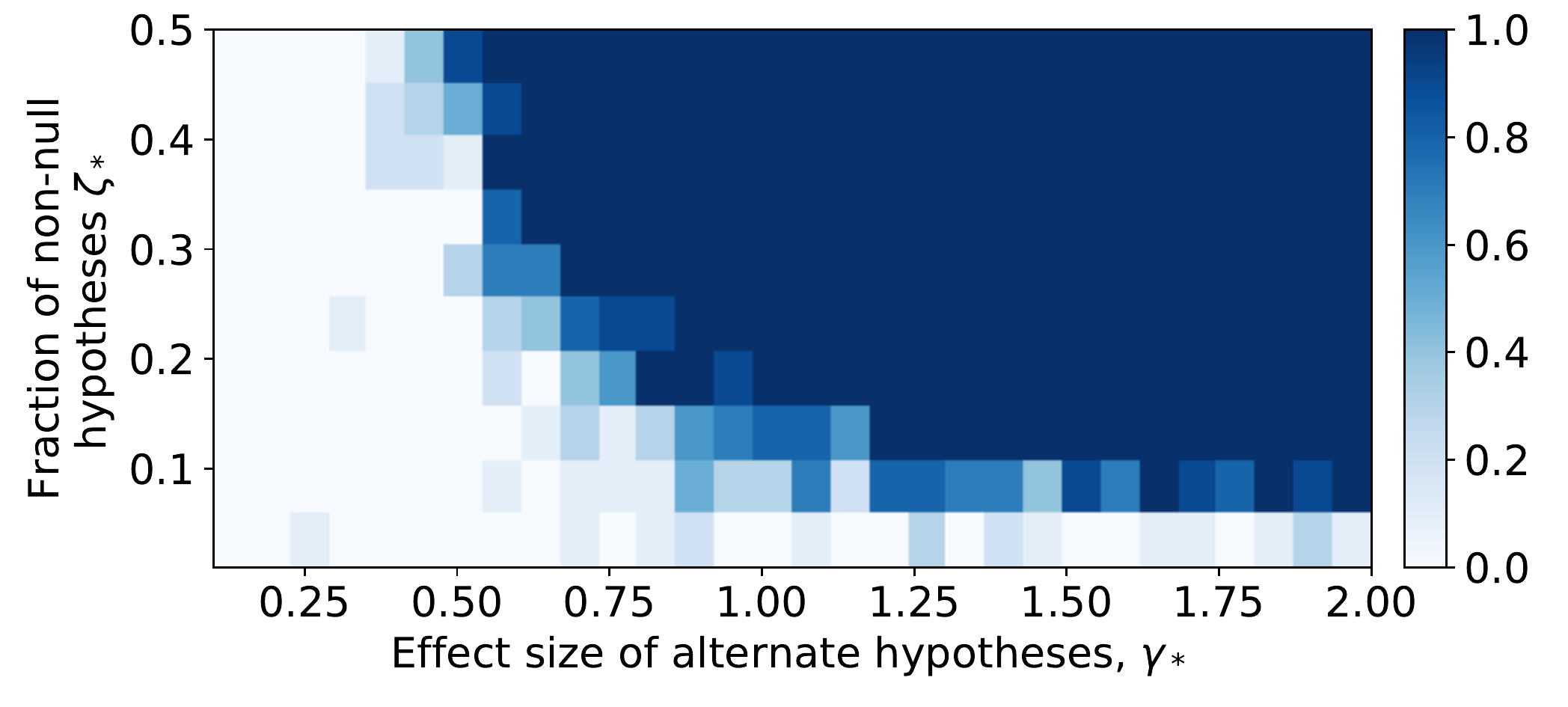}
    \caption{
    Empirical $\P(\widehat{\zeta}_n(0) > \tfrac{1}{2}\zeta_*)$, for various parameterizations $(\gamma_*, \zeta_*)$ of the two-spike Gaussian model \eqref{eq:canonicalTwoSpikes}. For a fixed value of $n=10^4$, the probability of detecting at least half of the discoveries increases 
    with both $\gamma_*$ and $\zeta_*$.
    Empirical probabilities were computed over ten trials.}
    \label{fig:heatMap}
\end{figure}
\begin{figure*}
    \centering
    \includegraphics[width=0.9\textwidth]{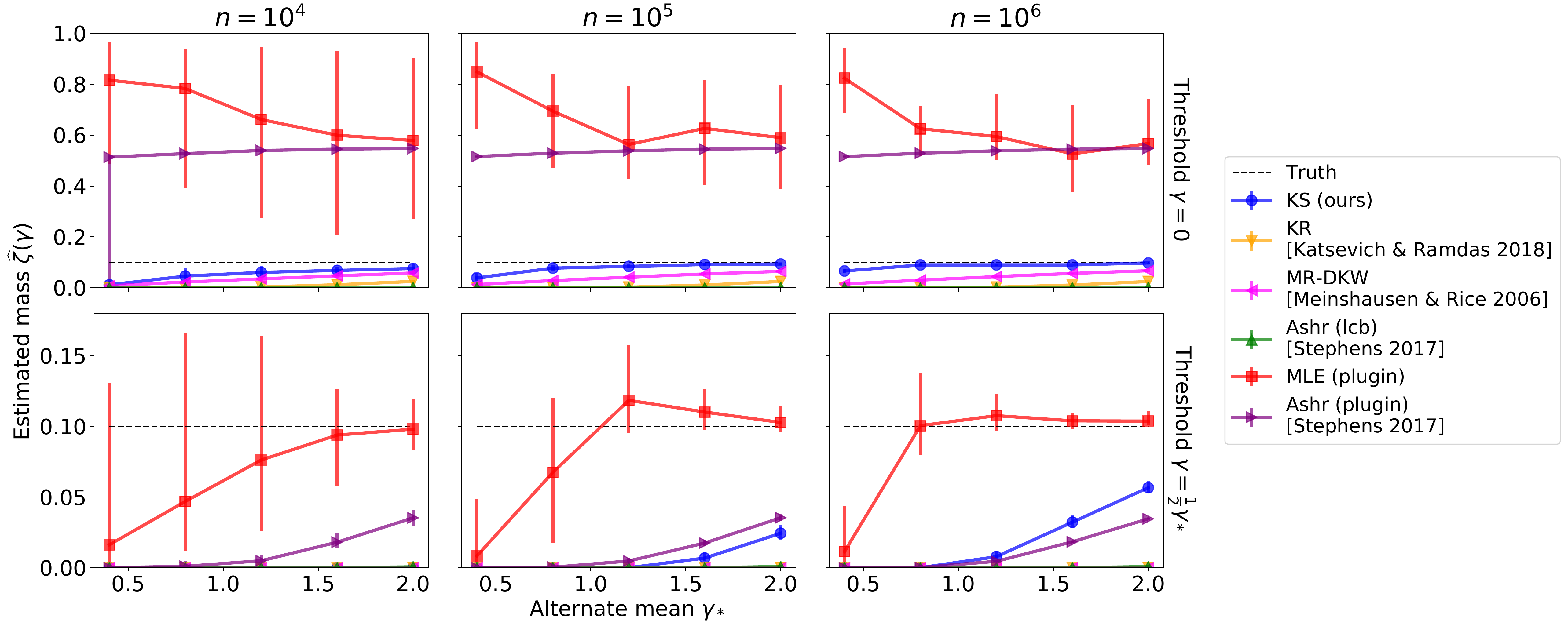}
    \caption{Our estimator outperforms the baselines in the mixture of two Gaussians setting, Eqn \eqref{eq:canonicalTwoSpikes}, returning $\widehat{\zeta}_n(\gamma)$ close to the truth without overestimating, for a wide variety of $n$ and $\gamma$.}
    \label{fig:baselines}
\end{figure*}

\section{Experiments}\label{sec:experiments}
Details of our implementation can be found in Appendix \ref{app:implementation}. A Python implementation is available at \url{https://github.com/jenniferbrennan/CountingDiscoveries/}.

\subsection{Experimental Results on Simulated Data}
We evaluate our estimator on both real and simulated data. We begin with the mixture of two Gaussians 
described by Eqn \eqref{eq:canonicalTwoSpikes}.
Figure \ref{fig:zetaHatConsistency} shows the rate of convergence of our estimator for different values of $\gamma_*$, the alternate effect size. Note that the estimate never exceeds the true value $\zeta_*$, and that it improves as $n$ increases. The variance of our estimator, shown with bootstrapped 90\% confidence intervals, can be large for small $n$ but decreases as $n$ increases.

For a fixed value of $n$, we are interested in the probability that our estimator detects at least half of the discoveries, $
\P_{\nu_*}(\widehat{\zeta}_n(0) \geq \tfrac{1}{2}\zeta_*)$,
as a function of the fraction of discoveries $\zeta_*$ and their effect size $\gamma_*$. Our estimator exhibits a sharp transition between detecting fewer than half and detecting more than half, as shown in Figure \ref{fig:heatMap}.

We compare our estimator to several baselines found in the literature. Figure \ref{fig:baselines} shows the performance of various estimators in the mixture of two Gaussians setting. Each of the six panels shows how the estimators perform as the alternate mean $\gamma_*$ varies, for different numbers of hypotheses $n$ and tested thresholds $\gamma$. We see that the two plugin methods, the MLE and the plugin \texttt{ashr} estimate \cite{stephens2017false}, both fail to satisfy our constraint $\widehat{\zeta}(\gamma) \leq \zeta_*(\gamma)$. Among the four remaining estimators, ours comes closest to the truth. Notably, our estimator continues to improve as $n$ increases, whereas the baselines do not. We conclude that our estimator outperforms existing methods in the regime of large $n$ and $\gamma_* \approx \sigma$. Appendix \ref{app:baselines} includes a plot comparing only the four estimators that satisfy our constraint.

We also demonstrate that our method works on distributions other than Gaussians by applying it to synthetic Poisson and binomial data (Figure \ref{fig:poi_bin_exp}). Details of the experiments can be found in Appendix \ref{app:implementation}; we note that our test detected the presence of the alternate hypotheses even when no alternates were identifiable via Bonferroni-corrected multiple testing.

\begin{figure}
    \centering 
    \includegraphics[trim={1.5cm 5.6cm 18.5cm 3cm},clip, width=0.9\columnwidth]{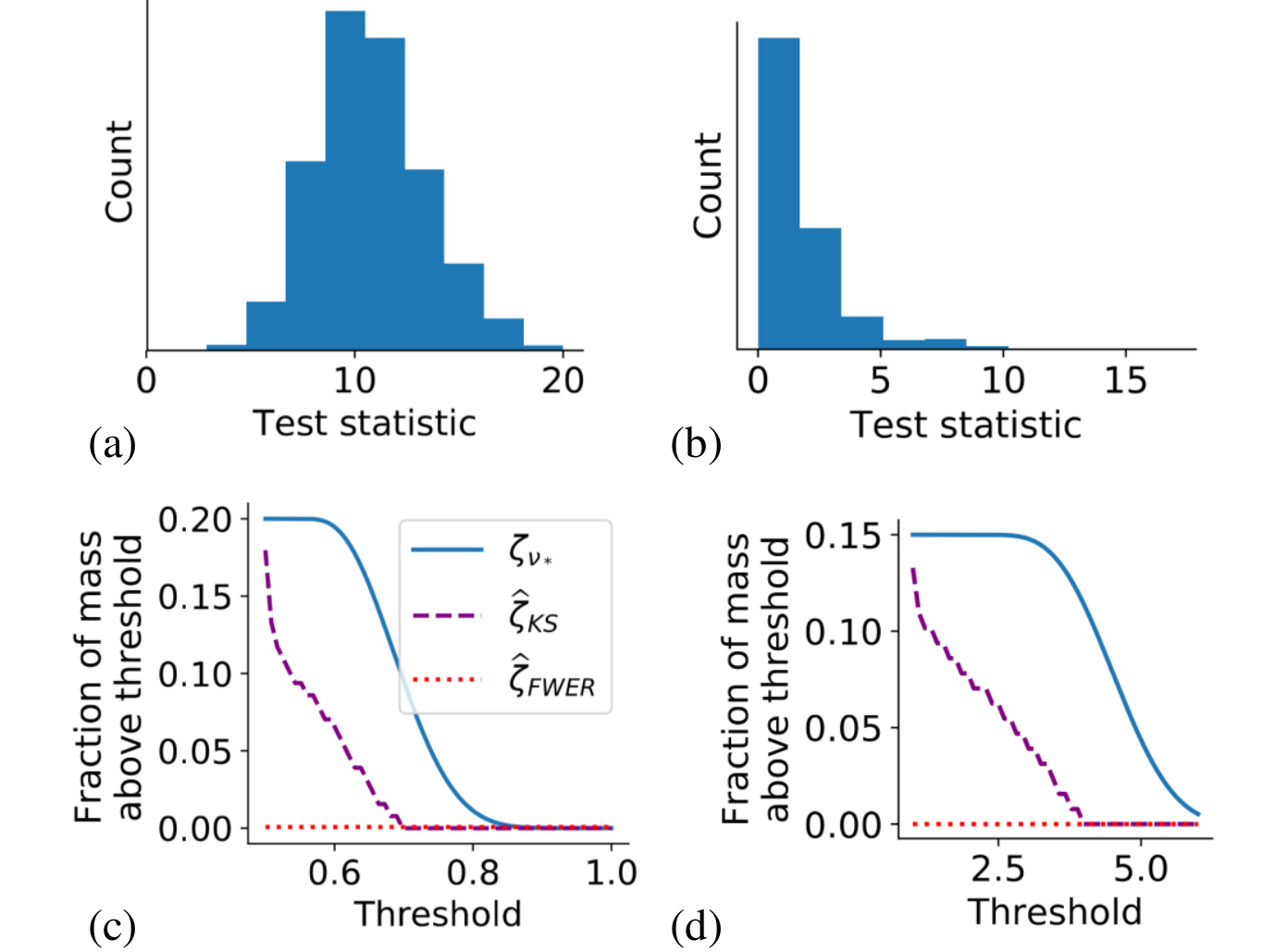}
    \caption{{
    Performance of our estimator on binomial (left) and Poisson (right) data. Top panels show the observed histogram of $X_i$. Bottom panels show the true fraction of effects above each threshold ($\zeta_{\nu_*}$), as well as estimates using our method ($\widehat{\zeta}_{KS}$) and identification via Bonferroni-corrected multiple testing ($\widehat{\zeta}_{FWER}$). Our estimator gets closer to the truth, without overestimating.
    }}
    \label{fig:poi_bin_exp}
\end{figure}

\subsection{Experimental Results on Real Data}
We evaluated our estimator on Z-scores from an experiment to identify which genes contribute to influenza replication in \textit{Drosophila}, described by Hao et al. \yrcite{hao2008drosophila}. The data, available in our supplementary material, consisted of Z-scores from two replicates for each of 13,071 genes. Figure \ref{fig:fly}(a) shows the empirical distribution of the 13,071 averaged Z-scores, which are the observations $X_i$. The theoretically motivated distribution $X_i \sim \mathcal{N}(\mu, \tfrac{1}{2})$ is a poor fit to this data, perhaps due to undocumented pre-processing steps not annotated in the dataset, so we began by estimating the variance of these observations. We found that $\sigma^2 = \tfrac{1}{4}$ provided a good fit to the data; we used this value for the rest of our computations. Testing for significant effect sizes using Bonferroni correction at the 0.05 level (critical value shown in Figure \ref{fig:fly}(a)) resulted in 83 discoveries, representing 0.6\% of genes. Given the low number of replicates performed in this experiment, we might suspect that there are more discoveries with smaller effect sizes.

Figure \ref{fig:fly}(b) shows the results of the plug-in MLE estimator $\widehat{\zeta}_{MLE}$, our estimator ($\widehat{\zeta}_{KS}$), and identification with Bonferroni correction ($\widehat{\zeta}_{FWER}$). The fitted MLE suggests that there are around 2600 discoveries to be made (20\% of genes), with most effect sizes around 1. As discussed previously, the MLE can overestimate the true number of discoveries and their effect sizes. Our conservative estimator guarantees that there are at least 1400 genes (11\% of all genes) with positive effects, including at least 190 genes (1.5\%) with effect size of at least 0.5. 
Our estimator generally detected more discoveries than $\widehat{\zeta}_{FWER}$, excluding the influence of the 23 genes (0.2\%) with $X_i > 3$. These observations fall into the \textit{sparse regime} \cite{donoho2004}, where our estimator has less power. 
These results could facilitate the design of an experiment to identify genes with effect sizes exceeding some threshold, or upper bound the cost of a sequential experiment to identify the top 200 genes.
\begin{figure}
    \centering 
    \includegraphics[trim={0cm 14cm 22cm 0cm},clip,width=0.9\columnwidth]{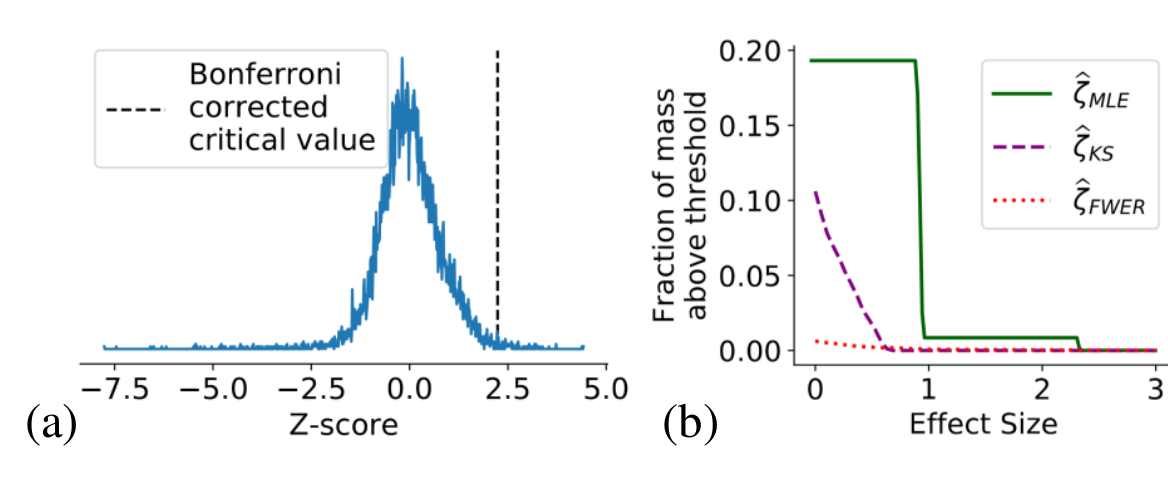}
    \caption{
    Two Z-scores were averaged for each of 13,071 \textit{Drosophila} genes. Even though (a) indicates that very few discoveries could be made, (b) shows that the MLE suggests, and our estimator confirms, many discoveries exist. We note that the MLE provides no guarantee of a conservative estimate, and may drastically overestimate the true fraction at any point.}
    \label{fig:fly}
\end{figure}

\section{Discussion and Future Work}\label{sec:future}
We have presented an algorithm that estimates the fraction of a mixing distribution that lies above some threshold, subject to the constraint that the estimate does not exceed the true fraction with high probability. Our algorithm can be generalized to the following template:
\begin{enumerate}
    \item Choose some distance metric on CDFs.
    \item Find the set $\mathcal{A}$ of ``plausible'' $F_{\nu}$ given observation $\widehat{F}_n$, which are the CDFs such that $d(F_{\nu}, \widehat{F}_n) < \tau_{\alpha, n}$.
    \item Choose $\tau_{\alpha, n}$ such that $\P(d(\widehat{F}_n, F_{\nu_*}) > \tau_{\alpha, n})\leq \alpha$. 
\end{enumerate}
Returning the minimum amount of mass above the threshold, over the set of plausible distributions $\mathcal{A}$, guarantees with high probability that we do not overestimate the true mass. Our algorithm instantiates this template with the $\ell_\infty$ norm as the distance metric. 

Another natural choice of metric is the likelihood of $\widehat{F}_n$ given $F_\nu$. In order to use this in our template, we need finite sample bounds on the likelihood of $\widehat{F}_n$ given $F_{\nu_*}$. Asymptotic versions of these results are worked out by Jiang \& Zhang \yrcite{glrt} for the case of Gaussian $X_i$, and it would be easy to extend these to finite sample bounds. Extensions of Jiang \& Zhang's results should show that the resulting estimator is optimal throughout the so-called \textit{sparse} and \textit{dense} regimes \cite{donoho2004}. Unfortunately, their value of $\tau_{\alpha, n}$ depends on unknown constants, and therefore it would require extensive simulations with thousands of repetitions for each $(\zeta, \gamma)$ pair to obtain a reliable estimate of the critical values for pilot study analysis. In addition, using the likelihood-based approach for a new distribution $f_\mu$ requires an entirely new proof of the high-probability bound on the likelihood ratio.

We believe it would be possible to modify our estimator in order to improve its performance in the sparse regime, where effects are large but rare. Our estimator uses the DKW inequality \cite{massart1990tight} to measure the plausibility of a latent distribution $\nu$, but the DKW inequality is not tight where the empirical CDF has low variance. Such points occur in the sparse regime, for example at $F_{\nu_*}(\tfrac{1}{2}\gamma_*)$. Applying a bound that uses variance information, such as an empirical Bernstein DKW (as surveyed in, e.g., \cite{howard2019sequential}), could address this lack of power.

Any algorithm for this problem necessarily makes some assumptions about the data generating process, otherwise all observations could come from the alternate, with $X\sim P_1$ having density $\widehat{F}_n$. As discussed in Section \ref{sec:relatedworks}, previous works have used various assumptions, such as unimodality of $\nu_*$ or ``purity'' of p-values around zero.
Our key assumption is the parametric form of $X_i$, under both the null and the alternate.
In practice, in order to decrease our reliance on this assumption, we could learn some parameter of the test statistic distribution from the observations themselves. This was our approach with the \textit{Drosophila} data, when we fit the variance $\sigma^2$ of the Z-score. This approach is also taken by Efron \yrcite{efron2007size}. Even more ambitiously, we could learn the conditional distribution $f(X|\mu)$ by fitting it jointly with the means $\mu$, and then use this conditional distribution to generate $F_\nu$ from a candidate distribution $\nu$. 
\newpage
\section*{Acknowledgements}
We thank Lalit Jain and Weihao Kong for their insightful comments, and we thank Swati Padmanabhan for valuable discussions about convex optimization. Jennifer Brennan is supported by an NSF Graduate Research Fellowship.

{
\bibliography{Bibliography}
\bibliographystyle{icml2020}
}

\appendix
\onecolumn

\section{Proofs of testing results for mixtures of two Gaussians}
\subsection{Useful Lemmas}
In this section of the appendix, we will provide proofs of the results for hypothesis testing - i.e., determining whether there is any mass above $0$ - in the two-spike Gaussian setting. This is the simplest setting we consider.

In order to extend our lower bound in Theorem \ref{lem:est-finiteSample} to the case of two Gaussian spikes, we need to compute the quantity
\begin{align*}
    \min_{\nu \in S(0,0)} ||F_\nu - F_{\nu_*}||_\infty
\end{align*}
for this setting.

\begin{lemma}\label{lem:sampleComplexityDetection}
Let $\nu_* = (1-\zeta_*)\delta_0 + \zeta_*\delta_{\gamma_*}$, and let $S$ be defined as in Eqn (\ref{eqn:defineS}). Then,
\begin{align*}
    \min_{\nu\in S(0,0)}||F_\nu - F_{\nu_*}||_\infty
    &\geq \zeta_*\left( \Phi_\sigma(\tfrac{1}{2}\gamma_*) - \Phi_\sigma(-\tfrac{1}{2}\gamma_*) \right)
\end{align*}
where $\Phi_\sigma$ is the CDF of the normal distribution $\mathcal{N}(0, \sigma^2)$. Furthermore, if $\gamma_* < \sigma$, then
\begin{align*}
    \min_{\nu\in S(0,0)}||F_\nu - F_{\nu_*}||_\infty
    &\geq\frac{23\zeta_*\gamma_*}{24\sigma\sqrt{2\pi}}
\end{align*}
\end{lemma}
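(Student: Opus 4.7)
The plan is to follow the strategy already indicated in the proof sketch of Corollary~\ref{cor:est-finiteSampleGauss}: lower bound the $\ell_\infty$ distance by its value at a single well-chosen point $t=\tfrac{1}{2}\gamma_*$, and then exploit the structure of $\nu_*$ and the constraint set $S(0,0)$ to evaluate the two resulting CDF values. For any $\nu$,
$$||F_\nu - F_{\nu_*}||_\infty \geq F_\nu(\tfrac{1}{2}\gamma_*) - F_{\nu_*}(\tfrac{1}{2}\gamma_*),$$
so it suffices to compute $F_{\nu_*}(\tfrac{1}{2}\gamma_*)$ exactly and to lower bound $\min_{\nu \in S(0,0)} F_\nu(\tfrac{1}{2}\gamma_*)$.

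Evaluating $F_{\nu_*}(\tfrac{1}{2}\gamma_*)$ is a direct plug-in: since $\nu_* = (1-\zeta_*)\delta_0 + \zeta_*\delta_{\gamma_*}$ and $X\mid \mu \sim \mathcal{N}(\mu,\sigma^2)$, one obtains $F_{\nu_*}(\tfrac{1}{2}\gamma_*) = (1-\zeta_*)\Phi_\sigma(\tfrac{1}{2}\gamma_*) + \zeta_*\Phi_\sigma(-\tfrac{1}{2}\gamma_*)$. For the minimum, the constraint $\nu \in S(0,0)$ forces $\nu$ to place all its mass on $(-\infty,0]$, so by conditioning on $\mu$,
$$F_\nu(\tfrac{1}{2}\gamma_*) = \int_{-\infty}^{0} \Phi_\sigma(\tfrac{1}{2}\gamma_* - \mu)\, d\nu(\mu) \geq \Phi_\sigma(\tfrac{1}{2}\gamma_*),$$
because $\mu \leq 0$ together with the monotonicity of $\Phi_\sigma$ implies $\Phi_\sigma(\tfrac{1}{2}\gamma_* - \mu) \geq \Phi_\sigma(\tfrac{1}{2}\gamma_*)$; equality holds at $\nu=\delta_0$. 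Subtracting, the $(1-\zeta_*)\Phi_\sigma(\tfrac{1}{2}\gamma_*)$ contributions combine and the remainder is exactly $\zeta_*\bigl(\Phi_\sigma(\tfrac{1}{2}\gamma_*) - \Phi_\sigma(-\tfrac{1}{2}\gamma_*)\bigr)$, which is the first inequality.

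For the second inequality, I would rewrite the CDF difference as an integral of the density,
$$\Phi_\sigma(\tfrac{1}{2}\gamma_*) - \Phi_\sigma(-\tfrac{1}{2}\gamma_*) = \int_{-\gamma_*/2}^{\gamma_*/2} \frac{1}{\sigma\sqrt{2\pi}}\, e^{-x^2/(2\sigma^2)}\, dx,$$
and apply the elementary bound $e^{-u} \geq 1 - u$ with $u=x^2/(2\sigma^2)$. A single elementary integration yields $\frac{\gamma_*}{\sigma\sqrt{2\pi}}\bigl(1 - \gamma_*^2/(24\sigma^2)\bigr)$. Under the hypothesis $\gamma_* < \sigma$ the parenthesized factor is at least $23/24$, so multiplying by $\zeta_*$ produces the claimed $\frac{23\zeta_*\gamma_*}{24\sigma\sqrt{2\pi}}$.

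There is no substantive obstacle here: the only piece requiring a small argument is identifying $\delta_0$ as the minimizer, which is immediate from the monotonicity of $\Phi_\sigma$; the constant $23/24$ is just an artifact of using the trivial quadratic lower bound on $e^{-u}$. A sharper Taylor remainder would improve the constant but would not change the qualitative form of the bound.
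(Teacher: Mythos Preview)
Your proposal is correct and follows essentially the same approach as the paper: evaluate the $\ell_\infty$ norm at $t=\tfrac{1}{2}\gamma_*$, use the support constraint $\mu\le 0$ together with monotonicity of $\Phi_\sigma$ to lower bound $F_\nu(\tfrac{1}{2}\gamma_*)$ by $\Phi_\sigma(\tfrac{1}{2}\gamma_*)$, plug in the exact value of $F_{\nu_*}(\tfrac{1}{2}\gamma_*)$, and then bound the resulting CDF difference via the quadratic lower bound on the Gaussian density. The only cosmetic difference is that you invoke $e^{-u}\ge 1-u$ directly, whereas the paper reaches the identical inequality $f(x)\ge \frac{1}{\sigma\sqrt{2\pi}}\bigl(1-\tfrac{x^2}{2\sigma^2}\bigr)$ via a second-order Taylor expansion with Lagrange remainder; both routes produce the same constant $23/24$.
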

\begin{proof}
First, we lower bound this minimax problem by a minimization at a specific point, $t=\tfrac{1}{2}\gamma_*$,
\begin{align*}
    \min_{\nu\in S(0,0)} ||F_\nu - F_{\nu_*}||_\infty
    &= \min_{\nu\in S(0,0)} \sup_{t\in\mathbb{R}} |F_\nu(t) - F_{\nu_*}(t)|\\
    &\geq \min_{\nu\in S(0,0)} |F_\nu\left(\tfrac{1}{2}\gamma_*\right) - F_{\nu_*}\left(\tfrac{1}{2}\gamma_*\right)|\\
    &\geq \min_{\nu\in S(0,0)} F_\nu\left(\tfrac{1}{2}\gamma_*\right) - F_{\nu_*}\left(\tfrac{1}{2}\gamma_*\right)
\end{align*}
Next, we lower bound $F_\nu(\tfrac{1}{2}\gamma_*)$. By definition,
\begin{align*}
    F_\nu (\tfrac{1}{2}\gamma_*) 
    &= \int_{-\infty}^\infty \nu(x) \Phi_\sigma(\tfrac{1}{2}\gamma_*-x)dx\\
    &\geq \Phi(\tfrac{1}{2}\gamma_*)
\end{align*}
where the inequality follows by the constraint $\nu\in S(0,0)$, so $\nu$ can have no mass above $0$. We use this bound, as well as the exact value of $F_{\nu_*}\left(\tfrac{1}{2}\gamma_*\right)$, to lower bound the minimum,
\begin{align*}
    \min_{\nu\in S(0,0)} ||F_\nu - F_{\nu_*}||_\infty
    &\geq \Phi_\sigma(\tfrac{1}{2}\gamma_*) - (1-\zeta_*)\Phi_\sigma(\tfrac{1}{2}\gamma_*) - \zeta_*\Phi_\sigma(-\tfrac{1}{2}\gamma_*)\\
    &= \zeta_*\left(\Phi_\sigma(\tfrac{1}{2}\gamma_*) - \Phi_\sigma(-\tfrac{1}{2}\gamma_*) \right)
\end{align*}
which proves the first claim. For the second claim, when $\gamma_* < \sigma$, we use the second order Taylor series approximation to the Gaussian density. We have
\begin{align}
    \Phi_\sigma(\tfrac{1}{2}\gamma_*) - \Phi_\sigma(-\tfrac{1}{2}\gamma_*) &= 2\P(0 \leq X \leq \tfrac{1}{2}\gamma_*)\label{eqn:prob-aux}
\end{align}
for $X\sim\mathcal{N}(0,\sigma^2)$. Taking a quadratic approximation to the normal density yields
\begin{align*}
    f(x) &= \frac{1}{\sigma\sqrt{2\pi}} \left(1 + \frac{1}{2\sigma^2}e^{-c^2/2\sigma^2}\left(\frac{c^2}{\sigma^2} - 1\right) x^2\right)
\end{align*}
for some $c\in[0, \gamma_*]$. This is minimized for $c=0$, which gives us
\begin{align*}
    f(x) \geq \frac{1}{\sigma\sqrt{2\pi}} \left(1 - \frac{1}{2\sigma^2}x^2\right)
\end{align*}
We can now lower bound (\ref{eqn:prob-aux}) as follows,
\begin{align*}
    \P(0 \leq X \leq \tfrac{1}{2}\gamma_*) 
    &=  \frac{1}{\sigma\sqrt{2\pi}} \int_0^{\tfrac{1}{2}\gamma_*}  e^{-x^2/2\sigma^2} dx\\
    &\geq \frac{1}{\sigma\sqrt{2\pi}} \int_0^{\tfrac{1}{2}\gamma_*}  (1 - \frac{1}{2\sigma^2}x^2) dx\\
    &= \frac{1}{\sigma\sqrt{2\pi}} (x - \tfrac{1}{6\sigma^2}x^3)|_{x=0}^{x=\tfrac{1}{2}\gamma_*}\\
    &= \frac{1}{\sigma\sqrt{2\pi}}(\tfrac{1}{2}\gamma_* - \tfrac{1}{48\sigma^2} \gamma_*^3)\\
    &= \frac{1}{2\sigma\sqrt{2\pi}} \gamma_*\left(1 - \frac{\gamma_*^2}{24\sigma^2}\right)
\end{align*}
Since $\gamma_* < \sigma$, this is always positive. We can also use that fact to bound this by
\begin{align*}
    \P(0 \leq X \leq \tfrac{1}{2}\gamma_*) 
    &\geq \frac{1}{2\sigma\sqrt{2\pi}} \gamma_* \left(\frac{23}{24}\right)
\end{align*}
so that
\begin{align*}
    \Phi_\sigma(\tfrac{1}{2}\gamma_*) - \Phi_\sigma(-\tfrac{1}{2}\gamma_*) 
    \geq
    \frac{23}{24\sigma\sqrt{2\pi}} \gamma_*
\end{align*}
which completes the proof.
\end{proof}

Next, we state and prove another lemma that will be useful later. This result bounds the probability that our estimator $\widehat{\zeta}(0)$ returns $0$. This is related to the probability of detecting the existence of alternate hypotheses.
\begin{lemma}\label{lem:type2errLowerBoundZetaZero}
Let $\nu_* = (1-\zeta_*)\delta_0 + \zeta_*\delta_{\gamma_*}$, and let $\widehat{\zeta}(0)$ be our estimator from Eqn (\ref{eqn:estimator}) evaluated at $0$. Then
\begin{align}
    \P(\widehat{\zeta}(0) = 0) &\leq 2\exp\left( -2n \left(\zeta_*\left(\Phi(\tfrac{1}{2}\gamma_*)-\Phi(-\tfrac{1}{2}\gamma_*)\right) - \tau_{\alpha, n}\right)^2 \right)
\end{align}
\end{lemma}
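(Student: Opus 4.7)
The plan is to translate the event $\{\widehat{\zeta}_n(0)=0\}$ into a statement about the $\ell_\infty$ distance between the empirical CDF $\widehat{F}_n$ and the true CDF $F_{\nu_*}$, and then invoke the DKW inequality to bound that deviation probability.

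First, I would unpack the definition of the estimator at $\gamma=0$. By construction, $\widehat{\zeta}_n(0)=0$ precisely when the distribution $\nu\equiv\delta_0$ (or more generally any $\nu\in S(0,0)$) lies inside the $\ell_\infty$ ball of radius $\tau_{\alpha,n}$ around $\widehat{F}_n$; equivalently, $\{\widehat{\zeta}_n(0)=0\}$ is the event that
\[
\min_{\nu\in S(0,0)} \|\widehat{F}_n - F_\nu\|_\infty \;\leq\; \tau_{\alpha,n}.
\]
Next, on this event I would apply the triangle inequality: for the minimizing $\nu$,
\[
\min_{\nu\in S(0,0)}\|F_{\nu_*}-F_\nu\|_\infty \;\leq\; \|F_{\nu_*}-\widehat{F}_n\|_\infty \,+\, \|\widehat{F}_n-F_\nu\|_\infty \;\leq\; \|F_{\nu_*}-\widehat{F}_n\|_\infty + \tau_{\alpha,n}.
\]
Combining this with the lower bound on $\min_{\nu\in S(0,0)}\|F_{\nu_*}-F_\nu\|_\infty$ from Lemma~\ref{lem:sampleComplexityDetection} (namely $\zeta_*(\Phi_\sigma(\tfrac{1}{2}\gamma_*)-\Phi_\sigma(-\tfrac{1}{2}\gamma_*))$) yields the deterministic implication
\[
\{\widehat{\zeta}_n(0)=0\} \;\subseteq\; \Bigl\{\|F_{\nu_*}-\widehat{F}_n\|_\infty \;\geq\; \zeta_*\bigl(\Phi_\sigma(\tfrac{1}{2}\gamma_*)-\Phi_\sigma(-\tfrac{1}{2}\gamma_*)\bigr)-\tau_{\alpha,n}\Bigr\}.
\]

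Finally, I would apply the DKW inequality \cite{massart1990tight} to the right-hand event. Provided the threshold $\zeta_*(\Phi_\sigma(\tfrac{1}{2}\gamma_*)-\Phi_\sigma(-\tfrac{1}{2}\gamma_*))-\tau_{\alpha,n}$ is nonnegative, DKW immediately gives the claimed bound of $2\exp\bigl(-2n(\zeta_*(\Phi(\tfrac{1}{2}\gamma_*)-\Phi(-\tfrac{1}{2}\gamma_*))-\tau_{\alpha,n})^2\bigr)$; when the threshold is negative the inequality is vacuous since the right-hand side exceeds $1$.

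There is no real obstacle here beyond the small matter of checking that the triangle-inequality step is used in the right direction and that the minimizer in $S(0,0)$ exists (or taking an infimum and a limiting argument if needed). The main substantive content, the lower bound on $\min_{\nu\in S(0,0)}\|F_{\nu_*}-F_\nu\|_\infty$, is already established in Lemma~\ref{lem:sampleComplexityDetection}, so the proof reduces to a short two-line deterministic inclusion followed by a direct application of DKW.
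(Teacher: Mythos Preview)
Your proposal is correct and follows essentially the same route as the paper: rewrite $\{\widehat{\zeta}_n(0)=0\}$ as $\min_{\nu\in S(0,0)}\|\widehat F_n-F_\nu\|_\infty\le\tau_{\alpha,n}$, apply the triangle inequality to push this onto $\|\widehat F_n-F_{\nu_*}\|_\infty$, invoke Lemma~\ref{lem:sampleComplexityDetection} for the deterministic lower bound on $\min_{\nu\in S(0,0)}\|F_{\nu_*}-F_\nu\|_\infty$, and finish with DKW. Your added remarks on the vacuous case and on existence of the minimizer are sensible but not needed for the paper's argument.
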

\begin{proof}
We begin by substituting in the definition of our estimator. Recall the definition of $S$, from Eqn \eqref{eqn:defineS}.
\begin{align}
    \mathbb{P}\left( \widehat{\zeta}(0) = 0 \right) &= \mathbb{P}\left( \max\left( \zeta \geq 0 : \min_{\nu\in S(0, 0)} ||\widehat{F}_n - F_\nu ||_\infty > \tau_{\alpha, n}\right) = 0\right)\\
    &= \mathbb{P}\left( \min_{\nu\in S(0, 0)} ||\widehat{F}_n - F_\nu ||_\infty  \leq \tau_{\alpha, n}\right)\\
    &\leq \mathbb{P}\left( \min_{\nu\in S(0, 0)} ||F_\nu - F_{\nu_*} ||_\infty - ||\widehat{F}_n - F_{\nu_*}||_\infty \leq \tau_{\alpha, n}\right)\\
    &=  \mathbb{P}\left( ||\widehat{F}_n - F_{\nu_*}||_\infty \geq \min_{\nu\in S(0, 0)} ||F_\nu - F_{\nu_*} ||_\infty - \tau_{\alpha, n}\right)
\end{align}
Next, we apply the DKW inequality, along with Lemma \ref{lem:sampleComplexityDetection}
\begin{align}
    \mathbb{P}\left( \widehat{\zeta}(0) = 0 \right) &\leq 2\exp\left( -2n \left(\min_{\nu\in S(0, 0)} ||F_\nu - F_{\nu_*} ||_\infty - \tau_{\alpha, n}\right)^2 \right)\\
    &= 2\exp\left( -2n \left(\zeta_*\left(\Phi(\tfrac{1}{2}\gamma_*)-\Phi(-\tfrac{1}{2}\gamma_*)\right) - \tau_{\alpha, n}\right)^2 \right)
\end{align}
This concludes the proof.
\end{proof}

\subsection{Proof of Corollary \ref{cor:est-finiteSampleGauss}}
\begin{proof} \textbf{Corollary \ref{cor:est-finiteSampleGauss}} This corollary follows immediately from Theorem \ref{lem:est-finiteSample} with $\alpha = \delta$, $\varepsilon = \zeta_{\nu_*}(0) = \zeta_*$ and $\min_{\nu\in S(0,0)}||F_\nu - F_{\nu_*}||_\infty$ bounded as in Lemma \ref{lem:sampleComplexityDetection}.
\end{proof}

\subsection{Comparison to finite-sample lower bounds}

In this section, we prove the statements in Section \ref{sec:globalNull}.

To show that the sample complexity for our hypothesis test matches the lower bound of \cite{MostBiasedCoin} up to constants, it is helpful to write our sample complexity as a maximum of two quantities.


\begin{lemma}\label{lem:MBC-algebra}
Let $\delta \leq 1$, $\zeta_* \leq 1$, and $\sigma^2 > \gamma_*^2$. Then
\begin{align*}
    \frac{16\sigma^2\log\left(\frac{2}{\delta}\right)}{\zeta_*^2\gamma_*^2}
    &= \max\left\{ \frac{1}{\zeta_*}, \frac{16\sigma^2\log\left(\frac{2}{\delta}\right)}{\zeta_*^2\gamma_*^2}\right\}
\end{align*}
\end{lemma}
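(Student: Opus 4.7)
The plan is to show that the max on the right reduces to its second argument, which is exactly the identity we want. Concretely, the equality
\[
\frac{16\sigma^2\log(2/\delta)}{\zeta_*^2\gamma_*^2} \;=\; \max\!\left\{\frac{1}{\zeta_*},\,\frac{16\sigma^2\log(2/\delta)}{\zeta_*^2\gamma_*^2}\right\}
\]
holds if and only if the second argument dominates, i.e.\ if and only if
\[
\frac{16\sigma^2\log(2/\delta)}{\zeta_*^2\gamma_*^2} \;\geq\; \frac{1}{\zeta_*}.
\]
So the entire task is to verify this single inequality under the hypotheses $\delta\le 1$, $\zeta_*\le 1$, and $\sigma^2>\gamma_*^2$.

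I would clear denominators (using that $\zeta_*>0$ and $\gamma_*>0$, which are implicit since these quantities appear in denominators of the theorem of interest) and reduce the claim to
\[
16\,\sigma^2\log(2/\delta) \;\geq\; \zeta_*\,\gamma_*^2.
\]
From $\delta\leq 1$ we get $\log(2/\delta)\geq \log 2$, from $\sigma^2>\gamma_*^2$ we get $16\sigma^2\log(2/\delta) > 16\gamma_*^2\log 2$, and numerically $16\log 2 > 11 > 1 \geq \zeta_*$, so
\[
16\sigma^2\log(2/\delta) \;>\; 16\gamma_*^2\log 2 \;>\; \gamma_*^2 \;\geq\; \zeta_*\gamma_*^2,
\]
which is the desired bound. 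The three assumptions are each used exactly once in this chain, which is a good sanity check that the lemma is stated tightly.

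There is no real obstacle: the statement is essentially an algebraic bookkeeping lemma whose role is to let the authors compare their Corollary \ref{cor:est-finiteSampleGauss} directly to the two-term lower bound $\max\{(1-\delta)/\zeta_*,\, \sigma^2\log(1/\delta)/(2\zeta_*^2\gamma_*^2)\}$ from Jamieson et al. The only thing to be careful about is that the constant $16\log 2$ is comfortably bigger than $1$, so no additional slack needs to be built into the hypotheses; the stated conditions $\delta\le 1$, $\zeta_*\le 1$, $\sigma^2>\gamma_*^2$ suffice.
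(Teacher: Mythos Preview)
Your proposal is correct and follows essentially the same approach as the paper: both arguments show that the second term of the max dominates the first by chaining the three hypotheses $\sigma^2>\gamma_*^2$, $\delta\le 1$, and $\zeta_*\le 1$ together with the numerical fact $16\log 2>1$. The only cosmetic difference is that you clear denominators first and bound $16\sigma^2\log(2/\delta)\ge \zeta_*\gamma_*^2$, whereas the paper keeps the fraction form and writes the chain $\frac{16\sigma^2\log(2/\delta)}{\zeta_*^2\gamma_*^2} > \frac{16\log(2/\delta)}{\zeta_*^2} \ge \frac{16\log 2}{\zeta_*^2} \ge \frac{1}{\zeta_*^2} \ge \frac{1}{\zeta_*}$.
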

\begin{proof}
We will prove this by showing that the first term of the max is always smaller than the second. We have
\begin{align*}
    \frac{16\sigma^2\log\left(\frac{2}{\delta}\right)}{\zeta_*^2\gamma_*^2}
    &> \frac{16\log\left(\frac{2}{\delta}\right)}{\zeta_*^2}\\
    &\geq \frac{16\log\left(2\right)}{\zeta_*^2}\\
    &\geq \frac{1}{\zeta_*^2}\\
    &\geq \frac{1}{\zeta_*}
\end{align*}
which concludes the proof.
\end{proof}
We see that our test matches the lower bound given in \cite{MostBiasedCoin} up to constants, as long as $\delta$ is bounded away from $1$ (so that the first term in the lower bound cannot be arbitrarily small).

\subsection{Proof of Proposition \ref{prop:prescreen}}
\begin{proof} \textbf{Proposition \ref{prop:prescreen}}
Recall that $m$ denotes the number of genes we will test.
By Corollary \ref{cor:est-finiteSampleGauss} with $\alpha=\delta$, we have that
\begin{align*}
    m \geq \frac{\log(\frac{2}{\delta})}{\zeta_*^2 (\Phi_\sigma(\frac{1}{2}\gamma_*) - \Phi_\sigma(-\frac{1}{2}\gamma_*))^2}
\end{align*}
implies
\begin{align*}
    \P(\zeta_*-\widehat{\zeta}_m > 0) < \delta.
\end{align*}
We consider two cases. First, if $\gamma_* < \sigma$, then Corollary \ref{cor:est-finiteSampleGauss} states that the sample size must be at least
\begin{align*}
    m \geq  \frac{16\sigma^2\log(\frac{2}{\delta})}{\varepsilon^2\gamma_*^2}
\end{align*}
to guarantee $\widehat{\zeta}(0) > 0$ with high probability. We can use the relationship $\sigma^2 = \tfrac{1}{t}$ to get the first requirement of the lemma.
\begin{align*}
    \gamma_* &\geq \sqrt{\frac{16\log(\frac{2}{\delta})}{\zeta_*^2 m t}}\\
    &= \sqrt{\frac{16\log(\frac{2}{\delta})}{\zeta_*^2 B}}
\end{align*}

The second requirement comes from the case of large $\gamma_*$. When $\gamma_*\geq \sigma$, we can use a table of values of $\Phi$ to find
\begin{align*}
    \Phi_\sigma(\tfrac{1}{2}\gamma_*) - \Phi_\sigma(-\tfrac{1}{2}\gamma_*)
    &= 1 - 2\Phi_\sigma(-\tfrac{1}{2}\gamma_*)\\
    &\geq 1 - 2\Phi_\sigma(-\tfrac{1}{2}\sigma)\\
    &= 1 - 2\Phi_1(-\tfrac{1}{2})\\
    &\geq \frac{3}{10}
\end{align*}
We see that in this case, as long as 
\begin{align}
    m \geq  \frac{4\log(\frac{2}{\delta})}{\zeta_*^2} > \frac{200\log(\frac{2}{\delta})}{81\zeta_*^2}
\end{align}
(from Corollary \ref{cor:est-finiteSampleGauss}), then we have $\widehat{\zeta}(0) > 0$ with high probability. If both of these requirements hold, then we have $\widehat{\zeta}_n(0) > 0$ with high probability, regardless of the value of $\gamma_*$.
\end{proof}

Finally, we prove the statements in Remark \ref{remark:nTimesBetter}: that identifying these alternate hypotheses requires at least $nB$ total replicates (while our estimator only takes $B$ total replicates to count them) as long as the budget satisfies $B = O(n)$, and the fraction of alternates is constant.

\begin{proof}\textbf{Remark \ref{remark:nTimesBetter}}
Identifying the alternates takes at least order $n\gamma_*^{-2}\log(1/\delta)$, even without correcting for multiple testing, because we require at least order $\gamma_*^{-2}\log(1/\delta)$ samples to estimate the mean of a Gaussian within additive error $\gamma_*$, with probability $1-\delta$.

Our test, by contrast, takes the larger of 
\begin{align*}
    B = O\left( \max\left\{ \log(1/\delta)\zeta_*^{-2}\gamma_*^{-2}, t\gamma_*^{-2} \right\} \right).
\end{align*}
If $\zeta_*$ is constant, and if $t$ is constant (which occurs when $B = O(n)$ and we allocate replicates equally across all $n$ hypotheses), then our estimator can count half of the alternates using
\begin{align*}
    B = O\left( \log(1/\delta)\gamma_*^{-2} \right),
\end{align*}
which is $n$ times fewer than the number of samples required to identify the discoveries.
\end{proof}

\section{Proofs of Corollary \ref{cor:estimation-finiteSampleGauss} and Lemma \ref{lem:finiteSampleEstBound} (Estimation results for mixtures of two Gaussians)}
In this section, we provide proofs of the upper and lower bounds for estimating the amount of mass above the threshold $0$ when the data is drawn from a mixture of two Gaussians.

\subsection{Estimation upper bound (Corollary \ref{cor:estimation-finiteSampleGauss})}

Corollary \ref{cor:estimation-finiteSampleGauss} is a consequence of Lemma \ref{lem:boundingLInfForEstBound}, which bounds the minimum $\ell_\infty$ distance between  $F_{\nu_*}$ and any CDF with less than $\tfrac{1}{2}\zeta_*$ mass above zero, and Theorem \ref{lem:est-finiteSample}, which relates this quantity to the sample complexity of estimating the true amount of mass with accuracy $\varepsilon = \tfrac{1}{2}\zeta_*$. We begin by stating this key lemma, and then we establish a series of technical lemmas necessary for the proof.

The key proof idea is that the quantity bounded in Lemma \ref{lem:boundingLInfForEstBound} is the solution to a convex optimization problem, and we can identify the optimal point:
\begin{align}
    \nu_{OPT} &= \left(1-\tfrac{1}{2}\zeta_*\right)\delta_0 + \tfrac{1}{2}\zeta_*\delta_{2\gamma_*}. \label{eqn:optimalNu-estimation}
\end{align}
In this section, we follow the conventions of Boyd \& Vandenberghe \yrcite{boyd2004convex} (p 127) to describe the optimal value and optimal point of an optimization problem. In particular, the \textit{optimal value} of a minimization problem is the minimum value of the objective function over the constraint set, while an \textit{optimal point} is a point $\nu_{OPT}$ in the constraint set that achieves the optimal value. We depart from the notation of Boyd \& Vandenberghe in one significant respect: we denote the optimal point by $\nu_{OPT}$, and not $\nu_*$ (as we will be using the subscript $*$ to denote the reference distribution in our optimization problem).

The first task is to establish the optimality of this solution. We proceed using the standard machinery of convex optimization, as described by Boyd \& Vandenberghe \yrcite{boyd2004convex}: our problem satisfies strong duality, so we exhibit a dual feasible solution with corresponding primal point given by \eqref{eqn:optimalNu-estimation}. Once we have established optimality of this guess, we use several Taylor series approximations to bound the optimal value by a polynomial in $\gamma_*$ and $\zeta_*$. After establishing this series of technical lemmas, we will prove Lemma \ref{lem:boundingLInfForEstBound}.

\begin{lemma}\label{lem:boundingLInfForEstBound}
Let $X_i \sim \mathcal{N}(\mu_i, 1)$ and $\mu_i \sim \nu_*$, with $\nu_* = (1-\zeta_*)\delta_0 + \zeta_*\delta_{\gamma_*}$. Let $\zeta_* > 0$ and $\gamma_*\in (0, 1]$. Then
\begin{align}
    \min_{\nu : \P(\mu > 0) < \frac{1}{2}\zeta_*} ||F_\nu - F_{\nu_*}||_\infty \geq 0.01 \gamma_*^2 \zeta_* \label{eqn:estimation-optimizationProblem}
\end{align}
\end{lemma}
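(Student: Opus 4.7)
The quantity to lower bound is the optimal value of the infinite-dimensional convex program
\[
\min_{\nu}~\sup_{t\in\mathbb{R}} \bigl| F_\nu(t) - F_{\nu_*}(t) \bigr|,
\]
ranging over probability measures $\nu$ on $\mathbb{R}$ with $\nu((0,\infty)) \leq \tfrac{1}{2}\zeta_*$, where $F_\nu(t) = \int \Phi(t-x)\,d\nu(x)$. The excerpt already proposes the candidate minimizer $\nu_{OPT} = (1-\tfrac{1}{2}\zeta_*)\delta_0 + \tfrac{1}{2}\zeta_*\delta_{2\gamma_*}$; my plan is to (i) justify this guess, (ii) prove its optimality, and (iii) evaluate the objective there.

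The guess is natural for the following reason. The constraint forces us to discard half of the $\zeta_*$ mass originally at $\gamma_*$, and the remaining $\tfrac{\zeta_*}{2}$ should be placed to preserve the first moment, since $\tfrac{\zeta_*}{2}\cdot 2\gamma_* = \zeta_*\cdot\gamma_*$. This makes
\[
F_{\nu_{OPT}}(t) - F_{\nu_*}(t) = \tfrac{\zeta_*}{2}\bigl[\Phi(t) - 2\Phi(t-\gamma_*) + \Phi(t-2\gamma_*)\bigr],
\]
a symmetric second-order centered difference of $\Phi$ with step $\gamma_*$. By design this has leading order $\gamma_*^2$ rather than $\gamma_*$, which is the scaling targeted by the bound.

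For the optimality proof, I would use Lagrangian duality. The primal is a semi-infinite linear program satisfying Slater's condition (take $\nu=\delta_{-1}$ with sufficiently large slack $\eta$), so strong duality holds. The dual variables are two nonnegative measures $\lambda^\pm(dt)$ on $\mathbb{R}$ (for the two-sided $\ell_\infty$ constraint), a scalar $\mu\geq 0$ (for the mass-above-zero constraint), and a scalar $\kappa\in\mathbb{R}$ (for the unit-mass constraint). I would propose an atomic dual solution with $\lambda^\pm$ each supported at one of the two extremal test points $t^\pm$ where the primal sup is attained---which from $F_{\nu_{OPT}}-F_{\nu_*}\approx \tfrac{\zeta_*}{2}\gamma_*^2\,\phi'(t-\gamma_*)$ lie near $\gamma_*\pm 1$, at the extrema of $\phi'$---and then solve for $(\mu,\kappa)$ using complementary slackness at the two support points $x=0$ and $x=2\gamma_*$ of $\nu_{OPT}$. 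What remains is the dual feasibility inequality
\[
\int \Phi(t-x)\,[\lambda^+(dt) - \lambda^-(dt)] + \mu\,\mathbf{1}\{x>0\} + \kappa \geq 0 \quad \forall x\in\mathbb{R},
\]
which I would verify using monotonicity, symmetry and unimodality properties of $\Phi$ and $\phi$. Establishing this inequality \emph{globally} in $x$ (not just near the support of $\nu_{OPT}$) is the main technical obstacle.

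Finally, to quantify $||F_{\nu_{OPT}}-F_{\nu_*}||_\infty$, I Taylor-expand $\Phi$ around $t-\gamma_*$. The standard centered-second-difference remainder gives $\Phi(t)-2\Phi(t-\gamma_*)+\Phi(t-2\gamma_*) = \gamma_*^2 \phi'(t-\gamma_*) + R$ with $|R|\leq \tfrac{\gamma_*^4}{12}\sup_x|\phi'''(x)|$. Taking $t=\gamma_*-1$ yields $\phi'(-1) = \phi(1)\approx 0.242$; since $\gamma_*\leq 1$ and $\sup_x|\phi'''(x)|=O(1)$, the remainder is a small fraction of the main term. Combining these ingredients gives $||F_{\nu_{OPT}} - F_{\nu_*}||_\infty \geq 0.01\,\zeta_*\gamma_*^2$, which together with the optimality of $\nu_{OPT}$ completes the proof.
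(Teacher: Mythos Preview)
Your proposal is correct and follows essentially the same route as the paper: same candidate $\nu_{OPT}$, same Lagrangian-duality certificate for optimality, and a Taylor-based lower bound at a test point near $\gamma_*-1$. The paper resolves what you flag as ``the main technical obstacle'' by computing the two extremal test points $t_\pm$ in closed form (they solve a quadratic in $e^{t\gamma_*}$, giving $t_\pm=\tfrac{3}{2}\gamma_*+\tfrac{1}{\gamma_*}\log(1\mp\sqrt{1-e^{-\gamma_*^2}})$), writing down explicit dual variables, and verifying the global dual-feasibility inequality via two monotonicity lemmas about ratios of shifted Gaussian CDF differences; your centered-second-difference computation for the final constant is a clean variant of the paper's Taylor expansion of $\Phi$ around $-1$.
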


In order to show that our solution \eqref{eqn:optimalNu-estimation} to the optimization problem \eqref{eqn:estimation-optimizationProblem} is optimal, we need to demonstrate certain properties of the subgradient of the objective function. In order to analyze the subgradient of an $\ell_\infty$ norm, it is first necessary to characterize the value(s) of $t$ for which the supremum in this sup-norm is attained. Our first technical lemma computes these maximizing values of $t$ when $\nu$ is our conjectured optimal value, $\nu_{OPT}$.
\begin{lemma}\label{lem:tPlusAndMinus}
Let $\gamma_* \in (0, 1]$ and $\zeta_* \in (0,1]$. Define
\begin{align*}
    f(t)
    &:= \left| \frac{1}{\sqrt{2\pi}}\int_{x=-\infty}^t (1-\tfrac{1}{2}\zeta_*) e^{-x^2/2} + \tfrac{1}{2}\zeta_* e^{-(x-2\gamma_*)^2/2} - (1-\zeta_*)e^{-x^2/2} - \zeta_* e^{-(x-\gamma_*)^2/2} dx \right|\\
    &:= \left| \tilde{f}(t)\right|
\end{align*}
Then
\begin{align*}
    \arg\sup_t f(t) &= \{ t_+, t_- \} 
\end{align*}
where
\begin{align*}
    t_+ &:= \tfrac{3}{2}\gamma_* + \tfrac{1}{\gamma_*}\log\left(1-\sqrt{1-e^{-\gamma_*^2}}\right)\\
    t_- &:= \tfrac{3}{2}\gamma_* + \tfrac{1}{\gamma_*}\log\left(1+\sqrt{1-e^{-\gamma_*^2}}\right)
\end{align*}
with $\text{sign}(\tilde{f}(t_+)) = 1$ and $\text{sign}(\tilde{f}(t_-)) = -1$.
\end{lemma}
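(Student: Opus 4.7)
The plan is to locate the critical points of $\tilde f$, then use the structure of $g := \sqrt{2\pi}\tilde f'$ and an antisymmetry argument to pin down the signs and the equality $|\tilde f(t_+)|=|\tilde f(t_-)|$. I begin with algebraic simplification: the $(1-\tfrac12\zeta_*)e^{-x^2/2}$ and $-(1-\zeta_*)e^{-x^2/2}$ terms collapse to $\tfrac12\zeta_*e^{-x^2/2}$, so
\[
g(x) \;=\; \tfrac{1}{2}\zeta_*\Bigl(e^{-x^2/2}+e^{-(x-2\gamma_*)^2/2}-2e^{-(x-\gamma_*)^2/2}\Bigr),
\]
and $\tilde f'(t) = g(t)/\sqrt{2\pi}$. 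Because the three Gaussian densities appear with weights summing to zero, $\int_{\mathbb R} g = 0$, so $\tilde f(\pm\infty)=0$, and therefore $\sup_t|\tilde f(t)|$ is attained at zeros of $g$.

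Next I solve $g(t)=0$. Dividing through by $e^{-(t-\gamma_*)^2/2}$ and simplifying the exponents gives $e^{-t\gamma_*+\gamma_*^2/2}+e^{t\gamma_*-3\gamma_*^2/2}=2$. Substituting $u:=e^{t\gamma_*-\gamma_*^2/2}$ turns this into the quadratic $e^{-\gamma_*^2}u^2-2u+1=0$, whose two roots $u = e^{\gamma_*^2}\bigl(1\pm\sqrt{1-e^{-\gamma_*^2}}\bigr)$ yield, after taking logs, exactly the two values $t_+$ (minus branch) and $t_-$ (plus branch) stated in the lemma. So $g$ has precisely two zeros.

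To establish signs, evaluate $g(\gamma_*)=\tfrac12\zeta_*(2e^{-\gamma_*^2/2}-2)<0$, while for $|x|$ large $g(x)/e^{-x^2/2}\to 1>0$ (dominant-term analysis on $e^{-x^2/2}$ vs.\ the shifted Gaussians); hence $g$ is positive on $(-\infty,t_+)$, negative on $(t_+,t_-)$, and positive again on $(t_-,\infty)$. Integrating, $\tilde f$ rises from $0$ to $\tilde f(t_+)>0$, falls to $\tilde f(t_-)<0$, then rises back to $0$, which establishes the sign claims and that $t_+,t_-$ are the only candidates for the sup of $|\tilde f|$.

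Finally, I use symmetry to show both points attain the supremum simultaneously. The map $x\mapsto 2\gamma_*-x$ permutes the three Gaussians ($0\leftrightarrow 2\gamma_*$ and $\gamma_*$ fixed), so $g(\gamma_*+y)=g(\gamma_*-y)$. Because $g$ has mean zero and is even about $\gamma_*$, each half-integral $\int_{-\infty}^{\gamma_*}g$ and $\int_{\gamma_*}^{\infty}g$ vanishes, which implies $\tilde f$ is antisymmetric about $\gamma_*$: $\tilde f(\gamma_*+y)=-\tilde f(\gamma_*-y)$. A direct computation gives $t_++t_-=3\gamma_*+\tfrac1{\gamma_*}\log e^{-\gamma_*^2}=2\gamma_*$, so $t_+$ and $t_-$ are reflections through $\gamma_*$, and antisymmetry yields $\tilde f(t_+)=-\tilde f(t_-)$. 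Thus $|\tilde f|$ attains its supremum at both points, and $\arg\sup_t f(t)=\{t_+,t_-\}$ with the claimed signs. The main obstacle is the algebraic manipulation leading to the quadratic in $u$ and matching the two roots to the correct labels $t_\pm$; the sign and symmetry steps are then essentially bookkeeping given the explicit formula for $g$.
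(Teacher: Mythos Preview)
Your proof is correct and follows essentially the same route as the paper: simplify the integrand, solve the resulting quadratic (in $e^{t\gamma_*}$ or, equivalently, in your $u=e^{t\gamma_*-\gamma_*^2/2}$) to obtain $t_\pm$, use the symmetry $g(\gamma_*+y)=g(\gamma_*-y)$ together with $t_++t_-=2\gamma_*$ to deduce $\tilde f(t_+)=-\tilde f(t_-)$. The only real difference is cosmetic: the paper establishes $\mathrm{sign}(\tilde f(t_+))=1$ by computing $\tilde f(0)\geq 0$ via convexity of $\Phi$ on $[0,\infty)$, whereas you read off the sign pattern of $g$ directly from $g(\gamma_*)<0$ and $g(x)>0$ for large $|x|$, which is arguably a cleaner way to get the same conclusion.
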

\begin{proof}
We begin by arguing that we only need to consider the points at which $\tilde{f}'(t) = 0$. To find the argsup, we examine the critical points of $f(t)$. Note that the critical points of $f(t)$ include all critical points of $\tilde{f}(t)$, as well as points at which $f(t)=0$. Since we are interested in finding the supremum of $f(t)$, and since $f(t) > 0$ for some $t$ (because the argument to the integral is not identically zero), we can discard any critical point at $f(t) = 0$. We conclude that it suffices to examine the critical points of $\tilde{f}(t)$.

We begin by noting that $\lim_{t\to\infty} \tilde{f}(t) = \lim_{t\to -\infty} \tilde{f}(t) = 0$, so the supremum is not found at extreme values of $t$. This means we only need to inspect the values of $t$ where the derivative $\tilde{f}'(t) = 0$. We compute this derivative,
\begin{align*}
    \tilde{f}'(t) &= \frac{d}{dt} \frac{1}{\sqrt{2\pi}}\int_{x=-\infty}^t (1-\tfrac{1}{2}\zeta_*) e^{-x^2/2} + \tfrac{1}{2}\zeta_* e^{-(x-2\gamma_*)^2} - (1-\zeta_*)e^{-x^2/2} - \zeta_* e^{-(x-\gamma_*)^2/2} dx\\
    &= (1-\tfrac{1}{2}\zeta_*) \tfrac{1}{\sqrt{2\pi}} e^{-t^2/2} + \tfrac{1}{2}\zeta_* \tfrac{1}{\sqrt{2\pi}} e^{-(t-2\gamma_*)^2} - (1-\zeta_*)\tfrac{1}{\sqrt{2\pi}} e^{-t^2/2} - \zeta_* \tfrac{1}{\sqrt{2\pi}} e^{-(t-\gamma_*)^2/2}\\
    &=  \tfrac{1}{2}\zeta_* \tfrac{1}{\sqrt{2\pi}} e^{-t^2/2} +\tfrac{1}{2}\zeta_* \tfrac{1}{\sqrt{2\pi}} e^{-(t-2\gamma_*)^2/2} - \zeta_* \tfrac{1}{\sqrt{2\pi}} e^{-(t-\gamma_*)^2/2}
\end{align*}
by the fundamental theorem of calculus. Next, we set the derivative to zero. Due to the specific coefficients in the definition of $\widetilde{f}(t)$, the derivative is quadratic in $t$ and can be solved exactly,
\begin{align*}
    0 
    &=  -\tfrac{1}{2}\zeta_* \tfrac{1}{\sqrt{2\pi}} e^{-t^2/2} -\tfrac{1}{2}\zeta_* \tfrac{1}{\sqrt{2\pi}} e^{-(t-2\gamma_*)^2/2} + \zeta_* \tfrac{1}{\sqrt{2\pi}} e^{-(t-\gamma_*)^2/2}\\
    &=  -\tfrac{1}{2} e^{-t^2/2} -\tfrac{1}{2} e^{-(t-2\gamma_*)^2/2} +  e^{-(t-\gamma_*)^2/2}\\
    &= e^{-\frac{1}{2}t^2} \left( -\tfrac{1}{2} -\tfrac{1}{2} e^{-(-4t\gamma_* + 4\gamma_*^2)/2} +  e^{-(-2t\gamma_* + \gamma_*^2)/2} \right)\\
    &= -\tfrac{1}{2} -\tfrac{1}{2} e^{-(-4t\gamma_* + 4\gamma_*^2)/2} +  e^{-(-2t\gamma_* + \gamma_*^2)/2} \\
    &= -\tfrac{1}{2} -\tfrac{1}{2} e^{2t\gamma_* - 2\gamma_*^2} +  e^{t\gamma_* - \tfrac{1}{2}\gamma_*^2}\\
    &= -\tfrac{1}{2} -\tfrac{1}{2} \left(e^{t\gamma_*}\right)^2 e^{- 2\gamma_*^2} +  e^{t\gamma_*}e^{ - \tfrac{1}{2}\gamma_*^2}
\end{align*}
Solving with the quadratic equation gives
\begin{align*}
    e^{t\gamma_*} &= e^{\frac{3}{2}\gamma_*^2} \pm e^{2\gamma_*^2}\sqrt{e^{-\gamma_*^2} - e^{-2\gamma_*^2}}\\
    t &= \tfrac{3}{2}\gamma_* + \tfrac{1}{\gamma_*}\log\left(1 \pm \sqrt{1-e^{-\gamma_*^2}}\right)
\end{align*}
We have found the two extreme values of $\tilde{f}(t)$, which we denote
\begin{align*}
    t_- &= \tfrac{3}{2}\gamma_* + \tfrac{1}{\gamma_*}\log\left(1 + \sqrt{1-e^{-\gamma_*^2}}\right)\\
    t_+ &= \tfrac{3}{2}\gamma_* + \tfrac{1}{\gamma_*}\log\left(1 - \sqrt{1-e^{-\gamma_*^2}}\right)
\end{align*}
It remains to show that they are both suprema; that is, that they attain the same value.

We will show that $\tilde{f}(t_-) = -\tilde{f}(t_+)$. To do this, first define
\begin{align*}
    g(x) &:=  \tfrac{1}{2} e^{-x^2/2} +\tfrac{1}{2} e^{-(x-2\gamma_*)^2/2} - e^{-(x-\gamma_*)^2/2}
\end{align*}
so that
\begin{align*}
    \tilde{f}(t) &=  \zeta_* \frac{1}{\sqrt{2\pi}} \int_{x=-\infty}^t g(x)dx
\end{align*}
The result will follow from three facts: (1) That $g$ is symmetric about $x=\gamma_*$, (2) That $\tilde{f}(\gamma_*) = 0$, and (3) That $\tfrac{1}{2}(t_- + t_+) = \gamma_*$. We will prove each of these facts, and then use them to show that $\tilde{f}(t_-) = -\tilde{f}(t_+)$.
\newline\newline
\textbf{The function $g$ is symmetric.} To show that $g$ is symmetric about $x=\gamma_*$, we will show that $g(\gamma_* + x) = g(\gamma_* - x)$ for all $x$.
\begin{align*}
    g(\gamma_* + x)
    &= -\frac{1}{2}e^{-(\gamma_*+x)^2/2} - \frac{1}{2}e^{-(\gamma_* + x - 2\gamma_*)^2/2} + e^{-(\gamma_* + x - \gamma_*)^2/2}\\
    &= -\frac{1}{2}e^{-(\gamma_*- x)^2/2} - \frac{1}{2}e^{-(\gamma_* - x - 2\gamma_*)^2/2} + e^{-(\gamma_* - x - \gamma_*)^2/2}\\
    &= g(\gamma_* - x)
\end{align*}
\newline\newline
\textbf{The function $\tilde{f}$ is zero at $\gamma_*$.} We compute $\tilde{f}(\gamma_*)$. Our first step is a u-substitution $u = x - \gamma_*$
\begin{align*}
    \tilde{f}(\gamma_*)
    &= \int_{x=-\infty}^{\gamma_*} -\tfrac{1}{2}e^{-x^2/2} - \tfrac{1}{2} e^{-(x-2\gamma_*)^2/2} + e^{-(x-\gamma_*)^2/2}dx\\
    &= \int_{u=-\infty}^{0} -\tfrac{1}{2}e^{-(u+\gamma_*)^2/2}-\tfrac{1}{2}e^{-(u-\gamma_*)^2/2}+e^{-u^2/2}du\\
    &= -\tfrac{1}{2}\left(\Phi(\gamma_*) + \Phi(-\gamma_*)\right) + \Phi(0)
\end{align*}
Recall that $\Phi(0) = \tfrac{1}{2}$, and that $\Phi(-t) = 1 - \Phi(t)$ for any $t$. This gives us
\begin{align*}
    \tilde{f}(\gamma_*)
    &=-\tfrac{1}{2}\left(\Phi(\gamma_*) + 1 - \Phi(\gamma_*)\right) + \tfrac{1}{2}\\
    &=-\tfrac{1}{2} + \tfrac{1}{2}\\
    &= 0
\end{align*}
\newline\newline
\textbf{The average of roots $t_-$ and $t_+$ is $\gamma_*$.} We will show this fact via direct computation,
\begin{align*}
    \frac{1}{2}(t_- + t_+)
    &= \frac{1}{2}\left(3\gamma_* + \tfrac{1}{\gamma_*}\log\left(\left(1 + \sqrt{1-e^{-\gamma_*^2}}\right)\left(1 - \sqrt{1-e^{-\gamma_*^2}}\right)\right) \right)\\
    &= \frac{1}{2}\left(3\gamma_* + \tfrac{1}{\gamma_*}\log\left(1 - \left(1-e^{-\gamma_*^2}\right)\right) \right)\\
    &= \frac{1}{2}\left(3\gamma_* + \tfrac{1}{\gamma_*}\log\left(e^{-\gamma_*^2}\right) \right)\\
    &= \frac{1}{2}\left(3\gamma_* -\gamma_* \right)\\
    &= \gamma_*
\end{align*}
\newline\newline
\textbf{Conclusion: the two critical points $t_-$ and $t_+$ are both suprema.} We will now show that $\tilde{f}(t_-) = -\tilde{f}(t_+)$. We begin by relating both quantities to $\tilde{f}(\gamma_*)$. Recall that $t_- > \gamma_* > t_+$. We have
\begin{align*}
    \tilde{f}(t_+) &= \tilde{f}(\gamma_*) - \zeta_*\frac{1}{\sqrt{2\pi}}\int_{t_+}^{\gamma_*} g(x)dx\\
    &= - \zeta_*\frac{1}{\sqrt{2\pi}}\int_{t_+}^{\gamma_*} g(x)dx\\
    \tilde{f}(t_-) &= \tilde{f}(\gamma_*) + \zeta_*\frac{1}{\sqrt{2\pi}} \int_{\gamma_*}^{t_-} g(x) dx\\
    &= \zeta_*\frac{1}{\sqrt{2\pi}}\int_{\gamma_*}^{t_-} g(x) dx
\end{align*}
Since $t_- - \gamma_* = \gamma_* - t_+$ and $g$ is symmetric about $\gamma_*$, we conclude that
\begin{align*}
    \tilde{f}(t_+) &= - \tilde{f}(t_-)
\end{align*}

The last thing we need to show is that $\tilde{f}(t_+)$ is positive (and, consequently, that $\tilde{f}(t_-)$ is negative). This is a direct consequence of the facts that $\tilde{f}(t)$ only has two critical points, that $\lim_{t\to\infty}\tilde{f}(t) = \lim_{t\to-\infty}\tilde{f}(t) = 0$ (so we know the function crosses zero at most once, at $\gamma_*$), and that
\begin{align*}
    \tilde{f}(0) &= \zeta_*\left( \tfrac{1}{2}\Phi(0) + \tfrac{1}{2}\Phi(2\gamma_*) - \Phi(\gamma_*) \right)\\
    &\geq 0
\end{align*}
by the concavity of $\Phi$ for $x > 0$. Since $0 < \gamma_*$, this tells us that $\tilde{f}$ is positive for all $t < \gamma_*$, including $t_+$. This completes our proof.
\end{proof}

In order to show the optimality of our conjectured $\nu_{OPT}$, we will write down the KKT conditions for the optimization problem  \eqref{eqn:estimation-optimizationProblem} and find points that satisfy them. The following lemmas establish certain properties of the dual points. The statements of Lemmas \ref{lem:factAboutH} and \ref{lem:factAboutHTilde} are motivated by computations in Lemma \ref{lem:optimalNuForm}. Readers may wish to skip these two lemmas on their first reading, and review them after understanding how they are used in the argument of Lemma \ref{lem:optimalNuForm}. 
\begin{lemma}\label{lem:factAboutH}
Let $\gamma_* > 0$. Define
\begin{align*}
    h(x) &:= \frac{\Phi(t_- - x) - \Phi(t_- - 2\gamma_*)}{\Phi(t_+ - x) - \Phi(t_+ - 2\gamma_*)}\\
    t_+ &:= \tfrac{3}{2}\gamma_* + \tfrac{1}{\gamma_*}\log\left( 1 - \sqrt{1 - e^{-\gamma_*}}\right)\\
    t_- &:= \tfrac{3}{2}\gamma_* + \tfrac{1}{\gamma_*}\log\left( 1 + \sqrt{1 - e^{-\gamma_*}}\right)
\end{align*}
Then,
\begin{align*}
    h(x) \begin{cases}
    \leq \frac{c}{1-c} ~&\text{if}~x < 2\gamma_*\\
    \geq \frac{c}{1-c} ~&\text{if}~ x > 2\gamma_*
    \end{cases}
\end{align*}
for $c$ defined by
\begin{align*}
    c &:= \frac{k}{1+k}\\
    k &:= \lim_{x\to 2\gamma_*}h(x) = e^{-\frac{1}{2}(t_- - 2\gamma_*)^2 + \frac{1}{2}(t_+ - 2\gamma_*)^2}
\end{align*}
We note that this implies $c\in(0, 1)$.
\end{lemma}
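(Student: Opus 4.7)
The plan is to reduce the two inequalities to a single sign inequality for a smooth function, and then analyze its critical points.

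First I would simplify the target. Writing $c = k/(1+k)$ gives $c/(1-c) = k$, so the claim becomes: $h(x)\leq k$ for $x<2\gamma_*$ and $h(x)\geq k$ for $x>2\gamma_*$. Denote the numerator and denominator of $h$ by
\[
    N(x) := \Phi(t_- - x) - \Phi(t_- - 2\gamma_*), \qquad D(x) := \Phi(t_+ - x) - \Phi(t_+ - 2\gamma_*),
\]
so that $N(2\gamma_*) = D(2\gamma_*) = 0$. For $x<2\gamma_*$ both $N$ and $D$ are positive, while for $x>2\gamma_*$ both are negative. In either case, the claim $h(x) \lessgtr k$ is equivalent to $G(x) := N(x) - k\,D(x) \leq 0$, because multiplication by $D(x)$ flips the inequality on the side where $D(x)<0$. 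So the entire lemma reduces to showing $G(x) \leq 0$ for all $x \in \mathbb{R}$, with equality at $x = 2\gamma_*$.

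Next I would analyze $G$ via its derivatives. Using $\Phi'(z) = \phi(z)$ we get $G'(x) = -\phi(t_- - x) + k\,\phi(t_+ - x)$. Setting $G'(x)=0$ and using $\phi(z) = (2\pi)^{-1/2}e^{-z^2/2}$ together with the explicit $k = \exp\!\bigl(-\tfrac12(t_- - 2\gamma_*)^2 + \tfrac12(t_+ - 2\gamma_*)^2\bigr)$, the critical-point equation becomes the linear equation
\[
    -(t_- - x)^2 + (t_+ - x)^2 \;=\; -(t_- - 2\gamma_*)^2 + (t_+ - 2\gamma_*)^2,
\]
whose $x^2$ terms cancel and which (since $t_+ \neq t_-$) has the unique solution $x = 2\gamma_*$. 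Using $\phi'(z) = -z\,\phi(z)$, I would then evaluate $G''(2\gamma_*) = \phi(t_- - 2\gamma_*)\,(t_+ - t_-)$, which is strictly negative because Lemma \ref{lem:tPlusAndMinus} gives $t_+ < t_-$. Hence $2\gamma_*$ is a strict local maximum, and since it is the only critical point, $G$ is increasing on $(-\infty, 2\gamma_*)$ and decreasing on $(2\gamma_*, \infty)$. Combined with $G(2\gamma_*) = 0$, this yields $G(x)\leq 0$ everywhere, which is exactly what we need.

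Finally I would verify $c \in (0,1)$: by its explicit exponential form $k > 0$, so $c = k/(1+k) \in (0,1)$. The only point requiring care is the sign-flip when $D(x)$ is negative (handled above) and confirming that $x = 2\gamma_*$ is the only root of $G'$; I do not anticipate a serious obstacle here since the critical-point equation collapses to a linear one by design of $k$. Intuitively, $k$ is defined as the L'Hôpital limit $\lim_{x\to 2\gamma_*} N(x)/D(x)$, and the lemma expresses the monotonicity of $h$ around this removable singularity, so the machinery above simply makes this intuition rigorous.
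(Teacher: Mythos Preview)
Your proposal is correct and follows essentially the same approach as the paper: both reduce the claim to showing that $G(x)=N(x)-kD(x)$ (the paper calls it $a(x)$) is nonpositive with its maximum at $x=2\gamma_*$, and both establish this by showing $G'$ changes sign only at $2\gamma_*$. The only cosmetic difference is that the paper tracks the sign of $G'$ directly via an auxiliary linear function $b(x)$, whereas you solve $G'(x)=0$ and apply the second-derivative test; these are equivalent since the exponent equation collapses to the same linear relation.
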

\begin{proof}
We will prove this result by first breaking the function $h$ into its numerator and denominator,
\begin{align*}
    h(x) &= \frac{f(x)}{g(x)}\\
    f(x) &:= \Phi(t_- - x) - \Phi(t_- - 2\gamma_*)\\
    g(x) &:= \Phi(t_+ - x) - \Phi(t_+ - 2\gamma_*)
\end{align*}
We begin by noting that, for both $f$ and $g$,
\begin{align*}
    f(x), g(x) 
    & \begin{cases}
    \geq 0 ~\text{if}~x < 2\gamma_*\\
    = 0 ~\text{if}~x = 2\gamma_*\\
    \leq 0 ~\text{if}~ x > 2\gamma_*
    \end{cases}
\end{align*}
which follows from the fact that the normal CDF $\Phi$ is strictly increasing.

We will show that $h(x) > k$ when $x > 2\gamma_*$, and that $h(x) < k$ when $x < 2\gamma_*$. This is equivalent to showing that
\begin{align}
    a(x) &:= f(x) - k\cdot g(x) < 0 \qquad \forall x\neq 2\gamma_* \label{eqn:constraintOnA}
\end{align}
(to show the equivalence, recall that $g(x) < 0$ for $x < 2\gamma_*$). In order to show Eqn \eqref{eqn:constraintOnA}, it suffices to show two things: That
\begin{align}
    a(2\gamma_*) = 0 \label{eqn:firstRequirementOnA}
\end{align}
and that
\begin{align}
    a'(x) \begin{cases}
    \leq 0 ~&\text{if}~ x > 2\gamma_*\\
    \geq 0 ~&\text{if}~ x < 2\gamma_*
    \end{cases} \label{eqn:constraintOnADerivative}
\end{align}
Condition \eqref{eqn:firstRequirementOnA} is satisfied because $f(2\gamma_*) = g(2\gamma_*) = 0$. To show that constraint \eqref{eqn:constraintOnADerivative} is satisfied, we will take the derivative of $a(x)$,
\begin{align}
    a'(x) &= \frac{1}{\sqrt{2\pi}} \left( k e^{-\frac{1}{2}(t_+ - x)^2} - e^{-\frac{1}{2}(t_- - x)^2} \right)\label{eqn:aPrime}\\
    &= \frac{1}{\sqrt{2\pi}} \left( e^{-\frac{1}{2}(t_- - 2\gamma_*)^2 + \frac{1}{2}(t_+ - 2\gamma_*)^2-\frac{1}{2}(t_+ - x)^2} - e^{-\frac{1}{2}(t_- - x)^2}\right)\notag\\
    &= \frac{1}{\sqrt{2\pi}} e^{-\frac{1}{2}(t_- - 2\gamma_*)^2}\left( \frac{e^{-\frac{1}{2}(t_+ - x)^2}}{e^{-\frac{1}{2}(t_+ - 2\gamma_*)^2}} -
    \frac{e^{-\frac{1}{2}(t_- - x)^2}}{e^{-\frac{1}{2}(t_- - 2\gamma_*)^2}}\right) \label{eqn:factorizedAPrime}
\end{align}
Our goal is to prove \eqref{eqn:constraintOnADerivative}, which only requires information about the sign of $a'(x)$. Since the leading factor in \eqref{eqn:factorizedAPrime} is positive, we can ignore it, and it suffices to show that
\begin{align*}
     \frac{e^{-\frac{1}{2}(t_+ - x)^2}}{e^{-\frac{1}{2}(t_+ - 2\gamma_*)^2}} -
    \frac{e^{-\frac{1}{2}(t_- - x)^2}}{e^{-\frac{1}{2}(t_- - 2\gamma_*)^2}} \begin{cases}
        \leq 0 ~&\text{if}~ x > 2\gamma_*\\
        \geq 0 ~&\text{if}~ x < 2\gamma_*
        \end{cases}
\end{align*}
Rearranging the terms of the expression, we see that this is equivalent to showing that
\begin{align}
    b(x) := (t_+ - x)^2 + (t_- - 2\gamma_*)^2 - (t_- - x)^2 - (t_- - 2\gamma_*) 
    \begin{cases}
    \geq 0 &~\text{if}~ x > 2\gamma_*\\
    \leq 0 &~\text{if}~ x < 2\gamma_*
    \end{cases}\label{eqn:constraintOnB}
\end{align}
We have $b(2\gamma_*) = 0$, so it suffices to show that its derivative is nonnegative. We take the derivative with respect to $x$,
\begin{align*}
    b'(x) &= -2(t_+ - x) + 2(t_ - x)\\
    &= 2\left( t_- - t_+ \right)
\end{align*}
Since $t_- > t_+$ for $\gamma > 0$, we see that $b'(x) > 0$. We conclude that $b(x)$ satisfies \eqref{eqn:constraintOnB}, which implies \eqref{eqn:constraintOnA}. This shows the desired result, with $c$ such that
\begin{align*}
    \frac{c}{1-c} &= e^{-\frac{1}{2}(t_- - 2\gamma_*)^2 + \frac{1}{2}(t_+ - 2\gamma_*)^2}\\
    &= k
\end{align*}
Solving for $c$ yields $c = \frac{k}{1+k}$, as claimed. Finally, we see that $\frac{c}{1-c} > 0$, so $c\in (0, 1)$, which completes the proof.
\end{proof}

\begin{lemma}\label{lem:factAboutHTilde}
Let $\gamma_* > 0$ and define
\begin{align*}
    \tilde{h}(x) &:= \frac{\Phi(t_- - x) - \Phi(t_-)}{\Phi(t_+ - x) - \Phi(t_+)}\\
    t_+ &:= \tfrac{3}{2}\gamma_* + \tfrac{1}{\gamma_*}\log\left( 1 - \sqrt{1 - e^{-\gamma_*}}\right)\\
    t_- &:= \tfrac{3}{2}\gamma_* + \tfrac{1}{\gamma_*}\log\left( 1 + \sqrt{1 - e^{-\gamma_*}}\right)
\end{align*}
where $\Phi$ is the standard normal CDF. Then
\begin{align*}
    \tilde{h}(x) \leq \frac{c}{1-c} ~\forall x < 0
\end{align*}
for $c$ defined by
\begin{align*}
    c &:= \frac{k}{1+k}\\
    k &:= \lim_{x\to 2\gamma_*}h(x) = e^{-\frac{1}{2}(t_- - 2\gamma_*)^2 + \frac{1}{2}(t_+ - 2\gamma_*)^2}
\end{align*}
\end{lemma}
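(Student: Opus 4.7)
The plan is to reduce Lemma \ref{lem:factAboutHTilde} directly to the monotonicity computation already established inside the proof of Lemma \ref{lem:factAboutH}. The key observation is that $\tilde{h}(x)$ differs from $h(x)$ only in where the two CDF differences are anchored, at $0$ instead of at $2\gamma_*$. Since this changes only an additive constant inside each term, it will vanish after differentiation, and the same sign analysis will carry over.

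First I would check that for $x < 0$, both the numerator $\Phi(t_- - x) - \Phi(t_-)$ and denominator $\Phi(t_+ - x) - \Phi(t_+)$ of $\tilde{h}(x)$ are strictly positive, because $\Phi$ is strictly increasing and $t_\pm - x > t_\pm$ when $x < 0$. Then I would note the identity $\frac{c}{1-c} = k$, which follows directly from the definition $c = \frac{k}{1+k}$. Consequently the desired bound $\tilde{h}(x) \le \frac{c}{1-c}$ is equivalent to
\begin{align*}
\tilde{a}(x) &:= \Phi(t_- - x) - \Phi(t_-) - k\bigl(\Phi(t_+ - x) - \Phi(t_+)\bigr) \le 0.
\end{align*}

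Next I would verify by inspection that $\tilde{a}(0) = 0$, and compute
\begin{align*}
\tilde{a}'(x) &= \frac{1}{\sqrt{2\pi}}\Bigl(k\, e^{-\frac{1}{2}(t_+ - x)^2} - e^{-\frac{1}{2}(t_- - x)^2}\Bigr).
\end{align*}
This expression is \emph{identical} to the derivative $a'(x)$ appearing in Eqn \eqref{eqn:aPrime} of the proof of Lemma \ref{lem:factAboutH}, since the anchor points $2\gamma_*$ versus $0$ only contribute constants to $a$ and $\tilde{a}$. The sign analysis carried out there, in particular the argument that $b(x) \le 0$ for $x \le 2\gamma_*$, immediately yields $\tilde{a}'(x) \ge 0$ for every $x < 2\gamma_*$, which in particular covers the entire range $x < 0$.

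Finally, since $\tilde{a}$ is non-decreasing on $(-\infty, 0]$ and $\tilde{a}(0) = 0$, we conclude $\tilde{a}(x) \le 0$ for every $x < 0$, giving the claim. There is really no hard step here: the whole content is borrowed from Lemma \ref{lem:factAboutH}. The only subtlety worth flagging is recognizing that the derivatives of $a$ and $\tilde{a}$ coincide, and that the monotonicity interval $(-\infty, 2\gamma_*)$ established in that lemma contains the interval $(-\infty, 0)$ of interest here, so no new quadratic sign analysis is required.
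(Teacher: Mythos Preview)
Your proposal is correct and follows essentially the same approach as the paper's own proof: define $\tilde{a}(x) = \tilde{f}(x) - k\,\tilde{g}(x)$, observe $\tilde{a}(0)=0$, and then note that $\tilde{a}'(x)$ coincides with the derivative $a'(x)$ from Lemma~\ref{lem:factAboutH}, whose nonnegativity on $(-\infty,2\gamma_*)$ was already established there. The paper's argument and yours are virtually identical in structure and detail.
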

\begin{proof}
This proof proceeds similarly to the proof of Lemma \ref{lem:factAboutH}. We begin by rewriting the function $\tilde{h}$,
\begin{align*}
    \tilde{h}(x) &= \frac{\tilde{f}(x)}{\tilde{g}(x)}\\
    \tilde{f}(x) &:= \Phi(t_- - x) - \Phi(t_-)\\
    \tilde{g}(x) &:= \Phi(t_+ - x) - \Phi(t_+)
\end{align*}
We begin by noting that
\begin{align*}
    \tilde{g}(x) \geq 0 ~\forall x < 0
\end{align*}
which follows because the CDF is an increasing function. Consequently, showing that $\tilde{h}(x) \leq k$ for $x < 0$ is equivalent to showing that
\begin{align*}
    \tilde{a}(x) := \tilde{f}(x) - k\cdot \tilde{g}(x) \leq 0 \qquad\forall x < 0
\end{align*}
In order to prove this inequality, it suffices to show two things: that
\begin{align*}
    \tilde{a}(0) = 0
\end{align*}
and that
\begin{align*}
    \tilde{a}'(x) \geq 0 \qquad \forall x < 0
\end{align*}
Clearly $\tilde{a}(0) = 0$, since $\tilde{f}(0) = 0$ and $\tilde{g}(0) = 0$. To show that the derivative is positive for negative values of $x$, we begin by taking the derivative,
\begin{align*}
    \tilde{a}'(x) &= \frac{1}{\sqrt{2\pi}}\left( k e^{-(t_+ - x)^2/2} - e^{-(t_- - x)^2/2} \right)
\end{align*}
Now, we realize that the previous equation is the same as Equation \eqref{eqn:aPrime}, the derivative of $a(x)$ in Lemma \ref{lem:factAboutH}. We showed in that proof that $a'(x) \geq 0$ for all $x < 2\gamma_*$, so we conclude that $a'(x) \geq 0$ for all $x < 0$. The proof is complete.
\end{proof}

The next lemma is the crux of the proof of Lemma \ref{lem:boundingLInfForEstBound}. In this lemma, we find the optimal point for the optimization problem \eqref{eqn:estimation-optimizationProblem}.
\begin{lemma}\label{lem:optimalNuForm}
Let $\zeta_*\in(0,1]$, $\gamma_*\in(0,1]$, and
\begin{align*}
    \nu_* &= (1-\zeta_*)\delta_0 + \zeta_*\delta_{\gamma_*}.
\end{align*}
Define
\begin{align*}
    F_{\nu}(t) = \P_{\mu\sim\nu, X\sim\mathcal{N}(\mu,1)}(X \leq t),
\end{align*}
where $\nu$ is a probability distribution over $\mathbb{R}$. Then the optimal point for the optimization problem \eqref{eqn:estimation-optimizationProblem} is given by
\begin{align*}
    \nu_{OPT} := \arg\min_{\nu:\P(\mu > 0)<\frac{1}{2}\zeta_*}||F_\nu - F_{\nu_*}||_\infty = (1-\tfrac{1}{2}\zeta_*)\delta_0 + \tfrac{1}{2}\zeta_*\delta_{2\gamma_*}
\end{align*}
\end{lemma}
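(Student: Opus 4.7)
The plan is a Lagrangian duality argument. Since $\nu \mapsto \|F_\nu - F_{\nu_*}\|_\infty$ is convex (a supremum of linear functionals in $\nu$) and the feasible set is convex, to certify that $\nu_{OPT}$ is optimal it suffices to exhibit a coefficient $c \in (0,1)$ such that the \emph{linear} functional
\begin{align*}
\nu \mapsto \int \phi(\mu)\, d\nu(\mu), \qquad \phi(\mu) := c\,\Phi(t_+ - \mu) - (1-c)\,\Phi(t_- - \mu),
\end{align*}
is minimized over the feasible set by $\nu_{OPT}$; here $t_+, t_-$ are the points from Lemma \ref{lem:tPlusAndMinus}.

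The reduction to this linear functional is elementary. Writing $M := \|F_{\nu_{OPT}} - F_{\nu_*}\|_\infty$, $A := F_\nu(t_+) - F_{\nu_*}(t_+)$, and $B := F_\nu(t_-) - F_{\nu_*}(t_-)$, for any $c \in [0,1]$ we chain
\begin{align*}
\|F_\nu - F_{\nu_*}\|_\infty \geq \max(|A|,|B|) \geq c|A| + (1-c)|B| \geq cA - (1-c)B = \int \phi\, d\nu - \int \phi\, d\nu_*.
\end{align*}
By Lemma \ref{lem:tPlusAndMinus}, at $\nu = \nu_{OPT}$ we have $A_{OPT} = M$ and $B_{OPT} = -M$, so this chain is tight and yields $\int \phi\, d\nu_{OPT} - \int \phi\, d\nu_* = M$. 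Hence, to prove $\|F_\nu - F_{\nu_*}\|_\infty \geq M$ for every feasible $\nu$ (which, together with $\nu_{OPT}$ attaining $M$, proves the lemma), it suffices to show $\int \phi\, d\nu \geq \int \phi\, d\nu_{OPT}$ for all feasible $\nu$.

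The choice of $c$ is dictated by the interior stationarity condition $\phi'(2\gamma_*) = 0$, which forces $c/(1-c) = \exp\bigl(-\tfrac12(t_- - 2\gamma_*)^2 + \tfrac12(t_+ - 2\gamma_*)^2\bigr) = k$, i.e., $c = k/(1+k)$, exactly matching Lemma \ref{lem:factAboutH}. With this $c$, three pointwise facts about $\phi$ suffice. (i) $\phi(\mu) \geq \phi(2\gamma_*)$ for all $\mu > 0$: rearranging becomes $c/(1-c) \geq h(\mu)$ on $(0, 2\gamma_*)$ and $c/(1-c) \leq h(\mu)$ on $(2\gamma_*, \infty)$ (the direction flips because the denominator of $h$ changes sign at $2\gamma_*$), which is exactly Lemma \ref{lem:factAboutH}. (ii) $\phi(\mu) \geq \phi(0)$ for all $\mu \leq 0$: this rearranges to $c/(1-c) \geq \tilde h(\mu)$, which is Lemma \ref{lem:factAboutHTilde}. (iii) $\phi(0) \geq \phi(2\gamma_*)$: this is the $\mu = 0$ instance of (i). Given (i)--(iii), any feasible $\nu$ satisfies $\int \phi\, d\nu \geq (1-\tfrac12\zeta_*)\phi(0) + \tfrac12\zeta_*\phi(2\gamma_*) = \int \phi\, d\nu_{OPT}$, because the constraint restricts $\nu$ to at most $\tfrac12\zeta_*$ mass on $(0,\infty)$ and shifting mass into the smallest available value of $\phi$ on each side is optimal.

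The main conceptual difficulty was identifying a \emph{single} $c$ for which $\phi$ simultaneously attains its minimum on the left half-line at $\mu = 0$ and on the right half-line at $\mu = 2\gamma_*$, with the right-side minimum no larger than the left-side minimum; the correct choice is forced by interior stationarity at $2\gamma_*$, and the verification that the same $c$ also controls the shape of $\phi$ on $(-\infty, 0]$ and across $\mu = 0$ is precisely what Lemmas \ref{lem:tPlusAndMinus}, \ref{lem:factAboutH}, and \ref{lem:factAboutHTilde} were engineered to deliver, so the remaining work is just assembling the pieces.
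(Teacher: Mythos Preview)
Your proof is correct and follows essentially the same Lagrangian-duality approach as the paper, using the identical dual certificate $\phi(\mu)=c\,\Phi(t_+-\mu)-(1-c)\,\Phi(t_--\mu)$ with the same choice of $c$ and verifying optimality through the same three lemmas. Your packaging is cleaner: instead of discretizing to a finite simplex and checking the full KKT system, you phrase the argument as a weak-duality lower bound that is tight at $\nu_{OPT}$, which encodes the same conditions (subgradient, complementary slackness, dual nonnegativity) while avoiding the bookkeeping.
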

\begin{proof}
We will prove this statement for the case where $\nu$ is a vector in the simplex $\triangle^d$; the continuous case can be recovered by taking the limit as $d\to\infty$.

Let $\nu \in \triangle^d$ be a distribution over a discrete set of points $x_i$, with $x\in \mathbb{R}^d$, and $0, \gamma_*, 2\gamma_*\in x$. We will prove the conclusion of the lemma using the tools of convex optimization: We will write down the Lagrange dual of problem \eqref{eqn:estimation-optimizationProblem}, and present a solution that optimizes the dual.

We begin by writing this problem in standard form,
\begin{align*}
    \begin{array}{ll}
         \text{minimize} &  f(\nu)\\
         \text{subject to} &  \sum_i \nu_i = 1\\
         & \nu \succeq 0\\
         & \sum_{i: x_i > 0} \nu_i \leq \frac{1}{2}\zeta_*
    \end{array}
\end{align*}
where $\nu\in\mathbb{R}^d$ and
\[
f(\nu) = \sup_t \left| \frac{1}{\sqrt{2\pi}} \int_{x=-\infty}^t \sum_{i=1}^d \nu_i e^{-(x-x_i)^2/2} - (1-\zeta_*)e^{-x^2/2} - \zeta_* e^{-(x-\gamma_*)^2/2} dx \right|
\]
Since we will be relying on strong duality, we note that our problem satisfies Slater's condition. For example, one interior feasible point for this problem distributes $\tfrac{1}{4}\zeta_*$ mass equally across entries where $x_i > 0$, and the remaining $1 - \tfrac{1}{4}\zeta_*$ mass on the remaining entries. Since $\zeta_* > 0$, this is an interior point of the feasible region.

To apply strong duality, we begin by writing the  Lagrangian:
\begin{align*}
    L(\nu, \lambda_1, \lambda_\zeta, \lambda_0)
    &= f(\nu) + \lambda_1\left(\sum_i \nu_i - 1\right) + \lambda_\zeta \left(\sum_{i : x_i > 0} \nu_i - \tfrac{1}{2}\zeta_*\right) - \lambda_0^T\nu
\end{align*}
Next, we differentiate with respect to $\nu$,
\begin{align}
    \nabla_\nu L(\nu, \lambda_1, \lambda_\zeta, \lambda_0)
    &= \nabla_\nu f(\nu) + \lambda_1 \mathbf{1} + \lambda_\zeta \mathbf{1}_{i:x_i>0} - \lambda_0 \label{eqn:dualDerivative}
\end{align}
We begin by noting that the gradient $\nabla_\nu f(\nu)$ is not always defined, but that the subgradient is.\footnote{We will abuse notation slightly, and use $\nabla$ when referring to subgradients.} In this case, the KKT conditions say that an optimal point $\nu_{OPT}$ must satisfy the following conditions:
\begin{align}
    \nu_{OPT} &\geq 0\label{eqn:primalFeas1}\\
    \sum_{i: x_i > 0} \nu_{OPT}[i] &\leq \frac{1}{2}\zeta_*\label{eqn:primalFeas2}\\
    \sum_i \nu_{OPT}[i] &= 1\label{eqn:primalFeas3}\\
    \lambda_{\zeta,OPT} &\geq 0 \label{eqn:varPos1}\\
    \lambda_{0,OPT} &\geq 0 \label{eqn:varPos2}\\
    \lambda_{\zeta,OPT} \left(\sum_{i : x_i > 0} \nu_{OPT}[i] - \tfrac{1}{2}\zeta_*\right) &= 0 \label{eqn:compSlack1}\\
    \lambda_{0_i,OPT} \nu_{OPT}[i] &= 0 \qquad \forall i \label{eqn:compSlack2}\\
    \nabla_\nu L(\nu, \lambda_{1,OPT}, \lambda_{\zeta,OPT}, \lambda_{0,OPT})\big|_{\nu=\nu_{OPT}} &\ni 0 \label{eqn:zeroGrad}
\end{align}

To show that $\nu_{OPT}$ is indeed an optimal primal point, we will present a feasible set of dual variables that, along with $\nu_{OPT}$, satisfy conditions \eqref{eqn:primalFeas1}-\eqref{eqn:zeroGrad}. By strong duality, this corresponds to an optimal primal point. We use the following $\nu_{OPT}$ and dual variables.
\begin{align*}
    \nu_{OPT}[i] &= 
        \begin{cases}
        1-\frac{1}{2}\zeta_* ~&\text{if}~ x_i = 0\\
        \frac{1}{2}\zeta_* ~&\text{if}~ x_i = 2\gamma_*\\
        0 ~&\text{otherwise}
        \end{cases}\\
    \lambda_{1,OPT} &= (1-c)\Phi(t_-) - c\Phi(t_+)\\
    \lambda_{\zeta,OPT} &= c\left[\Phi(t_+) - \Phi(t_+ - 2\gamma_*) \right] - (1-c) \left[\Phi(t_-) - \Phi(t_- - 2\gamma_*) \right]\\
    \lambda_{0,OPT}[i] &= 
        \begin{cases}
        c\left[ \Phi(t_+ - x_i) - \Phi(t_+) \right] - (1-c) \left[ \Phi(t_- - x_i) - \Phi(t_-)\right] &~\text{if}~x_i < 0\\
        c\left[\Phi(t_+ - x_i) - \Phi(t_+ - 2\gamma_*) \right] - (1-c) \left[\Phi(t_- - x_i) -  \Phi(t_- - 2\gamma_*)\right] &~\text{if}~ x_i > 0\\
        0 &~\text{if}~x_i = 0
        \end{cases}
\end{align*}
where we define constants depending only on $\gamma_*$,
\begin{align*}
    c &:= \frac{k}{1 + k}\\
    k &:= e^{\frac{1}{2}(t_+ - 2\gamma_*)^2 - \frac{1}{2}(t_- - 2\gamma_*)^2}\\
    t_+ &:= \tfrac{3}{2}\gamma_* + \tfrac{1}{\gamma_*}\log\left( 1 - \sqrt{1 - e^{-\gamma_*}}\right)\\
    t_- &:= \tfrac{3}{2}\gamma_* + \tfrac{1}{\gamma_*}\log\left( 1 + \sqrt{1 - e^{-\gamma_*}}\right) 
\end{align*}
It remains to show that these variables satisfy the KKT conditions. We address each category of KKT condition below.
\newline\newline
\noindent\textbf{Primal feasibility}
The primal feasibility conditions \eqref{eqn:primalFeas1}, \eqref{eqn:primalFeas2} and \eqref{eqn:primalFeas3} are all clearly satisfied by our choice of $\nu_{OPT}$. 
\newline\newline
\noindent\textbf{Dual variable nonnegativity}
Conditions \eqref{eqn:varPos1} and \eqref{eqn:varPos2} require that the dual variables corresponding to inequality constraints are nonnegative. Both conditions follow from Lemma \ref{lem:factAboutH}.

For condition \eqref{eqn:varPos1}, nonnegativity of $\lambda_{\zeta,OPT}$, note that $\lambda_{\zeta,OPT} \geq 0$ is equivalent to
\begin{align*}
    \frac{c}{1-c}  &\geq \frac{ \Phi(t_-) - \Phi(t_- - 2\gamma_*) }{\Phi(t_+) - \Phi(t_+ - 2\gamma_*)}
\end{align*}
(note the equivalence holds because the denominator of the right hand side is positive). By Lemma \ref{lem:factAboutH}, with $x=0$, this condition is satisfied.

For condition \eqref{eqn:varPos2}, the constraint is clearly satisfied when $x_i=0$, so it remains to consider the two cases $x_i < 0$ and $x_i > 0$. When $x_i < 0$, the constraint $\lambda_{0,OPT}[i] \geq 0$ is equivalent to
\begin{align*}
    \frac{c}{1-c} &\geq \frac{\Phi(t_- - x_i) - \Phi(t_-)}{\Phi(t_+-x_i) - \Phi(t_+)}
\end{align*}
By Lemma \ref{lem:factAboutHTilde}, this constraint is satisfied for our chosen value of $c$.

When $x_i > 0$, the constraint is clearly satisfied for $x_i = 2\gamma_*$ (since $\lambda_{0,OPT} = 0$ in that case). Otherwise, the constraint is equivalent to
\begin{align*}
    c\left[\Phi(t_+ - x_i) - \Phi(t_+ - 2\gamma_*)\right] \geq (1-c) \left[ \Phi(t_- - x_i) - \Phi(t_- - 2\gamma_*)\right]
\end{align*}
which implies the following system of inequalities for the value of $\frac{c}{1-c}$
\begin{align*}
    \frac{c}{1-c} \geq \frac{\Phi(t_- - x_i) - \Phi(t_- - 2\gamma_*)}{\Phi(t_+ - x_i) - \Phi(t_+ - 2\gamma_*)} \qquad &\text{if}~ x_i < 2\gamma_*
\end{align*}
and
\begin{align*}
    \frac{c}{1-c} \leq \frac{\Phi(t_- - x_i) - \Phi(t_- - 2\gamma_*)}{\Phi(t_+ - x_i) - \Phi(t_+ - 2\gamma_*)}\qquad&\text{if}~ x_i > 2\gamma_*
\end{align*}
By Lemma \ref{lem:factAboutH}, these inequalities are satisfied.
\newline\newline
\noindent\textbf{Complementary slackness}
The complementary slackness conditions \eqref{eqn:compSlack1} and \eqref{eqn:compSlack2} are both satisfied because of the structure of $\nu_{OPT}$ and $\lambda_{0,OPT}$. Condition \eqref{eqn:compSlack1} is satisfied because
\begin{align*}
    \sum_{i:x_i > 0}\nu_{OPT} = \frac{1}{2}\zeta_*
\end{align*}
Condition \eqref{eqn:compSlack2} is satisfied because $\nu_{OPT}[i] = 0$ at all but two values of $i$; at those values (when $x_i = 0$ and $x_i = 2\gamma_*$), we have $\lambda_{0,OPT}[i] = 0$.
\newline\newline
\textbf{Zero is in the subgradient of the Lagrangian}
The KKT conditions require that zero must be in the subgradient of the Lagrangian, evaluated at the optimal set of variables. Recall that the gradient of the Lagrangian is given by
\begin{align*}
    \nabla_\nu L(\nu, \lambda_1, \lambda_\zeta, \lambda_0)
    &= \nabla_\nu f(\nu) + \lambda_1 \mathbf{1} + \lambda_\zeta \mathbf{1}_{i:x_i>0} - \lambda_0
\end{align*}
We therefore start by computing the subgradient of the objective function $f$. 
Recall that, if $f(\nu) = \sup_t f_t(\nu)$ for functions $f_t(\nu)$ indexed by $t$, and if $\mathcal{I}(\nu) = \{ t\in \mathbb{R} ~|~ f_t(\nu) = f(\nu)\}$ is the set of indices for which the sup is attained, then the subgradient of $f$ contains the convex combination of the subgradients of the ``active'' functions whose indices are in $\mathcal{I}(\nu)$,
\begin{align*}
    \textbf{conv}\bigcup_{t\in \mathcal{I}(\nu)} \partial f_t (\nu) \subseteq \partial \sup_{t\in \mathbb{R}}f_t(\nu)
\end{align*}
In our case, we have
\begin{align*}
    f_t(\nu)
    &= \left|\frac{1}{\sqrt{2\pi}} \int_{x=-\infty}^t \sum_{i=1}^d \nu_i e^{-(x-x_i)^2/2} - (1-\zeta_*)e^{-x^2/2} - \zeta_* e^{-(x-\gamma_*)^2/2} dx \right|
\end{align*}
Let $t_*$ be a value of $t$ where the supremum is attained. We have
\begin{align*}
    \nabla_\nu f_{t_*}(\nu)
    &= \nabla_\nu \left| \frac{1}{\sqrt{2\pi}} \int_{x=-\infty}^{t_*} \sum_{i=1}^d \nu_i e^{-(x-x_i)^2/2} - (1-\zeta_*)e^{-x^2/2} - \zeta_* e^{-(x-\gamma_*)^2/2} dx \right|\\
    &=: \nabla_\nu g(\nu, t_*)
\end{align*}
Now, we can compute a subset of the subgradient,
\begin{align*}
    \nabla_\nu f(\nu)
    &\supseteq \textbf{conv} \bigcup_{t_*\in \mathcal{I(\nu)}}\text{sign}(g(\nu, t_*))\nabla_\nu \frac{1}{\sqrt{2\pi}} \int_{x=-\infty}^{t_*} \sum_{i=1}^d \nu_i e^{-(x-x_i)^2/2} - (1-\zeta_*)e^{-x^2/2} - \zeta_* e^{-(x-\gamma_*)^2/2} dx\\
    &= \textbf{conv} \bigcup_{t_*\in \mathcal{I(\nu)}}\text{sign}(g(\nu, t_*))  \Phi(t_* - \mathbf{x})  dx
\end{align*}
Next, we evaluate this derivative at $\nu=\nu_{OPT}$. For our choice of $\nu_{OPT}$, Lemma \ref{lem:tPlusAndMinus} tells us the values of $t_*$,
\begin{align*}
    t_* &= \tfrac{3}{2}\gamma_* + \tfrac{1}{\gamma_*}\log\left( 1 \pm \sqrt{1-e^{-\gamma_*^2}} \right)
\end{align*}
where there are two roots: $t_+$ sets $\text{sign}(g(\nu_{OPT}, t_+))=1$, and $t_-$ sets $\text{sign}(g(\nu_{OPT}, t_-))=-1$. This implies that the subgradient evaluated at $\nu_{OPT}$ contains the convex combination
\begin{align*}
    \nabla_\nu f(\nu)\big|_{\nu=\nu_{OPT}} &\supseteq p\Phi(\mathbf{1}t_+ - \mathbf{x}) + (1-p)\Phi(\mathbf{1}t_- - \mathbf{x})\\
    \nabla_\nu f(\nu)[i]\big|_{\nu=\nu_{OPT}} &\supseteq p\Phi(t_+ - x_i) + (1-p)\Phi(t_- - x_i)
\end{align*}
for some $p\in [0,1]$.

Recall that our goal is to satisfy the KKT condition \eqref{eqn:zeroGrad}, that zero is in the subgradient of the Lagrangian. Having found the subgradient of the objective function, we see this corresponds to showing that
\begin{align*}
    \nabla_\nu L(\nu, \lambda_{1,OPT}, \lambda_{\zeta,OPT}, \lambda_{0,OPT})\big|_{\nu=\nu_{OPT}}[i]
    &\supseteq
    \begin{cases}
    p\Phi(t_+ - x_i) + (1-p)\Phi(t_- - x_i) + \lambda_{1,OPT} + \lambda_{\zeta,OPT} - \lambda_{0,OPT}[i] ~&\text{if}~x_i > 0\\
    p\Phi(t_+ - x_i) + (1-p)\Phi(t_- - x_i) + \lambda_{1,OPT} - \lambda_{0,OPT}[i] ~&\text{if}~x_i \leq 0
    \end{cases}\\
    &= 0
\end{align*}
Choosing $p=c$ (which we know is in $[0, 1]$ from Lemma \ref{lem:factAboutH}), and using the values of $\lambda_{1,OPT}$, $\lambda_{\zeta,OPT}$ and $\lambda_{0,OPT}$ we have chosen, we see that this element of the subgradient is in fact zero. This proves that our solution satisfies the final KKT condition.
\newline\newline
\textbf{Conclusion}
We have proposed a set of primal and dual variables that satisfy the KKT conditions. Since our problem satisfies the conditions of strong duality, we conclude that our choice of primal variables is optimal. 
\end{proof}

Now, we are ready to prove Lemma \ref{lem:boundingLInfForEstBound}.
\begin{proof} \textit{(Proof of Lemma \ref{lem:boundingLInfForEstBound}).}
We begin by nothing that it suffices to prove this lemma for $\sigma=1$, since $\sigma$ is the scale of $\gamma_*$.

Lemma \ref{lem:optimalNuForm} gives us the form of $\nu_{OPT}$, which lets us write
\begin{align*}
    \min_{\nu : \P(\mu > 0) < \frac{1}{2}\zeta_*}||F_\nu - F_{\nu_*}||_\infty
    &= ||F_{\nu_{OPT}} - F_{\nu_*}||_\infty\\
    &= \sup_t |F_{\nu_{OPT}}(t) - F_{\nu_*}(t)|
\end{align*}
The supremum over all $t$ is lower bounded by the value at some $t$. We choose $t = \tfrac{3}{2}\gamma_* - 1$, which is the first-order Taylor series approximation to $t_+$ from Lemma \ref{lem:tPlusAndMinus}. This choice lets us bound the quantity below by
\begin{align}
    \min_{\nu : \P(\mu > 0) < \frac{1}{2}\zeta_*}||F_\nu - F_{\nu_*}||_\infty 
    &\geq \left|F_{\nu'}\left(\frac{3}{2}\gamma_* - 1\right) - F_{\nu_*}\left(\frac{3}{2}\gamma_* - 1\right)\right|\notag\\
    &\geq F_{\nu'}\left(\frac{3}{2}\gamma_* - 1\right) - F_{\nu_*}\left(\frac{3}{2}\gamma_* - 1\right)\notag\\
    &= \left(1-\tfrac{1}{2}\zeta_*\right)\Phi\left(\frac{3}{2}\gamma_* - 1\right) + \tfrac{1}{2}\zeta_*\Phi\left(\frac{3}{2}\gamma_* - 1-2\gamma_*\right)\notag\\
    &\qquad - \left (1-\zeta_*\right)\Phi\left(\frac{3}{2}\gamma_* - 1\right) - \zeta_*\Phi\left(\frac{3}{2}\gamma_* - 1-\gamma_*\right)\notag\\
    &= \tfrac{1}{2}\zeta_*\Phi\left(\frac{3}{2}\gamma_* - 1\right) + \tfrac{1}{2}\zeta_*\Phi\left(-\frac{1}{2}\gamma_* - 1\right) - \zeta_*\Phi\left(\frac{1}{2}\gamma_* - 1\right)\notag\\
    &= \frac{1}{2}\zeta_*\left( \Phi\left(\frac{3}{2}\gamma_* - 1\right) + \Phi\left(-\frac{1}{2}\gamma_* - 1\right) - 2\Phi\left(\frac{1}{2}\gamma_* - 1\right) \right)\label{eqn:lowerBoundAtSmallerRoot}
\end{align}
Next, we apply a Taylor series expansion to the normal CDF $\Phi$. We will take this expansion around $-1$, since we are interested in the behavior for small $\gamma_*$. Let $\phi(x)$ be the normal PDF. We have, via a Taylor series,
\begin{align*}
    \Phi(x) &= \Phi(-1) + \phi(-1)(x+1) + \frac{1}{2}\phi(-1)(x+1)^2 + \frac{1}{6}(x+1)^3 \phi(c)(c^2-1)
\end{align*}
for some $c\in [-1, x]$. We are interested in approximating the CDF at $x=\frac{3}{2}\gamma_* - 1$, so we consider the interval $c\in[-1, -1+\tfrac{3}{2}\gamma_*]$. Since $\gamma_* < 1$, we have
\begin{align*}
    -\frac{1}{\sqrt{2\pi}} \leq \phi(c)(c^2-1) \leq 0
\end{align*}
Taylor's remainder theorem lets us compute upper and lower bounds for the CDF on this interval. The upper bound is given by
\begin{align*}
    \Phi(x) &\geq \min_{c\in[-1, -1 + \frac{3}{2}\gamma_*]}\Phi(-1) + \phi(-1)(x+1) + \frac{1}{2}\phi(-1)(x+1)^2 + \frac{1}{6}(x+1)^3 \phi(c)(c^2-1)\\
    &\geq
    \Phi(-1) + \phi(-1)(x+1) + \frac{1}{2}\phi(-1)(x+1)^2 - \frac{1}{6\sqrt{2\pi}}(x+1)^3 \\
    &=: \Phi_u(x)
\end{align*}
and the lower bound is
\begin{align*}
    \Phi(x) &\leq \max_{c\in[-1, -1 + \frac{3}{2}\gamma_*]}\Phi(-1) + \phi(-1)(x+1) + \frac{1}{2}\phi(-1)(x+1)^2 + \frac{1}{6}(x+1)^3 \phi(c)(c^2-1)\\
    &\leq \Phi(-1) + \phi(-1)(x+1) + \frac{1}{2}\phi(-1)(x+1)^2\\
    &=: \Phi_l(x)
\end{align*}
Applying these bounds to Eqn \eqref{eqn:lowerBoundAtSmallerRoot}, we have
\begin{align*}
    \min_{\nu : \P(\mu > 0) < \frac{1}{2}\zeta_*}||F_\nu - F_{\nu_*}||_\infty
    &\geq \frac{1}{2}\zeta_*\left( \Phi\left(\frac{3}{2}\gamma_* - 1\right) + \Phi\left(-\frac{1}{2}\gamma_* - 1\right) - 2\Phi\left(\frac{1}{2}\gamma_* - 1\right) \right)\\
    &\geq \frac{1}{2}\zeta_*\left( \Phi_l\left(\frac{3}{2}\gamma_* - 1\right) + \Phi_l\left(-\frac{1}{2}\gamma_* - 1\right) - 2\Phi_u\left(\frac{1}{2}\gamma_* - 1\right) \right)\\
    &= \frac{1}{2}\phi(-1)\gamma_*^2 \zeta_*-\frac{13\gamma_*^3\zeta_*}{48\sqrt{2\pi}}
\end{align*}
We can further simplify this,
\begin{align*}
    \min_{\nu : \P(\mu > 0) < \frac{1}{2}\zeta_*}||F_\nu - F_{\nu_*}||_\infty
    &\geq \gamma_*^2 \zeta_* \left(\frac{1}{2}\phi(-1) -\frac{13\gamma_*}{48\sqrt{2\pi}}\right)\\
    &\geq 0.01 \gamma_*^2\zeta_*
\end{align*}
which proves the desired result.
\end{proof}

\subsection{Estimation lower bound (Lemma \ref{lem:finiteSampleEstBound})}\label{sec:estimationLowerBoundProof}
In this section, we prove our finite sample estimation lower bound, Lemma \ref{lem:finiteSampleEstBound}. We begin by stating and proving the main technical lemma we need for this lower bound, a KL divergence calculation for two $\varepsilon$-separated hypotheses. Then, we will prove the lower bound itself, using elements of the standard reduction from estimation to hypothesis testing.

\begin{lemma}\label{lem:KLDivergenceCalculationEstimation}
Let distributions $P_0$ and $P_1$ be mixtures of standard Gaussians defined as
\begin{align*}
    P_0 &~:~ (1-\zeta_*)\mathcal{N}(0, 1) + \zeta_*\mathcal{N}(\gamma_*, 1)\\
    P_1 &~:~(1-\zeta)\mathcal{N}(0, 1) + \zeta\mathcal{N}(\gamma, 1)
\end{align*}
where the parameters for $P_0$ satisfy $\gamma_*\in (0,1)$, $\zeta_*\in (0,\tfrac{1}{2})$, and the parameters for $P_1$ are given by
\begin{align*}
    \zeta &= \zeta_* - \varepsilon\\
    \gamma &= \gamma_* \frac{\zeta_*}{\zeta}
\end{align*}
so that the free parameters are $\gamma_*$, $\zeta_*$ and $\varepsilon$. Let $\varepsilon < \tfrac{2}{3}\zeta_*$. Then the KL divergence between $P_0$ and $P_1$ is bounded above by
\begin{align*}
    KL(P_1, P_0) &\lesssim \varepsilon^2 \gamma_*^4
\end{align*}
\end{lemma}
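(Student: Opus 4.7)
The plan is to bound $\mathrm{KL}(P_1, P_0)$ by $\chi^2(P_1, P_0)$ and then expand the density difference $P_0 - P_1$ in the Hermite polynomial basis, where the two matching conditions built into the parameterization ($\zeta_* - \zeta = \varepsilon$ and $\zeta_*\gamma_* = \zeta\gamma$) kill the two lowest-order Taylor terms, leaving contributions of order $\varepsilon\gamma_*^2$ or smaller.

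Concretely, I would first observe that $\mathrm{KL}(P_1, P_0) \leq \chi^2(P_1, P_0) = \int (P_0 - P_1)^2 / P_0\,dx$, and use the pointwise bound $P_0(x) \geq (1-\zeta_*)\phi(x) \geq \tfrac{1}{2}\phi(x)$ (valid since $\zeta_* < \tfrac{1}{2}$) to reduce to bounding $\int (P_0 - P_1)^2/\phi(x)\,dx$. Next, expand each Gaussian density via the entire-function Taylor series
\[
\phi(x-h) = \sum_{k=0}^\infty \frac{(-h)^k}{k!}\phi^{(k)}(x),
\]
which gives
\[
P_0 - P_1 = -\varepsilon\phi(x) + \sum_{k\geq 0}\frac{(-1)^k}{k!}\bigl(\zeta_*\gamma_*^k - \zeta\gamma^k\bigr)\phi^{(k)}(x).
\]
The $k=0$ contribution $(\zeta_*-\zeta)\phi(x)=\varepsilon\phi(x)$ cancels the leading $-\varepsilon\phi(x)$, and the $k=1$ contribution vanishes because $\zeta_*\gamma_* = \zeta\gamma$; hence the sum effectively begins at $k=2$.

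I would then collapse the resulting weighted $L^2$-norm into a clean series using the identity $\phi^{(k)}(x) = (-1)^k H_k(x)\phi(x)$ for the probabilist's Hermite polynomials, together with the orthogonality relation $\int H_j(x)H_k(x)\phi(x)\,dx = k!\,\delta_{jk}$. This turns the integral into
\[
\int \frac{(P_0-P_1)^2}{\phi(x)}\,dx = \sum_{k\geq 2}\frac{(\zeta_*\gamma_*^k - \zeta\gamma^k)^2}{k!}.
\]
Finally I would bound each coefficient by using $\zeta\gamma = \zeta_*\gamma_*$ to rewrite $\zeta_*\gamma_*^k - \zeta\gamma^k = \zeta_*\gamma_*^k\bigl[1 - r^{k-1}\bigr]$ with $r = \zeta_*/\zeta$. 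Since $\varepsilon < \tfrac{2}{3}\zeta_*$ forces $r \leq 3$ and $r-1 = \varepsilon/\zeta$, the telescoping identity $r^{k-1} - 1 = (r-1)\sum_{j=0}^{k-2}r^j$ yields $|\zeta_*\gamma_*^k - \zeta\gamma^k| \leq \varepsilon\,\gamma_*^k\,(k-1)\,r^{k-1}$. Factoring $\varepsilon^2\gamma_*^4$ out of the $k=2$ term leaves a series whose remaining factor is $\sum_{k\geq 2}(k-1)^2 (9\gamma_*^2)^{k-2}/k!$, and since $\gamma_* < 1$ the factorial dominates the geometric growth, giving an absolute constant.

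The main obstacle is controlling the Taylor tail: a naive second-order expansion produces a remainder of size $\zeta_*\gamma_*^3$ in $|P_0 - P_1|$ whose $L^2(1/\phi)$ norm is $\zeta_*^2\gamma_*^6$, too large unless $\zeta_*\gamma_* \lesssim \varepsilon$. The Hermite expansion resolves this by extracting the cancellation due to matched zeroth and first moments in \emph{every} order simultaneously, and converts the problem into a termwise-summable series whose convergence hinges only on the a priori bounds $r \leq 3$ (from $\varepsilon < \tfrac{2}{3}\zeta_*$) and $\gamma_* < 1$.
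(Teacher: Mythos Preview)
Your proposal is correct and takes a genuinely different route from the paper's proof, though both begin identically by bounding $\mathrm{KL}(P_1,P_0)\le \chi^2(P_1,P_0)$ and using $P_0(x)\ge \tfrac{1}{2}\phi(x)$.

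The divergence comes after that step. The paper bounds the ratio $(P_1-P_0)/P_0$ pointwise by $2\Psi(t)$ with $\Psi(t)=(\zeta-\zeta_*)+\zeta_*e^{t\gamma_*-\gamma_*^2/2}-\zeta e^{t\gamma-\gamma^2/2}$, then integrates $4\Psi^2$ against the mixture $P_0$, splitting into two Gaussian expectations. Each expectation is evaluated exactly via the Gaussian moment generating function, producing closed-form expressions like $\zeta^2(e^{\gamma^2}-1)-2\zeta\zeta_*(e^{\gamma\gamma_*}-1)+\zeta_*^2(e^{\gamma_*^2}-1)$, which are then Taylor-expanded to third order; the identity $\zeta\gamma=\zeta_*\gamma_*$ is applied repeatedly to collapse the low-order terms, and the cubic remainders are shown to be dominated by the quadratic main term $(\zeta\gamma^2-\zeta_*\gamma_*^2)^2$.

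Your approach instead integrates $(P_0-P_1)^2$ against $2/\phi$ rather than $4\Psi^2$ against $P_0$; this removes the mixture from the integration measure and lets the Hermite orthogonality $\int H_jH_k\phi=k!\,\delta_{jk}$ diagonalize the problem into the exact series $\sum_{k\ge 2}(\zeta_*\gamma_*^k-\zeta\gamma^k)^2/k!$. The moment matching (the $k=0,1$ cancellations) is then visible structurally rather than emerging from algebraic cancellation, and your coefficient bound via $r^{k-1}-1=(r-1)\sum_{j\le k-2}r^j$ with $r\le 3$ handles all orders at once. The paper's argument yields somewhat more explicit constants through the MGF identities, but your Hermite expansion is shorter, more transparent about where the $\varepsilon^2\gamma_*^4$ scaling comes from, and avoids the separate treatment of the $\mathcal{N}(\gamma_*,1)$ component.
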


\begin{proof}
We begin by bounding the KL divergence by the $\chi^2$ divergence,
\begin{align*}
    KL(P_1, P_0) &\leq \chi^2(P_1, P_0)\\
    &= \int \left( \frac{dP_1 - dP_0}{dP_0} \right)^2 dP_0
\end{align*}
We proceed to bound both the numerator and the denominator of the fraction. If $\phi(t)=\frac{1}{\sqrt{2\pi}}e^{-t^2/2}$ is the standard normal PDF, then the denominator is bounded by
\begin{align*}
    dP_0 &= (1-\zeta_*)\phi(t) + \zeta_*\phi(t-\gamma_*)\\
    &\geq (1-\zeta_*)\phi(t)\\
    &\geq \frac{1}{2}\phi(t)
\end{align*}
where we have used the fact that $\zeta_* \in (0, \tfrac{1}{2})$. The numerator is bounded by
\begin{align*}
    dP_1 - dP_0 &= (\zeta - \zeta_*) \phi(t) + \zeta_* \phi(t-\gamma_*)-\zeta\phi(t-\gamma)\\
    &= \phi(t) \left( (\zeta - \zeta_*) + \zeta_* e^{t\gamma_* - \frac{1}{2}\gamma_*^2} - \zeta e^{t\gamma - \frac{1}{2}\gamma^2} \right).
\end{align*}
The factor of $\phi(t)$ is now common to both the numerator and the denominator, so they cancel. The KL divergence is now bounded by
\begin{align}
    KL(P_1, P_0) &\leq \int 4 \left( (\zeta - \zeta_*) + \zeta_* e^{t\gamma_* - \frac{1}{2}\gamma_*^2} - \zeta e^{t\gamma - \frac{1}{2}\gamma^2} \right)^2 dP_0\notag\\
    &=: \int 4\Psi(t)^2 dP_0\notag\\
    &= 4 \E_0[\Psi(X)^2]\notag\\
    &= 4 \left( (1-\zeta_*)\E_{X\sim \mathcal{N}(0, 1)}[\Psi(X)^2] + \zeta_*\E_{X\sim \mathcal{N}(\gamma_*, 1)}[\Psi(X)^2]\right)\label{eqn:klBoundExpectations}
\end{align}
The next step is to bound both of these expectations. To do this, we first expand $\Psi(X)^2$. We have
\begin{align*}
    \Psi(X)^2
    &= (\zeta - \zeta_*)^2 - 2 \zeta(\zeta-\zeta_*)e^{\gamma X - \frac{1}{2}\gamma^2}+\zeta^2 e^{2\gamma X - \gamma^2} \\
    &\qquad + 2 \zeta_* (\zeta-\zeta_*) e^{\gamma_* X - \frac{1}{2}\gamma_*^2} - 2\zeta\zeta_* e^{ \gamma X + \gamma_* X - \frac{1}{2}\gamma^2 -\frac{1}{2}\gamma_*^2 } + \zeta_*^2 e^{2\gamma_* X - \gamma_*^2}
\end{align*}
Note that the random variable $X$ only appears in the form $e^{cX}$. We will evaluate the two expectations in \eqref{eqn:klBoundExpectations} by applying linearity of expectation, and using the moment generating function for a Gaussian random variable. As a reminder, we have
\begin{align*}
    \E_{X\sim\mathcal{N}(0,1)}\left[e^{c X} \right] &= e^{\frac{1}{2}c^2}
\end{align*}
and for the shifted Gaussian,
\begin{align*}
    \E_{X\sim\mathcal{N}(\gamma_*,1)}\left[e^{c X} \right]
    &= \E_{X\sim\mathcal{N}(\gamma_*,1)}\left[e^{c (X-\gamma_*)} e^{c\gamma_*} \right]\\
    &=  e^{c\gamma_*}\E_{X\sim\mathcal{N}(\gamma_*,1)}\left[e^{c (X-\gamma_*)} \right]\\
    &=  e^{c\gamma_*}\E_{X'\sim\mathcal{N}(0,1)}\left[e^{c X'} \right]\\
    &=  e^{c\gamma_*}e^{\frac{1}{2}c^2}\\
    &= e^{c\gamma_* + \frac{1}{2}c^2}
\end{align*}
Now, we are ready to evaluate the expectations in \eqref{eqn:klBoundExpectations},
\begin{align}
    \E_{X\sim\mathcal{N}(0,1)}\left[\Psi(X)^2]
    \right]
    & = (\zeta - \zeta_*)^2 
    - 2 \zeta(\zeta-\zeta_*) 
    + \zeta^2 e^{\gamma^2}
    + 2 \zeta_* (\zeta-\zeta_*)
    - 2\zeta\zeta_* e^{\gamma\gamma_*} 
    + \zeta_*^2 e^{\gamma_*^2}\notag\\
    & = \zeta_*^2\left(e^{\gamma^2}-1\right) - 2\zeta\zeta_*\left( e^{\gamma\gamma_*}-1\right) + \zeta^2\left(e^{\gamma_*^2}-1\right)\label{eqn:centeredExpectation}
\end{align}
and
\begin{align*}
    \E_{X\sim\mathcal{N}(\gamma_*,1)}
    \left[
    \Psi(X)^2
    \right]
    &= 
    (\zeta - \zeta_*)^2 
    - 2 \zeta(\zeta-\zeta_*) e^{  \gamma\gamma_* }
    + \zeta^2 e^{\gamma^2+ 2\gamma\gamma_*}
    + 2 \zeta_* (\zeta-\zeta_*) e^{ \gamma_*^2 }
    - 2 \zeta\zeta_* e^{2\gamma\gamma_*+\gamma_*^2}
    +\zeta_*^2 e^{3\gamma_*^2}
\end{align*}
Our next goal is to upper bound both of these expectations, which will allow us to upper bound the KL divergence in \eqref{eqn:klBoundExpectations}. To bound these expectations, we will use a second order Taylor series approximation to each exponential term, with a third order remainder term. Recall the expansion of $e^x$,
\begin{align*}
    e^x &= 1 + x + \frac{1}{2}x^2 + \frac{1}{6}x^3 e^c
\end{align*}
for some $c$ between $0$ and $x$. Applying this approximation to the exponential terms in \eqref{eqn:centeredExpectation} gives
\begin{align*}
    \E_{X\sim\mathcal{N}(0,1)}\left[\Psi(X)^2]
    \right]
    & = \zeta^2\left(\gamma^2 + \frac{1}{2}\gamma^4 + \frac{1}{6}\gamma^6 e^{c_1}\right) - 2\zeta\zeta_*\left( \gamma\gamma_* + \frac{1}{2}\gamma^2\gamma_*^2 + \frac{1}{6}\gamma^3\gamma_*^3 e^{c_2}\right) + \zeta_*^2\left(\gamma_*^2 + \frac{1}{2}\gamma_*^4 + \frac{1}{6}\gamma_*^6 e^{c_3}\right)
\end{align*}
for $c_1\in[0,\gamma^2]$, $c_2\in[0,\gamma\gamma_*]$ and $c_3\in[0,\gamma_*^2]$. Given our choice of $\gamma$, we have $\zeta\gamma = \zeta_*\gamma_*$. Repeated application of this identity lets us simplify the expression above,
\begin{align*}
    \E_{X\sim\mathcal{N}(0,1)}\left[\Psi(X)^2]
    \right]
    & = \frac{1}{2}\left( \zeta\gamma^2 - \zeta_*\gamma_*^2 \right)^2 + \frac{1}{6}\zeta^2\gamma^6 e^{c_1} - \frac{1}{3}\zeta\zeta_*\gamma^3\gamma_*^3 e^{c_2} + \frac{1}{6}\zeta_*^2\gamma_*^6 e^{c_3}\\
    &\leq \frac{1}{2}\left( \zeta\gamma^2 - \zeta_*\gamma_*^2 \right)^2 + \frac{1}{6}e^{\max(c_1, c_2, c_3)}\left( \zeta\gamma^3 - \zeta_*\gamma_*^3 \right)^2\\
    &= \frac{1}{2}\left( \zeta\gamma^2 - \zeta_*\gamma_*^2 \right)^2 + \frac{1}{6}e^{\gamma^2}\left( \zeta\gamma^3 - \zeta_*\gamma_*^3 \right)^2
\end{align*}
where in the last line we have used the fact that $\gamma > \gamma_*$. Our next goal is to show that the second term, which comes from Taylor's remainder theorem, is negligible compared to the first term. To start, note that we can bound $\gamma^2$ above by an absolute constant,
\begin{align*}
    \gamma^2 
    &= \gamma_*^2 \frac{\zeta_*^2}{\zeta^2}\\
    &\leq \frac{\zeta_*^2}{\zeta^2}\\
    &\leq 9
\end{align*}
because we required $\zeta_* - \zeta < \frac{2}{3}\zeta_*$, which implies that $\tfrac{\zeta_*}{\zeta}< 3$. Next, we argue that the second term is of a smaller order than the first,
\begin{align*}
    (\zeta\gamma^3 - \zeta_*\gamma_*^3)^2
    &= (\zeta_*\gamma^2\gamma_* - \zeta_*\gamma_*^3)^2\\
    &= \zeta_*^2 \gamma_*^2\left( \gamma^2 - \gamma_*^2\right)^2\\
    &= \zeta_*^2 \gamma_*^2\left( (\gamma + \gamma_*)(\gamma - \gamma_*)\right)^2\\
    &= \zeta_*^2 \gamma_*^2\left( (\gamma_*\frac{\zeta_*}{\zeta} + \gamma_*)(\gamma - \gamma_*)\right)^2\\
    &= \zeta_*^2 \gamma_*^2\left(4\gamma_* (\gamma - \gamma_*)\right)^2\\
    &\leq 16\zeta_*^2 \gamma_*^2\left(\gamma_* (\gamma - \gamma_*)\right)^2\\
    &= 16\zeta_*^2\gamma_*^2\left( \gamma - \gamma_* \right)^2\\
    &= 16\left( \zeta_*\gamma\gamma_* - \zeta_*\gamma_*^2 \right)^2\\
    &= 16\left( \zeta\gamma^2 - \zeta_*\gamma_*^2 \right)^2
\end{align*}
We conclude that the expectation can be bounded above in order by just the first term,
\begin{align*}
    \E_{X\sim\mathcal{N}(0,1)}\left[\Psi(X)^2]
    \right]
    & \leq c \left( \zeta\gamma^2 - \zeta_*\gamma_*^2 \right)^2\\
    & = c \varepsilon^2 \gamma_*^4 \left( \frac{\zeta_*}{\zeta_* - \varepsilon} \right)^2\\
    &\leq c' \varepsilon^2 \gamma_*^4
\end{align*}
Next, we argue that the second expectation in \eqref{eqn:klBoundExpectations}, $\E_{X\sim\mathcal{N}(\gamma_*,1)}[\Psi(X)^2]$, is of the same order. Once we show this, then we can conclude that a linear combination of the two terms is also of that order. We begin, as before, with a Taylor series expansion.
\begin{align*}
    \E_{X\sim\mathcal{N}(\gamma_*,1)}
    \left[
    \Psi(X)^2
    \right]
    &= 
    (\zeta - \zeta_*)^2 
    - 2 \zeta(\zeta-\zeta_*) \left(1 + \gamma\gamma_* + \frac{1}{2}\gamma^2\gamma_*^2 + \frac{1}{6}\gamma^3\gamma_*^3 e^{c_1}\right)\\
    &\qquad\qquad
    + \zeta^2 \left( 1 + \gamma^2+ 2\gamma\gamma_* + \frac{1}{2}(\gamma^2+ 2\gamma\gamma_*)^2 + \frac{1}{6}(\gamma^2+ 2\gamma\gamma_*)^3 e^{c_2} \right)\\
    &\qquad\qquad
    + 2 \zeta_* (\zeta-\zeta_*) \left( 1 + \gamma_*^2 + \frac{1}{2}\gamma_*^4 + \frac{1}{6}\gamma_*^6 e^{c_3}\right)\\
    &\qquad\qquad
    - 2 \zeta\zeta_* \left( 1 + 2\gamma\gamma_*+\gamma_*^2 + \frac{1}{2}(2\gamma\gamma_*+\gamma_*^2)^2 + \frac{1}{6}(2\gamma\gamma_*+\gamma_*^2)^3 e^{c_4} \right)\\
    &\qquad\qquad
    +\zeta_*^2 \left( 1 + 3\gamma_*^2 + \frac{9}{2}\gamma_*^4 + \frac{27}{6}\gamma_*^6 e^{c_5}\right)
\end{align*}
The first and second terms from each Taylor series expansion cancel, as a consequence of our choice of $\gamma$. The third order terms combine, again through repeated application of the identity $\zeta\gamma = \zeta_*\gamma_*$, to give the following expression,
\begin{align*}
    \E_{X\sim\mathcal{N}(\gamma_*,1)}
    \left[
    \Psi(X)^2
    \right]
    &= 
    \frac{1}{2}\left(\zeta\gamma^2 - \zeta_*\gamma_*^2\right)^2
    -\frac{1}{3}\zeta(\zeta-\zeta_*)\gamma^3\gamma_*^3 e^{c_1} 
    + \frac{1}{6}\zeta^2(\gamma^2+2\gamma\gamma_*)^3 e^{c_2}\\
    &\qquad\qquad+ \frac{1}{3}\zeta_*(\zeta-\zeta_*)\gamma_*^6 e^{c_3}
    - \frac{1}{3}\zeta\zeta_*(2\gamma\gamma_*+\gamma_*^2)^3 e^{c_4}
    + \frac{27}{6}\zeta_*^2\gamma_*^6 e^{c_5}
\end{align*}
Just like when we bounded the earlier expectation, we note that each of these constants $c_i$ is in fact an absolute constant. Once again, this lets us bound the remainder terms by $c (\zeta\gamma^3 - \zeta_*\gamma_*^3)^2$, which we know is smaller in order than the first term above. We conclude that
\begin{align*}
    \E_{X\sim\mathcal{N}(\gamma_*,1)}
    \left[
    \Psi(X)^2
    \right]
    &\lesssim \left(\zeta\gamma^2 - \zeta_*\gamma_*^2\right)^2\\
    &= \varepsilon^2 \gamma_*^4 \left( \frac{\zeta_*}{\zeta_* - \varepsilon} \right)^2\\
    &\leq 9 \varepsilon^2 \gamma_*^4
\end{align*}
Finally, we substitute these bounds on the expectations into our bound for the KL divergence, \eqref{eqn:klBoundExpectations},
\begin{align*}
    KL(P_1, P_0) 
    &\leq 4 \left( (1-\zeta_*)\E_{X\sim \mathcal{N}(0, 1)}[\Psi(X)^2] + \zeta_*\E_{X\sim \mathcal{N}(\gamma_*, 1)}[\Psi(X)^2]\right)\\
    &\lesssim (1-\zeta_*) \varepsilon^2 \gamma_*^4  + \zeta_* \varepsilon^2 \gamma_*^4 \\
    &\lesssim \varepsilon^2 \gamma_*^4 
\end{align*}
which completes the proof.

\end{proof}

Now, we state and prove the lower bound, Lemma \ref{lem:finiteSampleEstBound}.

\begin{proof} \textit{Proof of Lemma \ref{lem:finiteSampleEstBound}.} 
We begin by noting that it suffices to prove the lemma for $\sigma=1$, since $\sigma$ is the scale of the variable $\gamma_*$. 

We will prove a minimax lower bound on the number of samples taken by any estimator that estimates $\zeta$ within accuracy $\varepsilon$ over the set $A_\varepsilon$, with constant probability. We will use a portion of the standard reduction from estimation to hypothesis testing, as can be found in \cite{Tsybakov}. Specifically, we will prove the statement
\begin{align*}
    \inf_{\widehat{\zeta}_n}\sup_{(\zeta, \gamma)\in A_\varepsilon} \P\left( \left| \widehat{\zeta}_n(X) - \zeta \right| \geq \varepsilon \right) &\gtrsim e^{-n\varepsilon^2 \gamma_*^4 }
\end{align*}
Applying the argument found in Section 2.2 of Tsybakov \yrcite{Tsybakov}, along with Theorem 2.2 of the same, we have the bound
\begin{align*}
     \inf_{\widehat{\zeta}_n}\sup_{(\zeta, \gamma)\in A_\varepsilon} \P\left( \left| \widehat{\zeta}_n(X) - \zeta \right| \geq \varepsilon \right) 
    &\geq \frac{1}{2} e^{-n KL(P_1, P_0)}
\end{align*}
where $P_0$ and $P_1$ are any two parameterizations in $A_\varepsilon$. Choose parameterizations
\begin{align*}
    \zeta_0 &\in (\zeta_* - 2\varepsilon, \zeta_* + 2\varepsilon)\\
    \gamma_0 &\in \left(\tfrac{1}{3}\gamma_*, \tfrac{3}{2}\gamma_*\right)\\ 
    P_0 &= P(\zeta_0, \gamma_0)\\
    P_1 &= P\left(\zeta_0 - \varepsilon, \gamma_0 \frac{\zeta_0}{\zeta_0 - \varepsilon}\right)
\end{align*}
We note that both $P_0$ and $P_1$ are in $A_\varepsilon$, due to the constraint $\varepsilon\in(0, \tfrac{2}{3}\zeta_*)$. Furthermore, by Lemma \ref{lem:KLDivergenceCalculationEstimation}, we have a bound on the KL divergence between $P_0$ and $P_1$. We substitute this into our minimax bound,
\begin{align*}
     \inf_{\widehat{\zeta}_n}\sup_{(\zeta, \gamma)\in A_\varepsilon} \P\left( \left| \widehat{\zeta}_n(X) - \zeta \right| \geq \varepsilon \right) 
    &\gtrsim \varepsilon e^{-n \varepsilon^2\gamma_0^4}
\end{align*}
This bound holds for any choice of $(\zeta_0, \gamma_0)$ in the ranges described above. But note that, for every $(\zeta_0, \gamma_0)$ in this range, the bound is of the same order, specifically
\begin{align*}
     \inf_{\widehat{\zeta}_n}\sup_{(\zeta, \gamma)\in A_\varepsilon} \P\left( \left| \widehat{\zeta}_n(X) - \zeta \right| \geq \varepsilon \right) 
    &\gtrsim \varepsilon e^{-n \varepsilon^2\gamma_*^4}.
\end{align*}
We conclude that any estimator claiming, with constant probability, to estimate $\zeta$ with accuracy better than $\varepsilon$ over $A_\varepsilon$, and in particular on the instance $(\zeta_*, \gamma_*)$, must take at least
\begin{align*}
    n\gtrsim \frac{1}{\varepsilon^2\gamma_*^4}
\end{align*}
samples.
\end{proof}

\section{Experimental Details and Algorithm Implementation}\label{app:implementation}
\subsection{Implementation}
We implemented our estimator in Python. Instead of directly optimizing Eqn \eqref{eqn:estimator}, which we found lacked robustness, we determined the value of $\widehat{\zeta}_n(\gamma)$ via binary search on the unit interval. The algorithm is shown below. At each stage of binary search, the algorithm performs a hypothesis test to decide whether there is $\zeta$ mass above $\gamma$. The hypothesis test is identical to the constraint in Eqn \eqref{eqn:estimator} and the set $S(\zeta, \gamma)\subseteq S(\zeta', \gamma)$ for $\zeta \leq \zeta'$, so this method yields the same results as direct optimization. The optimization was solved using CVXPY \cite{cvxpy, cvxpy_rewriting}, with the ECOS solver and default parameters.

\begin{algorithm}[tb]
\caption{Binary search to return $\widehat{\zeta}_n$}
\label{alg:binarySearch}
\begin{algorithmic}
\STATE {\bfseries Input:} Confidence level $\alpha$, $n$ samples $\{X_i\}_{i=1}^n$, and threshold $\gamma$
\STATE {\bfseries Result:} $\widehat{\zeta}_n$, a lower bound estimate on the number of discoveries above threshold $\gamma$
\STATE Initialize $i_{min} = 0$, $i_{max} = n$, $\tau_{\alpha, n} = \sqrt{\frac{\log(2/\alpha)}{2n}}$
\WHILE{$i_{max} - i_{min} > 1$}
    \STATE $i_{avg} = \lfloor \tfrac{i_{min} + i_{max}}{2}\rfloor$
    \STATE $\zeta = i_{avg}/n$
    \STATE Compute test statistic $T(X; \zeta, \gamma) = \min_{\nu\in S(\zeta, \gamma)} ||\widehat{F}_n - F_{\nu}||_\infty$
    \IF{$T(X; \zeta, \gamma) > \tau_{\alpha, n}$}
        \STATE \emph{// Reject the null hypothesis; conclude there is at least $\zeta$ mass above $\gamma$}
        \STATE $i_{min} = i_{avg}$
    \ELSE
        \STATE $i_{max} = i_{avg}$
    \ENDIF
\ENDWHILE
\STATE $\widehat{\zeta}_n = i_{min} / n$
\STATE {\bfseries return} $\widehat{\zeta}_n$
\end{algorithmic}
\end{algorithm}

\subsection{Code availability and computing infrastructure}
Code implementing our estimator in Python is available at \url{https://github.com/jenniferbrennan/CountingDiscoveries/}. We also provide the data from Hao et. al \yrcite{hao2008drosophila} as a tab-delimited file, to facilitate experiments on their data. Please see the associated README file for an explanation of the data, and an example of loading the data into Python. Experiments were run on an Ubuntu server with 56 cores and 64 GB of RAM.

\subsection{Experimental details for Poisson and Binomial experiments}
 In the Poisson experiment, we took $\mu = 1$ as the null hypothesis and drew $n=100,000$ examples with mean parameters $\lambda_i \sim 0.8\delta_1 + 0.2(\beta(a=2, b=5)*5 + 2)$ (i.e., the alternates means were from a scaled and shifted Beta distribution). In the binomial experiment, we took $n=100,000$ examples with means drawn from $0.9\delta_{0.5} + 0.1(\beta(a=2, b=5)*0.5+0.5)$ and generated test statistics with $t=20$ trials per binomial random variable. Given that $\P(X_i = 20) = 9\cdot 10^{-7}$, while the Bonferroni-adjusted critical value for a test at the 0.05 level is $5\cdot 10^{-7}$, none of the alternate hypotheses in the binomial could be rejected under a FWER guarantee.

\section{Additional Figures Comparing our Estimator to Baselines}\label{app:baselines}
Figure \ref{fig:baselines_no-plugin} compares our estimator to three other estimators for this problem (including only baselines which are guaranteed not to overestimate). The observations are drawn $X_i \sim \mathcal{N}(\mu_i, 1)$, with $\mu_i\sim 0.9\delta_0 + 0.1\delta_{\gamma_*}$. We plot the performance of each estimator as a function of the alternate mean $\gamma_*$, for three values of $n$ (the number of $X_i$ drawn), and two different thresholds. In the settings tested, our estimator gets closest to the true $\zeta_*$ while never overestimating it. Furthermore, our estimator improves as $n$ increases, while the other estimators do not.

\begin{figure}
    \centering
    \includegraphics[width=\textwidth]{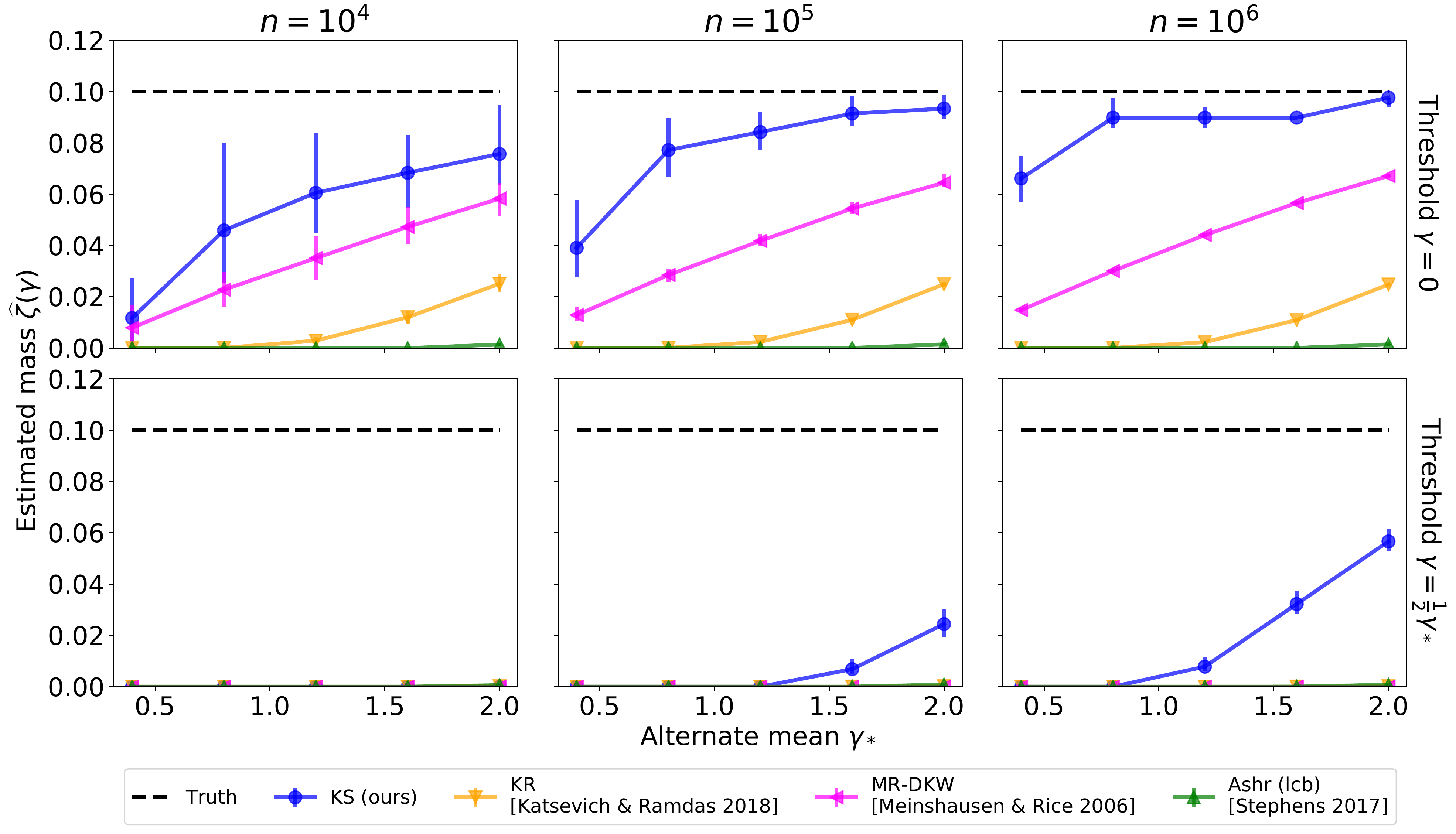}
    \caption{Our estimator compares favorably against the three other methods that satisfy our constraint ($\widehat{\zeta}(\gamma) \leq \zeta_{\nu_*}(\gamma)$ with high probability).}
    \label{fig:baselines_no-plugin}
\end{figure}

\end{document}